\documentclass{article}
\usepackage[final,main]{neurips_2026}
\usepackage[utf8]{inputenc}
\usepackage[T1]{fontenc}
\usepackage{float}
\usepackage{booktabs}
\usepackage{longtable}
\usepackage{graphicx}
\usepackage{hyperref}
\hypersetup{colorlinks=true, linkcolor=black, citecolor=black, urlcolor=black}
\usepackage{url}
\DeclareUrlCommand\bpath{\urlstyle{tt}}
\usepackage{amsfonts}
\usepackage{amsmath}
\usepackage{amssymb}
\usepackage{amsthm}
\usepackage{enumitem}
\usepackage{nicefrac}
\usepackage{microtype}
\usepackage{xcolor}
\usepackage{algorithm}
\usepackage{algorithmic}
\usepackage{listings}
\lstdefinestyle{pypseudo}{
  language=Python,
  basicstyle=\ttfamily\footnotesize,
  keywordstyle=\bfseries,
  commentstyle=\itshape\color{gray!70!black},
  stringstyle=\color{black},
  showstringspaces=false,
  columns=fullflexible,
  keepspaces=true,
  breaklines=true,
  breakatwhitespace=true,
  xleftmargin=1em,
  morekeywords={with,as},
}

\newtheorem{proposition}{Proposition}
\newtheorem{definition}{Definition}

\newcommand{\textpkg}{\texttt{ssdiff}}
\newcommand{\inferpkg}{\texttt{plskit}}
\newcommand{\repoanchor}{the replication record~\citep{lenartowicz2026replication}}

\title{Cheap and Powerful Tests for Supervised Subspaces: Per-Component Inference for PLS}

\author{%
  Pawe{\l} Lenartowicz\thanks{ORCID: 0000-0002-6906-7217. \texttt{pawellenartowicz@europe.com}} \\
  Society for Open Science (Stowarzyszenie na Rzecz Otwartej Nauki) \\
  Centre for Brain Research, Jagiellonian University \\
  \And
  Hubert Plisiecki\thanks{ORCID: 0000-0002-5273-1716} \\
  Society for Open Science (Stowarzyszenie na Rzecz Otwartej Nauki) \\
  IDEAS Research Institute \\
}

\begin{document}

\maketitle

\begin{abstract}
Partial Least Squares (PLS) regression extracts a few outcome-aligned
directions in a high-dimensional $\mathbf{X}$ and is widely used across
applied science, but inference on the resulting fit is either
expensive, biased and discouraged, or absent. We reduce inference to
held-out OLS refits of the supervised subspace, a primitive shared by
PLS, supervised PCA, and linear probes, and supply two tests using
held-out correlations: a Nadeau--Bengio corrected asymptotic $t$-test
as a fast approximation, and a permutation test with comparable power,
finite-sample valid under outcome--predictor independence and iid rows.
Held-out predictions are
unchanged under any orthogonal rebasing of the supervised span, so an
interpretable basis such as varimax inherits the joint claim but not a
per-axis $p$-value; per-component claims come from a fixed-sequence
test on the PLS extraction order. We validate on synthetic geometries,
two NIR chemometric datasets, and cross-lingual word-embedding
regressions; the exact test also transfers to supervised PCA and a
ridge probe. The proposed tests have more power than
CV-permutation-$Q^{2}$, at a fraction of its cost. A pre-run check on
$n$ and the spectrum of $\mathbf{X}$ says when the approximation is
safe. We release a Rust library with Python, R, and Julia bindings,
plus a Python text pipeline.
\end{abstract}

\section{Introduction}
\label{sec:intro}

\subsection{Setting}
\label{sec:intro:opening}

Across applied science and machine learning, analysts run supervised
low-rank regression to extract a few $\mathbf{y}$-aligned directions
in $\mathbf{X}$ and ask the same omnibus question: ``does this
$K$-dimensional supervised subspace predict $\mathbf{y}$ better than
chance?'' The question is shared across PLS~\citep{wold2001pls},
supervised PCA~\citep{barshan2011spca}, sparse linear discriminant
analysis~\citep{clemmensen2011sparselda}, linear
probes~\citep{belinkov2022probing}, and sparse-autoencoder concept
directions~\citep{cunningham2024dictionary,templeton2024scaling,hindupur2025projecting},
and any test that operates on held-out OLS-refit predictions of the
supervised subspace applies in principle. None of the existing tests
answer it in a way that is both well-calibrated and cheap.

\subsection{The gap}
\label{sec:intro:gap}

\textbf{General --- omnibus inference.}
Existing PLS inference for the joint $K$-dimensional claim takes
three forms, all with known weaknesses: jackknife
$t$~\citep{mevik2007pls,martens2000jackknife} is flagged as biased by
its own maintainers; cross-validated
permutation-$Q^{2}$~\citep{westerhuis2008assessment} is calibrated but
expensive (\S\ref{sec:exp:powercalib}, App.~\ref{app:nontext}); and
\texttt{sklearn.PLSRegression}~\citep{sklearn_plsregression} ships no
test at all, the corresponding \texttt{statsmodels} feature
request~\citep{statsmodels_pls_issue} has been open since 2019, and
PLS-SEM has settled on bootstrap CIs~\citep{ray2022seminr}. The closest
route to inference is the degrees-of-freedom estimate of
\citet{kramer2011dof}, shipped as \texttt{plsdof}~\citep{plsdof_pkg}:
it supplies a DoF count for model selection and confidence intervals,
its authors do not propose it as a significance test or validate its
level, so benchmarking it as one means building the test for them and
reporting it failing at our choices (we ran that construction anyway;
App.~\ref{app:rule-grid}), and its exact DoF pass is
$O(n^{3})$~\citep{kramer2009lanczos}. No existing tool delivers
calibration and tractable cost together.

\textbf{NLP-specific --- per-axis lift.}
When features are individually human-readable (words, brain
regions, gene labels), analysts also want a calibrated test on
\emph{each} named axis. But PLS components are identified only
up to orthogonal rotation~\citep{kaiser1958varimax}, so a per-axis
claim has to say what survives rebasing, and no existing PLS
inference tool does.

\subsection{Contribution}
\label{sec:intro:contribution}

\paragraph{When the cheap test is valid, and what to run otherwise.}
We give a Nadeau--Bengio corrected resampled
$t$~\citep{nadeau2003inference} on held-out OLS-refit predictions of
the supervised subspace (\S\ref{sec:methods:test}). Its reference is
asymptotic, and we state the condition under which it holds:
$n\!\geq\!25$ and stable rank of the standardized $\mathbf{X}$ at
least 3 (\S\ref{sec:methods:validity}), a pre-run check that flags
every level failure across 25 null cells at 5{,}000 replicates and
sends flagged designs to an exact NB-permutation test that we specify
and price (\S\ref{sec:methods:exactperm}): with the splits held fixed
it refits nothing, and for $p\!\geq\!20{,}000$ it costs about the same
as the asymptotic test ($1.15\times$ at $p\!=\!20{,}000$, $n\!=\!200$,
less as $p$ grows).

\paragraph{More power at lower cost.}
NB is substantially cheaper than CV-permutation-$Q^{2}$ at matched or
better power (\S\ref{sec:exp:powercalib}).

\paragraph{Rotation invariance: what a named axis inherits.}
Prop.~\ref{prop:rotation-invariance} shows held-out OLS-refit
predictions are invariant under any orthogonal rebasing of the
supervised span, varimax included, so analysts can rotate to a
word-readable basis without disturbing inference; its negative half is
that a rotated axis inherits the joint claim and nothing else
(\S\ref{sec:methods:rotation}).

\paragraph{Per-component claims.}
NIPALS deflation and a fixed-sequence stop give a per-component test.
For a single-split variant with train-only deflation we prove strong
FWER control (App.~\ref{app:fwer-proof}): with probability at least
$1-\alpha$, every component the sequence keeps adds signal, not noise,
beyond the components fitted before it. The repeated-split NB test
is assessed empirically (\S\ref{sec:exp:fwer}). The
sequence answers a question selection does not: on Warriner valence
the split-half CV-$R^{2}$ curve is flat past $K\!=\!2$
($0.601 \to 0.619 \to 0.625$), so parsimony selectors stop there,
while NB rejects for component 3 at
$p\!\approx\!10^{-24}$ and component 4 at $p\!\approx\!10^{-11}$, and
the proved single-split variant, run once on a pre-specified split,
keeps all five components it tests.

\paragraph{What transfers.}
The exact test transfers to supervised PCA and a ridge probe; the
asymptotic NB correction is PLS1-scoped (\S\ref{sec:exp:beyond}).

\paragraph{Code and data.}
The \inferpkg{} (Rust core, Python/R/Julia bindings) and \textpkg{}
(Python text pipeline) libraries are on PyPI. End-to-end replication
scripts for every numbered table and figure, and the underlying result
CSVs/JSON/plots at paper settings, are in \repoanchor.

\section{Related work}
\label{sec:related}

\textbf{PLS inference.} Table~\ref{tab:inference-comparison} compares
the inferential targets, cost, calibration, and rotation treatment of
the principal alternatives. The corrected resampled $t$
\citep{nadeau2003inference,dietterich1998,bouckaert2004} underlies our
NB approximation. Related selection procedures use split-half RMSE
\citep{kvalheim2018} or multi-aspect permutation tests
\citep{barzizza2023}; selecting a component count does not establish
significance of each fitted component. We instead use Fisher-transformed
held-out correlations, distinguish the single-split guarantee from
the repeated-split approximation, and preserve joint predictions
under rebasing (Prop.~\ref{prop:rotation-invariance}). We benchmark
against \texttt{pls::jack.test} and permutation-$Q^{2}$ in
\S\ref{sec:exp:powercalib}.

\begin{table}[t]
\centering
\caption{Inference procedures reviewed in \S\ref{sec:intro:gap}. Costs describe the
tested fit or component: $F$ CV folds, $J$ repeated splits, $B$
permutations. Rotation entries concern rebasing a fixed fitted span;
none supplies significance for an individual varimax axis.}
\label{tab:inference-comparison}
\footnotesize
\setlength{\tabcolsep}{3pt}
\begin{tabular}{@{}p{.16\linewidth} p{.21\linewidth} p{.15\linewidth} p{.23\linewidth} p{.17\linewidth}@{}}
\toprule
\textbf{Method} & \textbf{Target} & \textbf{Cost} & \textbf{Calibration} & \textbf{Rotation} \\
\midrule
Jackknife $t$ & Regression coefficients & $F$ CV fits & Approximate; biased variance estimates & Tests coefficients, not latent axes \\
CV-perm.-$Q^2$ & Outcome--predictor independence & $(B+1)F$ fits & Exact under null permutation invariance & Joint predictions unchanged \\
\texttt{plsdof} & Model complexity; selection and CIs & $O(n^3)$ exact DoF pass & No supplied omnibus significance test & Fitted-model target \\
NB (ours) & Held-out predictive association & $J$ fits & Empirical; no proved component-chain FWER & Joint OLS refit invariant \\
Permutation (ours) & Outcome--predictor independence & $J$ cached maps at $K=1$; $B$ draws & Exact under iid rows and independence & Joint OLS refit invariant \\
Single split (ours) & Train-conditional residual association or independence & One training chain; held-out tests & Strong FWER: Gaussian correlation or permutation independence nulls & Claims follow training order \\
\bottomrule
\end{tabular}
\end{table}

\textbf{Supervised projections without calibrated per-axis
tests.} Concept-level interpretability of deep networks (post-hoc
probes \citep{kim2018tcav,belinkov2022probing}, learned feature
decompositions
\citep{koh2020concept,cunningham2024dictionary,templeton2024scaling,hindupur2025projecting})
locates structure inside an existing network rather than fitting a
regression in the embedding space; classical supervised DR (sLDA
\citep{blei2008slda}, supervised PCA \citep{barshan2011spca},
envelope models \citep{cook2010envelope}, reduced-rank regression
\citep{izenman1975rrr}) yields low-rank fits with likelihood-based
joint inference rather than per-axis; PLS itself is well-characterized
as a Krylov supervised-shrinkage method
\citep{helland1988pls,phatak2002krylov,hestenes1952cg,lingjaerde2000shrinkage}.
None packages a cheap, rotation-invariant, per-axis null; TCAV's
random-concept comparison is the closest analogue but targets a
different object.

\textbf{What the test certifies.}
NB calibration is about the predictive content of the held-out
supervised subspace, not the concept-causal identity of any direction
recovered after rotation: a calibrated rejection provides evidence
that the supervised reduction predicts $\mathbf{y}$ better than chance.
FWER control across $K^{*}$ axes additionally requires validity at
each true null; the single-split guarantee and the empirical scope of
NB are distinguished in \S\ref{sec:methods:test} and
\S\ref{sec:exp:fwer}. Concept-level claims
layered on top, such as register-consistency of a varimax axis, are
separate falsifiable hypotheses; calibrated tests can pass while a
probe relies on non-concept features~\citep{kumar2022probing}, so we
report calibration and interpretive readings separately and test the
latter against sharp counter-readings.

\section{Methods}
\label{sec:methods}

\noindent
We distinguish the signal dimension $\widetilde r$ and planted direction
count $r_{\mathrm{plant}}$ from PLS component counts $K$. Two recurring
$K$-symbols: $K_{\mathrm{fit}}$ is the number of PLS components extracted
at fit time, and $K^{*}$ is the data-driven count chosen by a predictive
selector (1-SE on CV-$R^{2}$) or BIC. Full glossary in App.~\ref{app:notation}.

\subsection{Setup: documents, axes, and the baseline pipeline}
\label{sec:methods:representations}
\label{sec:methods:baseline}

Texts are mean-pooled, $\ell_2$-normalized static word embeddings,
$\mathbf{X}\!\in\!\mathbb{R}^{n\times d}$, with
$\mathbf{y}\!\in\!\mathbb{R}^{n}$ the numeric outcome, both mean
centered, and any axis $\mathbf{a}\!\in\!\mathbb{R}^{d}$ is read from
its word loadings $\mathbf{e}_{w}^{\!\top}\mathbf{a}/\|\mathbf{a}\|$.
The 1-D reference point is the OLS-on-embeddings supervised projection
of \citet{plisiecki2025ssd} (OLS on the leading $K$ principal
components of $\mathbf{X}$, $K$ chosen by out-of-sample $R^{2}$;
pooling recipe and baseline in App.~\ref{app:setup}), whose PCA basis
\S\ref{sec:methods:pls} replaces with PLS.

\subsection{PLS as a supervised direction finder}
\label{sec:methods:pls}

\subsubsection{One-component PLS with an OLS score refit}
\label{sec:methods:pls1}

PLS1 (NIPALS;~\citealt{wold2001pls,hoskuldsson1988pls}) upgrades the
PCA+OLS baseline of \S\ref{sec:methods:baseline} to a basis whose
leading direction maximizes sample covariance with $\mathbf{y}$:
\begin{equation}
    \mathbf{w}
    \;=\;\arg\max_{\|\mathbf{w}\|=1}\,
        \operatorname{Cov}\!\bigl(\mathbf{X}\mathbf{w},\,\mathbf{y}\bigr)
    \;\propto\; \mathbf{X}^{\!\top}\mathbf{y},
    \label{eq:pls1-weight}
\end{equation}
with score $\mathbf{t}\!=\!\mathbf{X}\mathbf{w}$. At $K\!=\!1$,
regressing $\mathbf{y}$ on $\mathbf{t}$ gives
$\hat{\boldsymbol\beta}_{\mathrm{PLS},1}\!=\!q\,\mathbf{w}$,
where $q\!=\!\mathbf{t}^{\!\top}\mathbf{y}/\mathbf{t}^{\!\top}\mathbf{t}$.
This is OLS on the PLS score; in general it differs from unrestricted
OLS and from OLS on the leading principal component. Each test fixes
$K\!=\!1$, using outcome covariance to choose its direction.

\subsubsection{\texorpdfstring{$K\!>\!1$}{K>1}: Krylov saturation and calibration axes}
\label{sec:methods:plsK}

For $K\!\geq\!1$, PLS extracts an ordered sequence
$\mathbf{w}_1,\dots,\mathbf{w}_K$ by iterating
Eq.~\eqref{eq:pls1-weight} with NIPALS deflation (full recurrence in
App.~\ref{app:nipals}). The $K$-dim PLS span is the Krylov subspace
generated by $\mathbf{X}^{\!\top}\mathbf{X}$ and
$\mathbf{X}^{\!\top}\mathbf{y}$~\citep{helland1988pls,phatak2002krylov}:
\begin{equation}
    \operatorname{span}(\mathbf{w}_1,\dots,\mathbf{w}_K)
    \;=\;\mathcal{K}_K(\mathbf{X}^{\!\top}\mathbf{X},\mathbf{X}^{\!\top}\mathbf{y}).
    \label{eq:krylov-span}
\end{equation}
This is a span statement, hence preserved under any \emph{invertible}
rebasing $\mathbf{R}\!\in\!GL(K)$.
For a scalar linear response, the conditional mean depends on one
linear index, but correlated predictors can require several PLS
components to recover its coefficient vector. We call convergence to
$\hat{\boldsymbol\beta}_{\mathrm{OLS}}$ \emph{Krylov saturation}:
its depth depends on the predictor spectrum and its alignment with
the outcome covariance, not on the dimension of that linear index.
Later components can improve prediction without identifying separate
latent causes (Def.~\ref{def:calibration-axis};
App.~\ref{app:proof}; demonstrated in \S\ref{sec:exp:synth}).

\subsection{Inference: NB test, FWER, and rotation invariance}
\label{sec:methods:test}

\label{sec:methods:test:tnb}

Per-axis $p_{k}$ run on the \emph{unrotated} NIPALS components and
feed a fixed-sequence FWER stop; rotation enters after inference
as a basis change for verbal interpretation
(\S\ref{sec:methods:rotation}). The joint summary
$\alpha^{*}_{\mathrm{FWER}}$ is defined and shown to be
rotation-invariant in Prop.~\ref{prop:rotation-invariance} below.
Figure~\ref{fig:inference-pipeline} schematizes
the three stages.

\begin{figure}[t]
  \centering
  \includegraphics[width=0.85\textwidth]{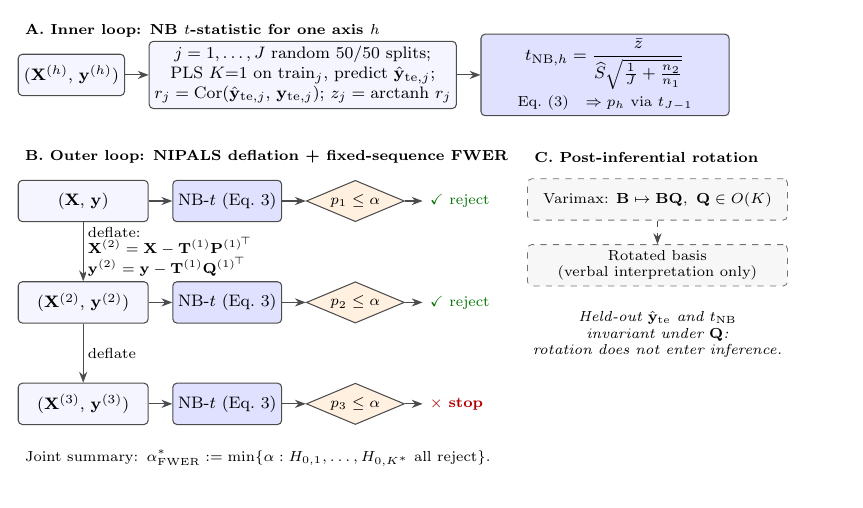}
  \caption{Inference pipeline for \S\ref{sec:methods:test}.
  \textbf{(A)} Per-axis NB $t$-statistic ($t_{\mathrm{NB},h}$,
  Eq.~\eqref{eq:nb}): $J$ random 50/50 splits feed Fisher-$z$
  aggregation with the corrected resampled variance.
  \textbf{(B)} Fixed-sequence FWER over NIPALS-deflated pairs
  $(\mathbf{X}^{(h)}, \mathbf{y}^{(h)})$, $h = 1, \dots, K^{*}$,
  stopping at the first non-rejection; the joint summary
  $\alpha^{*}_{\mathrm{FWER}}$ is the minimum level at which all
  preceding nulls reject.
  \textbf{(C)} Varimax acts on loadings only after inference;
  held-out predictions and $t_{\mathrm{NB}}$ are invariant under any
  $\mathbf{Q} \in O(K)$.}
  \label{fig:inference-pipeline}
\end{figure}

\subsubsection{The corrected resampled \texorpdfstring{$t$}{t}-statistic}
We adapt the corrected resampled $t$-statistic
of~\citet{nadeau2003inference} to a single supervised direction. Let
$J$ be the number of random partitions and, for split $j$, let
$(\mathcal{I}_{j}^{\mathrm{tr}},\mathcal{I}_{j}^{\mathrm{te}})$ be a
random 50/50 partition of $\{1,\dots,n\}$ with sizes $n_{1}$ and
$n_{2}$. On each split we refit PLS at $K\!=\!1$ on
$\mathcal{I}_{j}^{\mathrm{tr}}$, predict on the held-out half via the
rank-1 supervised direction, and record the Pearson correlation
$r_{j}=\operatorname{Cor}\!\bigl(\hat{\mathbf{y}}^{\,\mathrm{te}}_{j},\,\mathbf{y}^{\,\mathrm{te}}_{j}\bigr)$.
After Fisher transformation $z_{j}=\operatorname{arctanh}(r_{j})$, we
form
\begin{equation}
    t_{\mathrm{NB}}
    \;=\;\frac{\bar z}
              {\widehat{S}\,\sqrt{\,\dfrac{1}{J}+\dfrac{n_{2}}{n_{1}}\,}},
    \qquad
    \text{reference under }H_0:\quad t_{J-1},
    \label{eq:nb}
\end{equation}
where $\bar z$ and $\widehat{S}^{2}$ are the sample mean and variance
of $\{z_{j}\}_{j=1}^{J}$. The $(1/J + n_{2}/n_{1})$ correction
approximates the dependence created by overlapping training folds;
the $t$ reference is assessed in \S\ref{sec:methods:validity}.

\subsubsection{Per-axis tests via NIPALS deflation}
For axis $h\!=\!2,\dots,K$, deflate $(\mathbf{X},\mathbf{y})$ by the
leading $h\!-\!1$ full-data NIPALS components (full recurrence in
App.~\ref{app:nipals}) and apply Eq.~\eqref{eq:nb} at $K\!=\!1$ to the
deflated pair $(\mathbf{X}^{(h)},\mathbf{y}^{(h)})$; the omnibus claim
is the $h\!=\!1$ case on the full data.

\subsubsection{Per-axis claims and FWER control}
We test $H_{0,1},H_{0,2},\dots$ at level $\alpha$ in NIPALS extraction
order (step $k$ extracts the $\mathbf{w}_{k}$ maximizing
$|\!\operatorname{cov}(\mathbf{t}_{k},\mathbf{y}_{k-1})|$ on the
$(k\!-\!1)$-fold deflated pair), stopping at the first non-rejection;
the extraction order is not changed in response to test outcomes.
Under the fixed-sequence sequential rejection
principle~\citep{goeman2010sequential,westfall2001optimally}, strong
FWER control at level $\alpha$ holds provided each per-step NB test is
level-$\alpha$ valid for each true null in the ordered family,
regardless of the truth of earlier nulls (the \emph{boundary
property}). Any false rejection requires
rejection of the first true null, so no independence between tests is
needed. Step 1 also supplies the omnibus test of $y\perp\mathbf x$;
no multiplicity correction is needed for that test alone.

\emph{Status.} For repeated-split NB, the boundary property is
assessed empirically, not formally proved (\S\ref{sec:exp:fwer}).
App.~\ref{app:fwer-proof} gives a finite-sample guarantee for a
single independent split with train-only deflation and Gaussian
correlation tests. There, rejecting step $h$ means component $h$ is
positively correlated with what components $1,\dots,h\!-\!1$ leave
unexplained, so adding it with a small enough positive weight lowers
population squared error; with probability at least $1-\alpha$, no
kept component is noise in this sense. The components are those
fitted on the training half, and the gain is relative to the earlier
fitted components, not a new $\mathbf{y}$-direction
(\S\ref{sec:methods:plsK}). That proof does not establish calibration
of the $J$-split NB aggregate, with either train-only or full-data
deflation, so for NB this reading is supported
empirically, not proved.

For the joint summary, with $K^{*}$ fixed by 1-SE on CV-$R^{2}$ (or
BIC),
\begin{equation*}
  \alpha^{*}_{\mathrm{FWER}}
  \;:=\;
  \min\bigl\{\alpha : H_{0,1},\dots,H_{0,K^{*}}\text{ all reject}\bigr\}
  \;=\;\max_{h\le K^{*}}p_h.
\end{equation*}
This is the smallest nominal level at which the entire specified
prefix rejects. Truncating an otherwise unchanged valid sequence at
$K^{*}$ cannot increase its FWER; it does not establish the missing
per-step validity of NB.

\subsubsection{When the asymptotic test is valid}
\label{sec:methods:validity}
Eq.~\eqref{eq:nb} refers $t_{\mathrm{NB}}$ to a $t_{J-1}$ reference
that is asymptotic in two places: the normality of each Fisher-$z$ and
the plug-in $\rho_{0}\!=\!n_{2}/(n_{1}+n_{2})$ for the correlation between
overlapping splits. \textbf{Rule.} Use NB-asymptotic when
$n\!\geq\!25$ and the stable rank
$\|\mathbf{X}\|_F^{2}/\|\mathbf{X}\|_2^{2}$ of the column-standardized
$\mathbf{X}$ is at least 3; otherwise use the exact test of
\S\ref{sec:methods:exactperm}. Stable rank costs one SVD and is the
reciprocal of the first principal component's variance share. On 25
null cells at 5{,}000 replicates ($n$ from 10 to 320;
\S\ref{sec:exp:calibration}, App.~\ref{app:rule-grid}) the rule flags
every cell whose $\alpha\!=\!.05$ or $.10$ level moves, and every
cleared cell holds level or is conservative. The mechanism is the plug-in:
\citet{nadeau2003inference} insert a design-independent $\rho_{0}$
into the variance of the resampled $t$, exact only when the $J$ split
estimates are $J$ replicates at that correlation. On a one-direction
$\mathbf{X}$, PLS1 at $K\!=\!1$ returns only $\pm\mathbf{v}$, with
the sign set by the training half, so the $J$ estimates re-cut one
scalar, $\mathrm{cor}(\mathbf{u}_{1},\mathbf{y})$: their correlation
drops below $\rho_{0}$, and the null of $t_{\mathrm{NB}}$ is narrower
than $t_{J-1}$ in the centre and heavier in the tail, conservative at
$\alpha\!=\!.05,.10$ and inflated at $.01$ (App.~\ref{app:rule-grid}). Two qualifiers: the threshold was set empirically, on a separate
decaying-spectrum sweep, and errs toward flagging; and the $n\!<\!25$
gate covers a different failure, the small-sample Fisher-$z$
asymptotic, which breaks at $n\!=\!10$ even on an iid design with
stable rank 7. The useful range of NB-asymptotic is $\alpha\!=\!.05$
and $.10$ (\S\ref{sec:disc:limitations}).

\subsubsection{Exact NB-permutation}
\label{sec:methods:exactperm}
The fallback uses the same split-level correlations, but permutes
their mean Fisher transform $\bar z$ rather than the studentized
$t_{\mathrm{NB}}$ of Eq.~\eqref{eq:nb}.
(1)~Draw the $J$ split-halves independently of the data and hold them fixed. (2)~On each
split fit PLS at $K\!=\!1$ on the training half, refit OLS, Fisher-transform
the held-out correlation, and average over splits to obtain $\bar z$.
(3)~For each of $B$ independent uniform permutation draws, permute
$\mathbf{y}$ once and recompute $\bar z$ with that same permuted
$\mathbf{y}$ in every split;
$p\!=\!(\#\{\bar z_{b}\!\geq\!\bar z_{\mathrm{obs}}\}+1)/(B+1)$.
(4)~Under $H_0:y\perp\mathbf x$ and iid rows, the distribution of
$\mathbf y$ conditional on $\mathbf X$ and the fixed splits is
invariant under these permutations. Recomputing all outcome-dependent
training steps for each draw therefore gives a finite-sample valid
test, without Gaussian or large-$n$ assumptions. Exchangeability of
the paired rows alone does not justify permuting outcomes relative to
predictors. Using one shared outcome permutation across splits follows
\citet{valente2021permutation}.
\emph{Cost.} With the splits fixed and $K\!=\!1$ the held-out score on
each split is a fixed linear map of the training outcomes, so a
permutation only changes the input vector: the $B$ draws collapse to
two matrix products per split with no refit, exact conditional on the
splits. At $n\!=\!200$ the cost over
NB-asymptotic falls from $20.8\times$ at $p\!=\!75$ to $2.5\times$ at
$p\!=\!1{,}200$ (App.~\ref{app:rule-grid}); for $p\!\geq\!20{,}000$ it
costs about the same ($1.15\times$ at $p\!=\!20{,}000$, less as $p$
grows): exactness is close to free exactly where the rule of
\S\ref{sec:methods:validity} sends the user. The collapse needs a
direction linear in $\mathbf{y}$; a pipeline with a selection step pays
the $B$ refits (\S\ref{sec:exp:beyond}). Exactness covers the omnibus
step only: after full-data deflation the permutation reference is not
exact (\S\ref{sec:exp:fwer}), so on flagged designs per-component
claims should come from the single-split variant of
App.~\ref{app:fwer-proof}, whose permutation form assumes only iid
rows.

\subsubsection{Rotation invariance of the held-out predictions}

\begin{proposition}[Rotation invariance of held-out OLS-refit predictions]
\label{prop:rotation-invariance}
Fix $\mathbf{Q}\!\in\!O(K)$ across splits (Assumption~A2,
App.~\ref{app:proof}). For any split $j$, the per-split OLS refit on
rotated scores
$\widetilde{\mathbf{T}}_{\mathrm{tr},j}\!=\!\mathbf{T}_{\mathrm{tr},j}\mathbf{Q}$
satisfies
$\widetilde{\boldsymbol\beta}_j\!=\!\mathbf{Q}^{\!\top}\boldsymbol\beta_j$,
so held-out predictions are pointwise equal:
$\widetilde{\mathbf{T}}_{\mathrm{te},j}\widetilde{\boldsymbol\beta}_j
=\mathbf{T}_{\mathrm{te},j}\mathbf{Q}\mathbf{Q}^{\!\top}\boldsymbol\beta_j
=\mathbf{T}_{\mathrm{te},j}\boldsymbol\beta_j$.
Any test statistic depending on the splits only through
$\{\hat{\mathbf{y}}_{\mathrm{te},j}\}_{j=1}^{J}$, including
$t_{\mathrm{NB}}$ of~\eqref{eq:nb}, is therefore invariant under
$\mathbf{Q}$.
\end{proposition}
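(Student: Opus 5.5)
The plan is to reduce everything to the normal equations of the per-split OLS refit, so that the claim becomes a one-line identity about least squares under an invertible reparametrization of the regressors. Fix a split $j$ and write $\mathbf{T}=\mathbf{T}_{\mathrm{tr},j}\in\mathbb{R}^{n_1\times K}$ for the training scores and $\mathbf{y}_{\mathrm{tr}}$ for the matching training outcomes. Assume first that $\mathbf{T}$ has full column rank, as NIPALS scores do whenever the $K$ components are non-degenerate. The OLS refit is then
\[
\boldsymbol\beta_j=(\mathbf{T}^{\!\top}\mathbf{T})^{-1}\mathbf{T}^{\!\top}\mathbf{y}_{\mathrm{tr}}.
\]
With $\widetilde{\mathbf{T}}=\mathbf{T}\mathbf{Q}$, I compute $\widetilde{\mathbf{T}}^{\!\top}\widetilde{\mathbf{T}}=\mathbf{Q}^{\!\top}\mathbf{T}^{\!\top}\mathbf{T}\mathbf{Q}$. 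This matrix is invertible with inverse $\mathbf{Q}^{\!\top}(\mathbf{T}^{\!\top}\mathbf{T})^{-1}\mathbf{Q}$, because $\mathbf{Q}^{-1}=\mathbf{Q}^{\!\top}$. Substituting, the inner $\mathbf{Q}\mathbf{Q}^{\!\top}$ cancels and gives $\widetilde{\boldsymbol\beta}_j=\mathbf{Q}^{\!\top}\boldsymbol\beta_j$.

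Next comes the held-out side. Assumption~A2 says the same $\mathbf{Q}$ is applied on the test half: the rotated test scores are $\mathbf{T}_{\mathrm{te},j}\mathbf{Q}$, obtained by projecting test rows onto the rotated weights $\mathbf{W}\mathbf{Q}$. Hence $\widetilde{\mathbf{T}}_{\mathrm{te},j}\widetilde{\boldsymbol\beta}_j=\mathbf{T}_{\mathrm{te},j}\mathbf{Q}\mathbf{Q}^{\!\top}\boldsymbol\beta_j=\mathbf{T}_{\mathrm{te},j}\boldsymbol\beta_j$, using $\mathbf{Q}\mathbf{Q}^{\!\top}=\mathbf{I}_K$. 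If an intercept is refit, I would either note that centering commutes with right-multiplication by $\mathbf{Q}$, or include the intercept column and extend $\mathbf{Q}$ block-diagonally by a $1$.

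For the last claim, the statistic is a function $\phi(\hat{\mathbf{y}}_{\mathrm{te},1},\dots,\hat{\mathbf{y}}_{\mathrm{te},J})$, together with the $\mathbf{Q}$-free quantities $\mathbf{y}_{\mathrm{te},j}$ and the split sizes. Its arguments are unchanged split by split, so its value is unchanged. For $t_{\mathrm{NB}}$ this runs through $r_j$, $z_j$, $\bar z$, $\widehat S$ and Eq.~\eqref{eq:nb} in turn. The same argument covers the permutation $\bar z_b$ of \S\ref{sec:methods:exactperm}, since the identity holds for every outcome vector, permuted or not.

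The main obstacle is the hygiene of the hypotheses rather than the algebra. First, $\mathbf{Q}$ must be fixed in advance, not re-estimated per split; a varimax rotation computed separately on each training half would generally still leave predictions invariant split by split, but only if it is applied consistently to the train and test scores of that split. I would state that A2 in fact requires only that the same $\mathbf{Q}_j$ be used on both halves of split $j$. Second, $\mathbf{T}$ may be rank-deficient. There I would replace the inverse with the Moore--Penrose pseudoinverse and use the fact that the fitted OLS prediction map is the orthogonal projection onto $\operatorname{col}(\mathbf{T})=\operatorname{col}(\mathbf{T}\mathbf{Q})$. This gives pointwise equality of predictions, though only for the minimum-norm coefficient choice, which satisfies $\widetilde{\boldsymbol\beta}_j=\mathbf{Q}^{\!\top}\boldsymbol\beta_j$ because $(\mathbf{T}\mathbf{Q})^{+}=\mathbf{Q}^{\!\top}\mathbf{T}^{+}$. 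The argument in fact uses only invertibility of $\mathbf{Q}$, matching the $GL(K)$ remark after Eq.~\eqref{eq:krylov-span}; orthogonality is needed only for the specific form $\mathbf{Q}^{\!\top}$ of the coefficient map.
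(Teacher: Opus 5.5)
Your proposal is correct and is essentially the paper's own derivation in App.~\ref{app:proof}: under full column rank (A1), the identity $(\mathbf{Q}^{\!\top}\mathbf{A}\mathbf{Q})^{-1}=\mathbf{Q}^{\!\top}\mathbf{A}^{-1}\mathbf{Q}$ gives $\widetilde{\boldsymbol\beta}_j=\mathbf{Q}^{\!\top}\boldsymbol\beta_j$, then $\mathbf{Q}\mathbf{Q}^{\!\top}=\mathbf{I}_K$ gives pointwise-equal held-out predictions, and $t_{\mathrm{NB}}$ is unchanged because it depends on $\mathbf{Q}$ only through those predictions (via the $\{z_j\}$); the paper's (A2) already records your relaxation to a per-split $\mathbf{Q}_j$ applied to both halves. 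One small caution on your rank-deficient extension: projection onto $\operatorname{col}(\mathbf{T})$ only settles the in-sample fits, and held-out equality rests on $(\mathbf{T}\mathbf{Q})^{+}=\mathbf{Q}^{\!\top}\mathbf{T}^{+}$, which uses orthogonality, so your closing claim that only invertibility is needed holds in the full-rank case but not for the minimum-norm refit.
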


\begin{proof}
OLS predictions are invariant under any invertible reparameterization
of the regressors; $\mathbf{Q}\!\in\!O(K)$ additionally keeps
loading-side varimax stable (full statement in App.~\ref{app:proof}).
\end{proof}

\paragraph{What rotation does not buy.}
A varimax axis inherits the joint subspace's significance and nothing
else; running the fixed sequence on rotated axes is not licensed,
because the sequence is defined on the NIPALS extraction order
(Table~\ref{tab:hypotheses}).

\subsection{Interpretation: varimax as a basis change after inference}
\label{sec:methods:rotation}

PLS identifies the supervised subspace but leaves the basis ambiguous:
components are identified only up to orthogonal
rotation~\citep{kaiser1958varimax}. We rotate the \textbf{loading
matrix}
$\mathbf{L}\!=\!\mathbf{E}_{\!\mathcal{V}_{\!\mathrm{tgt}}}\mathbf{W}\!\in\!\mathbb{R}^{V_{\!\mathrm{tgt}}\times K}$
(rather than $\mathbf{W}$ or document scores) by Kaiser-normalized
varimax against vocabulary target $\mathcal{V}_{\!\mathrm{tgt}}$
(objective, sign/order recipe, and full transformation in
App.~\ref{app:varimax}); $\mathcal{V}_{\!\mathrm{tgt}}\!=\!\mathcal{V}$
gives the unsupervised variant, and a curated content-word lexicon the
\emph{smart-target} variant, the embedding analogue of marker-variable
specification in confirmatory factor analysis. Rotation rebases
\emph{inside} the Krylov span to concentrate the leading supervised
loading on a sparse, describable Dim-1, so the $K\!>\!1$ calibration
axes read as named tweaks of the main axis. Joint
$R^{2}_{\mathrm{full}}$ is rotation-invariant; per-axis $R^{2}$, when
reported, is on the unrotated NIPALS scores (App.~\ref{app:varimax}).
Hard-cardinality sparse-PLS (top-$s$ deflation, no rotation) is an
alternative readability route (App.~\ref{app:sparse-pls}).
Table~\ref{tab:hypotheses} states which null each object carries.

\begin{table}[t]
\centering
\caption{What is tested before and after rotation.}
\label{tab:hypotheses}
\footnotesize
\begin{tabular}{@{}l p{5.0cm} p{5.4cm}@{}}
\toprule
\textbf{Object} & \textbf{Null tested} & \textbf{Guarantee} \\
\midrule
Joint subspace & rank-$K$ fit predicts $\mathbf{y}$ no better than chance & omnibus NB test; exact under NB-permutation \\
Unrotated component $h$ & no positive residual association after $h\!-\!1$ deflations & fixed-sequence; FWER requires per-step validity (\S\ref{sec:exp:fwer}) \\
Varimax axis & none --- no separate test & inherits joint significance only (Prop.~\ref{prop:rotation-invariance}) \\
\bottomrule
\end{tabular}
\end{table}

\section{Experiments}
\label{sec:experiments}
\label{sec:exp:setup}

Inferential claims are made on EN-Warriner GloVe-300 (text) and on
two real NIR designs, Tecator and gasoline; Polish and Spanish appear
only in the $K\!=\!3$ cross-lingual demonstration of
\S\ref{sec:exp:peraxis-glove}. Embedding provenance and vocabulary
cleaning are in App.~\ref{app:embeddings}.

\subsection{Mechanism --- Krylov span and the suppressor fingerprint}
\label{sec:exp:synth}

\begin{figure}[t]
\centering
\includegraphics[width=\linewidth]{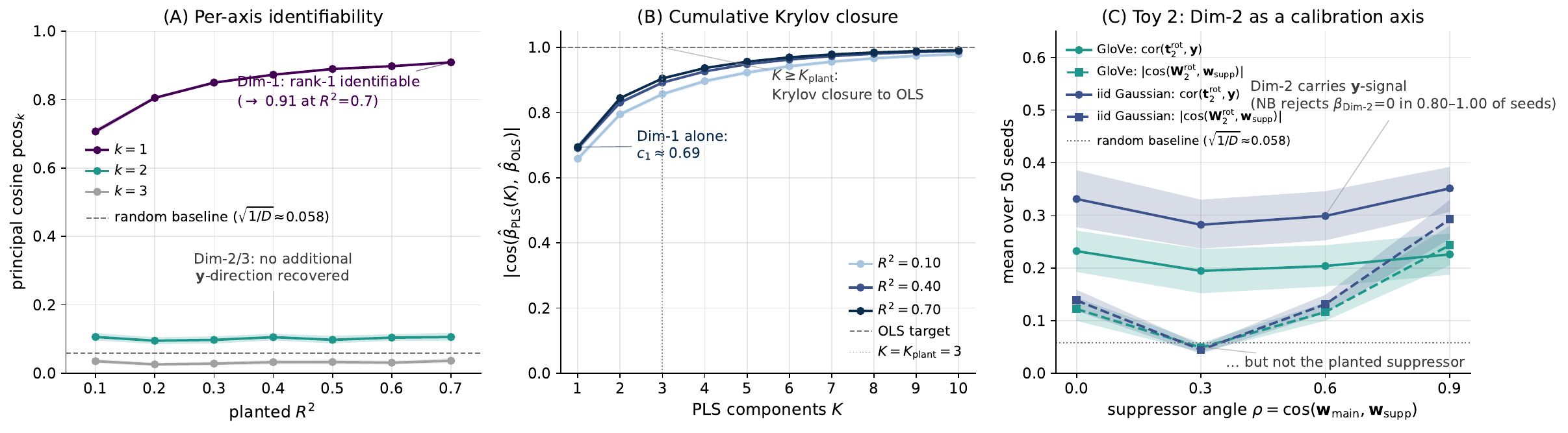}
\caption{\textbf{Krylov saturation of supervised PLS to OLS, and the
calibration-axis fingerprint.}
\textit{(A)} Per-axis principal cosine against the planted
subspace at $r_{\mathrm{plant}}\!=\!3$, across planted $R^2$.
\textit{(B)} Cumulative $\hat{\beta}_{\mathrm{PLS}}(K)$ versus
ridge-stabilized $\hat{\beta}_{\mathrm{OLS}}$, $K\!=\!1\dots10$.
Toy~1 setup: $N\!=\!4000$, $D\!=\!300$, Warriner GloVe, 50 seeds,
$\boldsymbol{\alpha}\!=\![1.0,0.7,0.5]$. iid Gaussian replication
and $r_{\mathrm{plant}}\!=\!2$ in App.~\ref{app:toys}.
\textit{(C)} Toy~2: at $K\!=\!2$ with a planted suppressor at angle
$\rho$ to the main predictor, the varimax-rotated Dim-2 does not
recover the suppressor (dashed, near the random baseline) yet carries
held-out $\mathbf{y}$-signal (solid), on GloVe and iid geometry alike;
50 seeds per cell, bootstrap 95\% bands.}
\label{fig:calibration-axis}
\end{figure}

Two controlled mechanism experiments (Toys 1--2) anchor the
methodology, run on both real Warriner GloVe geometry and iid
Gaussian $\mathbf{X}$ so identifiability and suppressor-fingerprint
claims are evaluated independently of text structure.
\textbf{Toy~1 (Krylov saturation).} Figure~\ref{fig:calibration-axis}
illustrates the calibration-axis reading directly (formal statement in
App.~\ref{app:proof}): $c_K$ rises from $\approx\!0.69$ at $K\!=\!1$
to $\geq\!0.98$ by $K\!=\!10$ across all three signal strengths, so
higher components do real predictive work even though they recover no
new $\mathbf{y}$-direction, and subspace plateau values are matched
across geometries ($\overline{\mathrm{pcos}}\!\in\![0.41,0.45]$ GloVe,
$[0.46,0.51]$ iid at $r_{\mathrm{plant}}\!=\!2$; $[0.28,0.35]$ and
$[0.30,0.37]$ at $r_{\mathrm{plant}}\!=\!3$; full per-axis table in
App.~\ref{app:toys}).
\textbf{Toy~2 (calibration fingerprint at $K\!=\!2$;
Fig.~\ref{fig:calibration-axis}C).} Varimax-rotated Dim-2 does
\emph{not} recover the planted suppressor
($|\!\cos|\!\in\![0.04,0.29]$) yet reproduces the empirical
calibration-axis fingerprint of \S\ref{sec:exp:peraxis-glove}
($\operatorname{cor}(\mathbf{t}_2^{\mathrm{rot}},\mathbf{y})\!\in\![0.19,0.35]$,
NB rejects in 0.80--1.00 of replicates): Dim-2 absorbs
$\mathbf{y}$-aware $\mathbf{X}$-residual variance, flagging which
clusters the rank-1 direction misweights, and the fingerprint is
geometry-independent (matched on GloVe and iid). Toys~3--5 numerically
corroborate functional-form invariance, rotation invariance
(Prop.~\ref{prop:rotation-invariance}), and selector-vs-rank separation
(App.~\ref{app:toys}, App.~\ref{app:overshoot}).

\subsection{Inference: the validity rule and null calibration}
\label{sec:exp:calibration}

Table~\ref{tab:cal-combined} shows the rule of
\S\ref{sec:methods:validity} on eight of its 25 null cells plus
Tecator (all in App.~\ref{app:rule-grid}). The rule flags every cell
whose level moves at $\alpha\!=\!.05$ or $.10$, and clearing is not a
clean bill: five cleared cells are conservative enough that their
Wilson interval excludes nominal, which costs power only.
KS-uniformity is not the discriminator: two cleared block-collinear
cells have KS $p$ near $10^{-6}$ while holding level at both $\alpha$.
The exact test holds level in every cell ($0.042$--$0.057$ at
$\alpha\!=\!.05$, $0.093$--$0.108$ at $.10$, KS never below $0.024$).
Tecator, on which the asymptotic null visibly drifts (KS
$p\!\approx\!3{\times}10^{-6}$ at $n\!=\!40$; PP plot in
App.~\ref{app:nontext}), is essentially rank one, stable rank $1.014$
with the first principal component carrying $98.6\%$ of the variance,
so the rule flags it, and it was not used to set the threshold. On
flagged designs the drift changes the shape of the null rather than
its $\alpha\!=\!.05$ level; the inflated $0.30$ rejection rate on
Tecator belongs to the uncorrected $t$-test (App.~\ref{app:nontext}).

\begin{table}[t]
\centering
\caption{The validity rule on eight null cells (full grid of 25 plus
Tecator in App.~\ref{app:rule-grid}). Stable rank
$\|\mathbf{X}\|_F^{2}/\|\mathbf{X}\|_2^{2}$ of the column-standardized
design; \emph{flag} if $n\!<\!25$ or stable rank $<\!3$. Level:
rejection rate under $H_{0}$ at $\alpha\!=\!.05$ / $.10$; KS:
KS-uniform $p$ of the NB-asymptotic null $p$-values. 5{,}000 replicates
per cell (Wilson half-width $\pm 0.006$ at $.05$, $\pm 0.008$ at $.10$)
except Warriner (2{,}000) and Tecator (NB-asymptotic 400, exact
2{,}000).}
\label{tab:cal-combined}
\footnotesize
\setlength{\tabcolsep}{4pt}
\begin{tabular}{@{}lrrl rr r rr@{}}
\toprule
 & & & & \multicolumn{2}{c}{\textbf{NB-asymptotic}} & & \multicolumn{2}{c}{\textbf{exact}} \\
\cmidrule(lr){5-6}\cmidrule(lr){8-9}
\textbf{Design} & $n$ & \textbf{st.\ rank} & \textbf{rule} & $.05$ & $.10$ & \textbf{KS (NB)} & $.05$ & $.10$ \\
\midrule
iid Gaussian ($p\!=\!300$)                    &  40 & 22.0  & pass & 0.038 & 0.083 & $1.3{\times}10^{-3}$ & 0.048 & 0.096 \\
iid Gaussian                                  & 160 & 55.3  & pass & 0.050 & 0.100 & 0.72                 & 0.052 & 0.106 \\
block-collinear (5 blocks, $r\!=\!.9$)        &  40 &  3.80 & pass & 0.050 & 0.095 & $7.6{\times}10^{-6}$ & 0.050 & 0.103 \\
Warriner GloVe-300 (real)                     &  40 & 17.1  & pass & 0.035 & 0.085 & 0.18                 & 0.042 & 0.100 \\
\midrule
dominant factor ($r\!=\!.9$, all columns)     &  40 &  1.10 & \textbf{flag} & 0.041 & 0.061 & $2{\times}10^{-77}$ & 0.050 & 0.100 \\
NIR gasoline ($60\!\times\!401$, real)        &  60 &  1.39 & \textbf{flag} & 0.049 & 0.077 & $4{\times}10^{-31}$ & 0.050 & 0.098 \\
Tecator NIR ($215\!\times\!100$, real)        &  40 &  1.014 & \textbf{flag} & 0.055 & 0.080 & $2.9{\times}10^{-6}$ & 0.052 & 0.094 \\
iid Gaussian                                  &  10 &  7.08 & \textbf{flag} ($n$) & 0.004 & 0.020 & $10^{-70}$ & 0.056 & 0.108 \\
\bottomrule
\end{tabular}
\end{table}

\subsection{Inference: detection power and cost}
\label{sec:exp:powercalib}

Fig.~\ref{fig:s3-power-curves} sweeps $n$ on the synthetic Warriner geometry with
a planted rank-1 SVD signal at $R^2\!\in\!\{0.05,0.40\}$ and
$r_{\mathrm{plant}}\!\in\!\{1,3\}$; the
NB $\geq$ perm-$Q^2$ ordering holds at every $n$ and is widest in the noisy regime.
Cell-level point checks and the Tecator replication are in App.~\ref{app:nontext}
(Tab.~\ref{tab:s2-power}, Fig.~\ref{fig:t3-power-curves}). On the
Warriner ladder from $n\!=\!400$ to the full $13{,}365$ rows, perm-$Q^{2}$
costs $115$--$125\times$ NB-asymptotic per test and the exact
NB-permutation of \S\ref{sec:methods:exactperm} costs $7.5$--$15\times$
(App.~\ref{app:rule-grid}, Tab.~\ref{tab:cost}; real-geometry
replication of the ordering in App.~\ref{app:embedding-power}).

\begin{figure}[t]
\centering
\includegraphics[width=\linewidth]{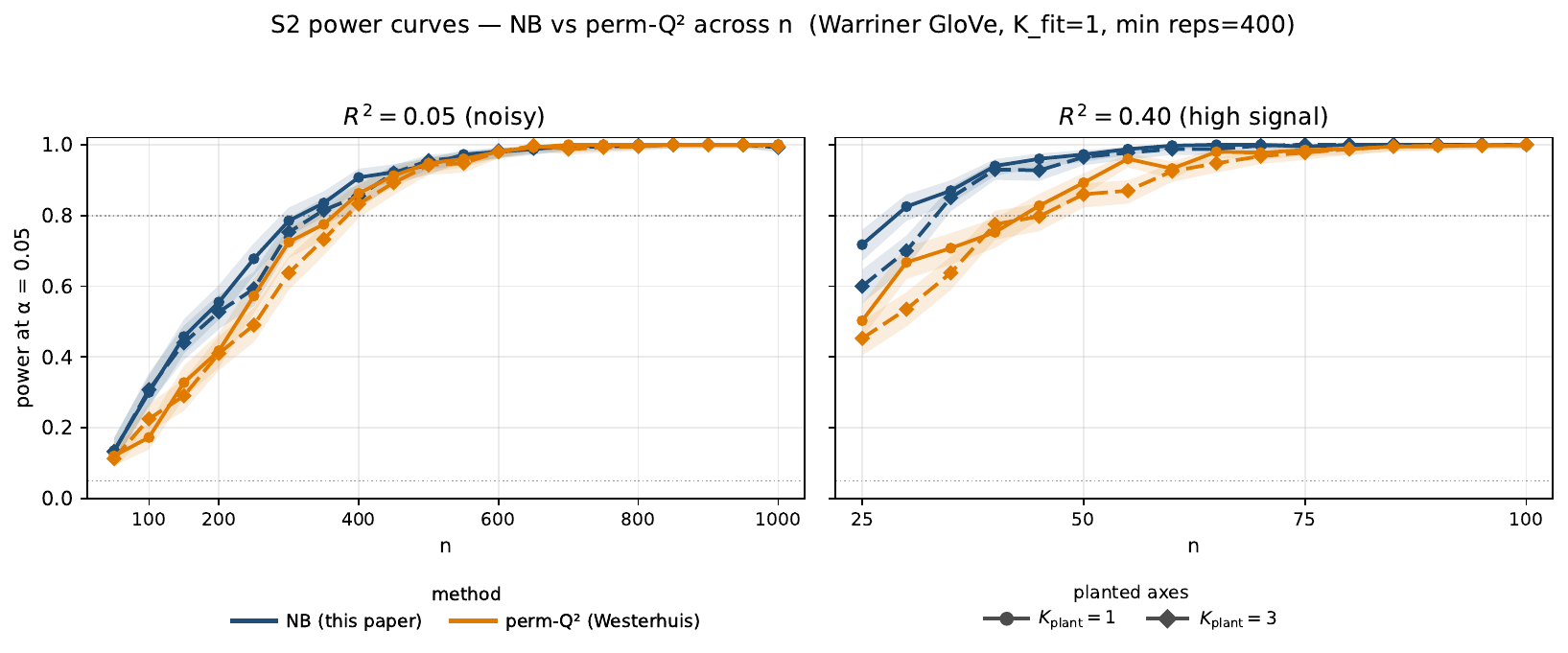}
\caption{Empirical power vs.\ $n$ at $\alpha\!=\!0.05$ on synthetic Warriner
geometry. NB (blue) vs.\ perm-$Q^2$ (orange) at $r_{\mathrm{plant}}\!=\!1$ (solid)
and $r_{\mathrm{plant}}\!=\!3$ (dashed); $R^2\!=\!0.05$ (left) and
$R^2\!=\!0.40$ (right). Matched-power sample sizes in App.~\ref{app:samplered}.}
\label{fig:s3-power-curves}
\end{figure}

\subsection{Inference: per-axis rejection beyond planted signal}
\label{sec:exp:fwer}

The fixed-sequence guarantee depends on validity at the first true
null. App.~\ref{app:fwer-proof} establishes this for a single-split
Gaussian test; validity of the repeated-split NB procedure remains
unproved. We therefore examine how often the NB sequence
rejects beyond $m$ planted signal axes. This is a diagnostic rather
than a direct estimate of FWER for the appendix's conditional
hypotheses: noisy fitted deflation can leave residual signal at
step $m\!+\!1$. At step 1, NB-permutation is finite-sample valid
under iid rows and $y\perp\mathbf x$.

Table~\ref{tab:fwer-boundary} uses strong planted signal to make
step $m\!+\!1$ reachable: four designs, one to three deflations,
5{,}000 replicates. At one deflation the dominant-factor design has
the largest chain rejection rate, close to the nominal thresholds:
$0.049$ $[0.043, 0.055]$ at $\alpha\!=\!.05$ and $0.095$
$[0.087, 0.104]$ at $.10$. Train-only NB on the same generated
datasets gives $0.038$ $[0.033, 0.044]$ and $0.081$ $[0.073, 0.089]$,
respectively.
The other three designs reach that step as often but reject less
frequently. NB-permutation applied naively
after full-data deflation gives $0.059$ $[0.053, 0.066]$ at
$\alpha\!=\!.05$; its exactness argument applies at step 1 only.
At two and three deflations the observed chain rejection rates are
below nominal, but three of the eight design--depth cells never
reach step $m\!+\!1$, so their zeros do not assess that step's test.
These experiments cover at most three deflations; additional grids
on Warriner and Tecator appear in App.~\ref{app:peraxis-fwer}.

\begin{table}[t]
\centering
\caption{Probability that the sequence rejects through step $m\!+\!1$ after $m$ planted signal axes,
$n\!=\!40$, $J\!=\!50$, total planted $R^{2}\!=\!0.5$
spread over the $m$ signal axes, 5{,}000 replicates per cell, at
$\alpha\!=\!.05$ / $.10$. \emph{full-data, NB}: every step deflated on all rows;
\emph{train-only, NB}: each split deflated using its training half.
\emph{full-data, perm.}: permutation reference at the deflated step,
where it is not exact. Step $m\!+\!1$ need not be a true conditional
null (see text). $\dagger$: the chain never reaches that step
(the signal axes below it are not detected), so the zero carries no
information. Setup and the $\alpha\!=\!.01$ column in
App.~\ref{app:peraxis-fwer}.}
\label{tab:fwer-boundary}
\footnotesize
\setlength{\tabcolsep}{4pt}
\begin{tabular}{@{}ll ccc@{}}
\toprule
\textbf{Design} & \textbf{Arm} & $m\!=\!1$ & $m\!=\!2$ & $m\!=\!3$ \\
\midrule
iid Gaussian ($40\!\times\!20$)            & full-data, NB    & 0.000 / 0.000 & 0.000$^{\dagger}$ / 0.000$^{\dagger}$ & 0.000$^{\dagger}$ / 0.000$^{\dagger}$ \\
dominant factor ($40\!\times\!20$)         & full-data, NB    & \textbf{0.049 / 0.095} & 0.000 / 0.000 & 0.000$^{\dagger}$ / 0.000$^{\dagger}$ \\
                                           & train-only, NB   & 0.038 / 0.081 & --- & --- \\
                                           & full-data, perm. & 0.059 / 0.118 & 0.000 / 0.000 & 0.000$^{\dagger}$ / 0.000$^{\dagger}$ \\
decaying spectrum ($40\!\times\!60$)       & full-data, NB    & 0.007 / 0.018 & 0.004 / 0.012 & 0.002 / 0.007 \\
NIR gasoline ($40\!\times\!401$, real)     & full-data, NB    & 0.017 / 0.035 & 0.011 / 0.024 & 0.001 / 0.002 \\
\bottomrule
\end{tabular}
\end{table}

\subsection{Beyond PLS: supervised PCA and a ridge probe}
\label{sec:exp:beyond}

Under iid rows and outcome--predictor independence, the permutation
test covers training pipelines recomputed for every outcome permutation.
We ran it
around supervised PCA~\citep{bair2006spca} and a ridge probe, with
PLS1 as the reference, on Warriner and Tecator at two $n$ each, 2{,}000
null replicates per cell (App.~\ref{app:beyond-pls}). Two findings,
both bounding the paper's claims. The permutation floor transfers:
the exact test holds level in all eight non-PLS null cells at
$\alpha\!=\!.05$ and $.10$ (Wilson intervals cover nominal; KS never
below $0.076$; at $\alpha\!=\!.01$ seven of eight, the exception being
Tecator supervised PCA at $n\!=\!40$ at $0.017$), and its
power is at or above perm-$Q^{2}$'s in all twelve cells. The NB
correction does not: off PLS1 it turns conservative and loses power
(supervised PCA at $0.010$ and $0.015$ where nominal is $.05$, against
$0.044$ and $0.053$ on PLS1).

\subsection{Applied: cross-lingual VAD}
\label{sec:exp:vadfit}
\label{sec:exp:peraxis-glove}

On three (language $\times$ embedding) configurations, English
GloVe-300 with Warriner V/A/D, Polish GloVe-800 with ANPW V/A/D, and
Spanish GloVe-300 with Stadthagen-Gonz\'alez V/A, the $K\!=\!3$ fit is
significant in all eight outcome cells
($\alpha^{*}_{\mathrm{FWER}}\!<\!.001$, per-axis NB on the unrotated
NIPALS extraction; cross-validated $R^{2}$ from $0.35$ to $0.65$), the
leading rotated axis is the conventional affect direction in every
cell, and every cell has at least one $K\!>\!1$ rotated axis carrying
the same technical-register vs.\ emotion-laden contrast (EN
\emph{industrial, containing, metal} against \emph{blessings,
faithful, compassion}), the empirical counterpart of the Toy~2
fingerprint. Configurations, fit statistics, per-pole word lists, the
four alternative readings ruled out, and the sparse-PLS readability
route: App.~\ref{app:vad-words}, \ref{app:dim3-robustness},
\ref{app:sparse-pls}.

\section{Limitations}
\label{sec:discussion}
\label{sec:disc:limitations}

\label{sec:disc:scope}
\textbf{(0) Rotation half is text-only by design.}
Loading-side varimax against a curated vocabulary target carries
interpretive content only where features are individually
human-readable; for spectral wavelengths, voxels, or gene-expression
probes the rotation target is the wrong primitive (sparsity,
spatial-contiguity, or cluster-structure priors are the appropriate
substitutes). \textbf{(a--b) The asymptotic reference is design-dependent and established only at $\alpha\!=\!.05$ and $.10$.}
The Fisher-$z$ normality and $(1/J + n_{2}/n_{1})$ variance correction
of~\citet{nadeau2003inference} are asymptotic: below $n\!=\!25$ the
Fisher-$z$ reference fails whatever the spectrum, and the exact test is
used there. The stable-rank threshold of
\S\ref{sec:methods:validity} is empirical, set on one family of designs,
validated on five others, and errs toward flagging; the failure it flags
is one of shape, not scale, so no corrected plug-in $\rho$ repairs it.
On cleared designs NB-asymptotic holds level at $\alpha\!=\!.05$ and
$.10$; at $\alpha\!=\!.01$ it is unusable on flagged designs
($0.02\times$ to $1.9\times$ nominal) and inflates $1.3$--$1.6\times$
on one cleared pool at $n\!\geq\!80$, so
multi-$\alpha$ sweeps, FDR at small $q$, and anything else that consumes
the full $p$-distribution should use the exact test whatever the design. \textbf{(c) Default $K^{*}$ tracks predictive rank, not $\widetilde{r}$.}
The 1-SE rule on split-half CV-$R^2$ returns the predictively justified
rank, which can sit above $\widetilde{r}$ under correlated $\mathbf{X}$
(Krylov deflation needs $K\!>\!1$ even for rank-1 $\mathbf{y}$;
App.~\ref{app:overshoot}); a shorter prefix preserves FWER whenever
the underlying sequence is valid. Fixing $K$ a priori (we use $K\!=\!3$)
or using BIC chooses a model size; neither establishes
$\widetilde r$. \textbf{(d) Rotation depends on a curated target lexicon.}
The smart-target variant of \S\ref{sec:methods:rotation} is a
researcher degree of freedom: rotation invariance preserves predictive
content, the named Dim-1/Dim-2/Dim-3 split is not, so pre-register the
target and report the unsupervised variant
($\mathcal{V}_{\!\mathrm{tgt}}\!=\!\mathcal{V}$) as sensitivity. \textbf{(e) Scope of extensions.}
Basis-dependent refits (LASSO, elastic-net, anisotropic ridge) and
non-orthogonal rotations break at least one step of the argument
(rotation invariance or calibration-axis derivation) and are out of
scope; isotropic ridge is $O(K)$-invariant and inherits the OLS proof.
Across pipelines the split is measured rather than argued
(\S\ref{sec:exp:beyond}): the exact test transfers to supervised PCA and
a ridge probe under iid rows and outcome--predictor independence; the NB
correction is PLS1-scoped, and sLDA, linear probes on deep features, and
SAE concept directions were not run. \textbf{(f) The NB boundary property is not proved.}
The finite-sample FWER guarantee covers a single independent split
with train-only deflation and Gaussian correlation tests
(App.~\ref{app:fwer-proof}). Repeated-split NB with either deflation
scheme requires a separate calibration argument, and under full-data
deflation that argument must also absorb the train/test coupling the
deflation introduces. The experiments of
\S\ref{sec:exp:fwer} measure rejection beyond planted signal counts,
which need not coincide with the first true conditional null, and
cover at most three deflations. \textbf{(g) Real-data breadth.}
Inferential evidence comes from one text design and two NIR
chemometric designs, the second of which, NIR gasoline, is still
chemometrics. Non-Gaussian outcomes are covered by the exchangeability
argument behind the exact test and by the permuted real outcomes of
the null grids, not by a dedicated study.

\section{Conclusion}
\label{sec:conclusion}

PLS is used widely across applied science, but its inferential
infrastructure has not kept up. Practitioners who want a test that
does not cost a thousand fits, or that returns a calibrated
$p$-value on each component rather than only on the joint
subspace, have had to choose between a tool flagged biased by its
own maintainers, a calibrated but expensive permutation loop, or no
test at all.

We reduced ``does this supervised subspace predict $\mathbf{y}$?'' to
a held-out refit of the subspace, with a corrected asymptotic test, a
pre-run check that says when it holds its level, and an exact test for
when it does not; both reject planted signal at least as often as
permutation-$Q^2$ at a fraction of its cost. Held-out predictions are
invariant to orthogonal rebasing, so the joint claim survives rotation
and per-component claims live on the unrotated sequence. The software
is \inferpkg{}~\citep{plskit2026} and \textpkg{}~\citep{plisiecki2025ssd}.

\begin{ack}
The study was funded by the National Science Centre Poland OPUS grant
(Grant 2024/55/B/HS6/02617). Centre for Brain Research is supported as
a flagship project by Future Society Priority Research Area and Quality
of Life Priority Research Area under the Strategic Program of
Excellence Initiative at the Jagiellonian University. The authors
declare no competing interests.
\end{ack}

\bibliographystyle{plainnat}
\bibliography{refs}


\appendix

\section{Technical appendices and supplementary material}

\paragraph{Contents.}
\begin{itemize}[leftmargin=2.4em,itemsep=2pt]
  \item App.~\ref{app:notation} --- Notation glossary.
  \item App.~\ref{app:proof} --- Rotation invariance of the OLS refit: derivation and assumptions.
  \item App.~\ref{app:nipals} --- NIPALS deflation recurrence.
  \item App.~\ref{app:varimax} --- Loading-side varimax: objective, recipe, score orthogonality.
  \item App.~\ref{app:setup} --- Setup: document vectors, word loadings, and the PCA+OLS baseline.
  \item App.~\ref{app:vad-words} --- Cross-lingual VAD: configurations, aggregate fit, and top words per axis.
  \item App.~\ref{app:dim3-robustness} --- Dim-3 calibration axis: robustness against four counter-hypotheses.
  \item App.~\ref{app:toys} --- Toy 1--5 mechanism experiments.
  \item App.~\ref{app:powercalib-full} --- Calibration and power benchmarks (full).
  \item App.~\ref{app:overshoot} --- $K^{*}$ tracks predictive rank, not $\widetilde{r}$.
  \item App.~\ref{app:embedding-power} --- Detection power on real embedding geometries.
  \item App.~\ref{app:nontext} --- Inference: power and Tecator non-text replication.
  \item App.~\ref{app:rule-grid} --- Validity rule: the full null grid, the \texttt{plsdof} construction, and cost.
  \item App.~\ref{app:fwer-proof} --- Per-component FWER: a single-split guarantee.
  \item App.~\ref{app:peraxis-fwer} --- Per-axis FWER: empirical diagnostics.
  \item App.~\ref{app:beyond-pls} --- Beyond PLS: supervised PCA and a ridge probe.
  \item App.~\ref{app:embeddings} --- Embedding pipelines and vocabulary cleaning.
  \item App.~\ref{app:sparse-pls} --- Sparse-PLS --- per-$(K, s)$ sweep on Warriner-EN valence.
  \item App.~\ref{app:impacts} --- Broader impacts.
  \item App.~\ref{app:repro} --- Reproducibility.
  \item App.~\ref{app:algorithm} --- Algorithm: PLS-$K$ pseudocode for the supervised text pipeline.
\end{itemize}

\subsection{Notation glossary}
\label{app:notation}

We use two letter families across \S\ref{sec:methods}--\S\ref{sec:experiments}:
$r$ for ranks of the supervised signal as observable joint properties of
$(\mathbf{X},\mathbf{y})$, and $K$ for component counts of the PLS estimator.

\begingroup
\setlength{\tabcolsep}{6pt}
\renewcommand{\arraystretch}{1.1}
\noindent\begin{tabular}{@{}l p{0.85\linewidth}@{}}
$\widetilde{r}$ & true signal rank --- the smallest $K$ for which $\mathbb{E}[\mathbf{y}\mid\mathbf{X}]$ factors through a $K$-dimensional linear projection of $\mathbf{X}$. Oracle quantity, unknown in practice. \\
$r_{\mathrm{plant}}$ & planted direction count --- the number of directions used to construct the synthetic response (\S\ref{sec:exp:synth}). In a scalar linear model their weighted sum is one linear index, so this count need not equal $\widetilde r$ or the Krylov dimension. \\
$K_{\mathrm{fit}}$ & number of PLS components extracted at fit time --- a hyperparameter of the estimator. \\
$K^{*}$ & data-driven component count --- the value of $K_{\mathrm{fit}}$ chosen by a predictive selector (1-SE on CV-$R^{2}$) or a parsimony selector (BIC). Neither targets $\widetilde{r}$: in correlated $\mathbf{X}$, predictive $K^{*}\!>\!\widetilde{r}$ generically (App.~\ref{app:overshoot}). \\
$k^{*}$ & per-axis truncation cutoff --- for a single axis, the index of the last PLS direction retained under the per-axis rule of \S\ref{sec:exp:peraxis-glove}. \\
\end{tabular}
\endgroup

\subsection{Rotation invariance of the OLS refit: derivation and assumptions}
\label{app:proof}

We expand the proof of Prop.~\ref{prop:rotation-invariance} (rotation
invariance of the joint $K$-component OLS-refit predictions) into four
numbered points, an explicit assumptions block, a remark on
regularised refits, and a discussion of Krylov dimension. Notation as in
\S\ref{sec:methods:test:tnb}. The per-axis pipeline of
\S\ref{sec:methods:test:tnb} ($t_{\mathrm{NB},h}$ on deflated pairs) is
rotation-invariant operationally rather than mathematically: per-axis
tests are run on the unrotated NIPALS components, and the loading-side
varimax of \S\ref{sec:methods:rotation} is applied post-inferentially,
so the rotation choice does not enter the test inputs.

\paragraph{Assumptions.}
\begin{enumerate}[label=(A\arabic*),leftmargin=2.4em,itemsep=2pt]
  \item Every training-score matrix
        $\mathbf{T}_{\mathrm{tr},j}$ has full column rank $K$, so
        $\mathbf{T}_{\mathrm{tr},j}^{\!\top}\mathbf{T}_{\mathrm{tr},j}$
        is invertible on every split.
  \item $\mathbf{Q}\!\in\!O(K)$ is fixed across splits --- varimax
        on the full-data loading matrix yields a single $\mathbf{Q}$
        that is reused identically inside every split. This matches
        our implementation; the identity also holds for a
        data-dependent $\mathbf Q_j$ applied to both training and test
        scores within split $j$, leaving the statistic unchanged.
  \item $\mathbb{E}[\mathbf{y}^{2}]\!<\!\infty$ (standard for the
        $t$-test reference).
  \item The split schedule is exchangeable: each fold partition is
        drawn iid from the uniform distribution over 50/50
        partitions of $\{1,\dots,n\}$. This is the split design used
        for the NB approximation, whose correction uses
        $\rho_0/(1-\rho_0)=n_2/n_1$ with
        $\rho_0=n_2/(n_1+n_2)$~\citep{nadeau2003inference}.
\end{enumerate}
Only (A1) and the within-split basis change in (A2) are needed for
the algebraic identity below; (A3)--(A4) describe the testing setup
and do not by themselves establish NB calibration.

\paragraph{Per-split derivation.}
Write $\widetilde{\mathbf{T}}\!:=\!\mathbf{T}\mathbf{Q}$ and fix a
split $j$ with training/held-out partition
$(\mathcal{I}_{j}^{\mathrm{tr}},\mathcal{I}_{j}^{\mathrm{te}})$. The
OLS refit on rotated training scores satisfies
\begin{equation*}
\widetilde{\boldsymbol\beta}_{j}
\;=\;
\bigl(\mathbf{Q}^{\!\top}\mathbf{T}_{\mathrm{tr},j}^{\!\top}
      \mathbf{T}_{\mathrm{tr},j}\mathbf{Q}\bigr)^{-1}
\mathbf{Q}^{\!\top}\mathbf{T}_{\mathrm{tr},j}^{\!\top}
\mathbf{y}_{\mathrm{tr},j}
\;=\;
\mathbf{Q}^{\!\top}\boldsymbol\beta_{j},
\end{equation*}
where the second equality uses
$(\mathbf{Q}^{\!\top}\mathbf{A}\mathbf{Q})^{-1}
 \!=\!\mathbf{Q}^{\!\top}\mathbf{A}^{-1}\mathbf{Q}$
for $\mathbf{Q}\!\in\!O(K)$. Held-out predictions are then identical
pointwise,
\begin{equation*}
\widetilde{\hat{\mathbf{y}}}_{\mathrm{te},j}
\;=\;
\widetilde{\mathbf{T}}_{\mathrm{te},j}\widetilde{\boldsymbol\beta}_{j}
\;=\;
\mathbf{T}_{\mathrm{te},j}\mathbf{Q}\mathbf{Q}^{\!\top}\boldsymbol\beta_{j}
\;=\;
\hat{\mathbf{y}}_{\mathrm{te},j},
\end{equation*}
so the per-split test correlation
$r_{j}\!=\!\operatorname{Cor}(\hat{\mathbf{y}}_{\mathrm{te},j},
                              \mathbf{y}_{\mathrm{te},j})$,
its Fisher transform $z_{j}\!=\!\operatorname{arctanh}(r_{j})$, and
the NB statistic in Eq.~\eqref{eq:nb} (a function of
$\{z_{j}\}_{j=1}^{J}$ alone, with a deterministic
$(1/J\!+\!n_{2}/n_{1})$ rescale) are invariant under $\mathbf{Q}$.
\hfill$\square$

\paragraph{Remark (failure mode under regularised refit).}
Rotation invariance is exact for the OLS refit and survives any
regulariser whose penalty is itself $O(K)$-invariant: isotropic
ridge ($\lambda\|\boldsymbol\beta\|_{2}^{2}$) is in this class because
$\mathbf{Q}^{\!\top}\mathbf{A}\mathbf{Q}\!+\!\lambda\mathbf{I}
=\mathbf{Q}^{\!\top}(\mathbf{A}\!+\!\lambda\mathbf{I})\mathbf{Q}$,
which gives
$\widetilde{\boldsymbol\beta}_{j}\!=\!\mathbf{Q}^{\!\top}\boldsymbol\beta_{j}$
and pointwise-equal predictions exactly as in the OLS case.
Invariance breaks once the penalty is basis-dependent: LASSO and
elastic-net use $\|\boldsymbol\beta\|_{1}$, which is not preserved
under $\mathbf{Q}\!\in\!O(K)$ ($\|\mathbf{Q}^{\!\top}\boldsymbol\beta\|_{1}\!\neq\!\|\boldsymbol\beta\|_{1}$
in general), and an anisotropic ridge with penalty matrix
$\boldsymbol\Lambda\!\neq\!\lambda\mathbf{I}$ inherits the same
basis-dependence. For these regularisers the equality
$\widetilde{\boldsymbol\beta}_{j}\!=\!\mathbf{Q}^{\!\top}\boldsymbol\beta_{j}$
breaks and $t_{\mathrm{NB}}$ is only \emph{approximately} invariant
in the small-penalty limit. Practitioners using basis-dependent
regularisers should compute the NB statistic on the unrotated basis
and report the rotated basis as an interpretation device only --- or
use a rotation-equivariant penalty (isotropic ridge, or group
sparsity over rotated columns). The NB-OLS pairing in this paper
avoids the issue by construction.

\paragraph{What invariance establishes.}
The rotated and unrotated statistics are equal for every dataset and
split schedule, so they have the same null distribution. Any
calibration result for one therefore transfers to the other.
This identity does not establish the $t_{J-1}$ approximation or a
convergence rate: NB uses a plug-in dependence correction
\citep{nadeau2003inference}, whose calibration for the present
statistic is assessed in \S\ref{sec:methods:validity}.

\paragraph{Numerical confirmation (Toy~4).}
On a planted three-axis ground truth with $R^{2}\!=\!0.7$ on real
Warriner geometry,
$|p_{\mathrm{unrot}}-p_{\mathrm{varimax}}|\!=\!0$ (machine zero) and
$|p_{\mathrm{unrot}}-p_{\mathrm{rand{-}orth}}|\!\sim\!10^{-78}$
against a baseline $p\!\sim\!10^{-65}$, across 50 random orthogonal
$\mathbf{Q}$ (Tab.~\ref{tab:toy4_rotation}). Per-seed CSVs in
\texttt{toys/results/toy4\_rotation/}.

\paragraph{Calibration axis: definition and Krylov dimension.}
The body summarizes this construct in \S\ref{sec:methods:plsK};
the definition and its relation to Krylov dimension are below.

\begin{definition}[Calibration axis --- working diagnostic]
\label{def:calibration-axis}
Under univariate $\mathbf{y}$, we call a PLS direction $\mathbf{w}_j$
extracted by deflation a \emph{calibration axis} when, on the data at
hand, the cosine
$|\langle\mathbf{w}_j,\hat{\boldsymbol\beta}_{\mathrm{OLS}}\rangle|/\|\hat{\boldsymbol\beta}_{\mathrm{OLS}}\|$
is small relative to that of the leading axis. NIPALS chooses
$\mathbf{w}_j$ from the deflated predictor--outcome covariance;
alignment with a leading singular vector of the residual design is
not guaranteed. The cosine condition is an empirical diagnostic,
which we report on
real configurations (\S\ref{sec:exp:peraxis-glove}).
\end{definition}

\paragraph{Signal dimension and Krylov saturation are distinct.}
For centered variables with finite second moments, let
$\Sigma_{\mathbf X}\!=\!\operatorname{Cov}(\mathbf x)$ be positive
definite and $\mathbf c\!=\!\operatorname{Cov}(\mathbf x,y)\ne0$.
The population Krylov sequence $\mathcal K_K(\Sigma_{\mathbf X},\mathbf c)$
stabilizes at the number of distinct eigenvalues of $\Sigma_{\mathbf X}$
whose eigenspaces have nonzero projection of $\mathbf c$; the resulting
space contains $\boldsymbol\beta_{\mathrm{OLS}}\!=\!\Sigma_{\mathbf X}^{-1}\mathbf c$
\citep{helland1988pls,phatak2002krylov}.
This number need not equal $\widetilde r$. For example, with
$\Sigma_{\mathbf X}=\left(\begin{smallmatrix}1&1/2\\1/2&1\end{smallmatrix}\right)$
and $y=x_1+\varepsilon$ for independent, mean-zero noise,
$\widetilde r=1$, but $\mathbf c=(1,1/2)^{\top}$ and
$\Sigma_{\mathbf X}\mathbf c=(5/4,1)^{\top}$ are linearly independent,
so saturation requires two components. Signal dimension alone therefore
implies neither saturation at $K=\widetilde r$ nor vanishing projection
of subsequent weights onto the OLS direction. The calibration-axis
reading rests on the finite-sample diagnostics in Toy~1
(Fig.~\ref{fig:calibration-axis}).

\subsection{NIPALS deflation recurrence}
\label{app:nipals}

Concretely, the iteration of Eq.~\eqref{eq:pls1-weight} with
deflation. Initialise $\mathbf{X}_{0}=\mathbf{X}$,
$\mathbf{y}_{0}=\mathbf{y}$. For $k=1,\dots,K$,
\begin{align}
    \mathbf{w}_k &\;\propto\; \mathbf{X}_{k-1}^{\!\top}\,\mathbf{y}_{k-1},
        & \|\mathbf{w}_k\|&=1, \nonumber\\
    \mathbf{t}_k &\;=\; \mathbf{X}_{k-1}\mathbf{w}_k,
        & q_k &\;=\; \frac{\mathbf{t}_k^{\!\top}\mathbf{y}_{k-1}}{\mathbf{t}_k^{\!\top}\mathbf{t}_k},
        \label{eq:plsK-step}\\
    \mathbf{p}_k &\;=\; \frac{\mathbf{X}_{k-1}^{\!\top}\mathbf{t}_k}{\mathbf{t}_k^{\!\top}\mathbf{t}_k},
        & \mathbf{X}_{k} &\;=\; \mathbf{X}_{k-1}-\mathbf{t}_k\mathbf{p}_k^{\!\top},
        \nonumber\\
    & & \mathbf{y}_{k} &\;=\; \mathbf{y}_{k-1}-q_k\,\mathbf{t}_k.\nonumber
\end{align}
NIPALS scores are mutually $\mathbf{X}$-orthogonal,
$\mathbf{T}^{\!\top}\mathbf{T}\!=\!\operatorname{diag}\bigl(\|\mathbf{t}_1\|^{2},\dots,\|\mathbf{t}_K\|^{2}\bigr)$,
which makes the per-axis explained outcome variance partition
additively on the unrotated scores~\citep{hoskuldsson1988pls}; this
fact is invoked by the loading-side varimax recipe of
App.~\ref{app:varimax}.

\subsection{Loading-side varimax: objective, recipe, score orthogonality}
\label{app:varimax}

The body (\S\ref{sec:methods:rotation}) summarizes Kaiser-normalized
varimax against vocabulary target $\mathcal{V}_{\!\mathrm{tgt}}$ as a
post-inferential basis change; the explicit objective, sign/sort
recipe, and the score-orthogonality bookkeeping behind the joint vs.\
per-axis $R^{2}$ statement are below.

\paragraph{Objective.} Rows of the loading matrix
$\mathbf{L}\!=\!\mathbf{E}_{\!\mathcal{V}_{\!\mathrm{tgt}}}\mathbf{W}\!\in\!\mathbb{R}^{V_{\!\mathrm{tgt}}\times K}$
are first $\ell_{2}$-normalized to
$\widetilde{\mathbf{L}}_{w}\!=\!\mathbf{L}_{w}/\|\mathbf{L}_{w}\|_{2}$
(with a $10^{-12}$ floor on the norm), and the rotation
\begin{equation}
    \mathbf{R}^{\!\star}
    \;=\;\arg\max_{\mathbf{R}\in O(K)}
    \sum_{k=1}^{K}\!\Bigl[\,
        \tfrac{1}{V_{\!\mathrm{tgt}}}\!\sum_{w}(\widetilde{\mathbf{L}}\mathbf{R})_{wk}^{4}
        \;-\;
        \bigl(\tfrac{1}{V_{\!\mathrm{tgt}}}\!\sum_{w}(\widetilde{\mathbf{L}}\mathbf{R})_{wk}^{2}\bigr)^{\!2}
    \,\Bigr]
    \label{eq:varimax}
\end{equation}
is then applied to the original (un-normalized) $\mathbf{L}$,
$\mathbf{W}$, and $\mathbf{T}$, producing rotated weights
$\mathbf{W}^{\!\star}\!=\!\mathbf{W}\mathbf{R}^{\!\star}$ and word
loadings $\mathbf{L}^{\!\star}\!=\!\mathbf{L}\mathbf{R}^{\!\star}$.
The objective is invariant under signed permutations of columns;
other optima may also exist. For the solution returned by the
optimizer, we fix column order and signs by sorting axes on
$|\!\operatorname{cor}(\mathbf{t}^{\!\star}_{k},\mathbf{y})|$ and
sign-flipping so the correlation is positive. Each rotated axis loads
on a small set of describable tokens, the column the analyst actually
consults, so every dimension is verbally describable.

\paragraph{Score orthogonality is not preserved by rotation.}
Because $\mathbf{R}^{\!\star}\!\in\!O(K)$, the Krylov-span identity
\eqref{eq:krylov-span} survives the rebasing
($\operatorname{span}(\mathbf{W}\mathbf{R}^{\!\star})\!=\!\operatorname{span}(\mathbf{W})$),
so the joint prediction
$\hat{\mathbf y}\!=\!\mathbf{T}\boldsymbol\beta\!=\!\mathbf{T}^{\!\star}\boldsymbol\beta^{\!\star}$
and $R^{2}_{\mathrm{full}}$ are unchanged. Score orthogonality, by
contrast, is \emph{not} preserved: under
$\mathbf{R}^{\!\star}\!\in\!O(K)$,
$(\mathbf{T}^{\!\star})^{\!\top}\mathbf{T}^{\!\star}
\!=\!{\mathbf{R}^{\!\star}}^{\!\top}\!\operatorname{diag}(\|\mathbf t_k\|^{2})\,\mathbf{R}^{\!\star}$
is diagonal precisely when the columns of $\mathbf{R}^{\!\star}$
lie within eigenspaces of the diagonal score-norm matrix. Thus
signed permutations and arbitrary rotations within blocks of equal
score norms preserve orthogonality; mixing unequal-norm blocks
generally does not. The additive partition
\begin{equation}
    R^{2}_{\mathrm{full}}
    \;=\;\sum_{k=1}^{K}\frac{(\mathbf{t}_{k}^{\!\top}\mathbf{y})^{2}}
              {\|\mathbf{t}_{k}\|^{2}\,\|\mathbf{y}\|^{2}}
    \label{eq:r2-partition}
\end{equation}
therefore holds on the \emph{unrotated} NIPALS scores $\mathbf{t}_k$,
where App.~\ref{app:nipals} gives
$\mathbf{T}^{\!\top}\mathbf{T}\!=\!\operatorname{diag}(\|\mathbf t_k\|^{2})$;
on rotated scores the same per-axis terms generically overshoot by
cross-terms. Per-axis $R^{2}$, when reported, is therefore on the
unrotated basis; rotated axes inherit only the joint
$R^{2}_{\mathrm{full}}$.

\subsection{Setup: document vectors, word loadings, and the PCA+OLS baseline}
\label{app:setup}

This appendix carries the representation recipe that
\S\ref{sec:methods:representations} summarizes in three lines.
We represent texts as points in embedding space by mean-pooling token
vectors, so word loadings and document scores share a single metric
and word-level interpretation of any downstream axis is well-defined.
Let $\mathcal{V}=\{w_1,\dots,w_V\}$ be the vocabulary
and $\mathbf{E}\!\in\!\mathbb{R}^{V\times d}$ a static word-embedding
matrix whose rows $\mathbf{e}_w\!\in\!\mathbb{R}^{d}$ encode each
token. For a corpus of $n$ documents indexed by $i$ with token sequence
$T_i\!\subseteq\!\mathcal{V}$, we form a document vector by mean
pooling,
\begin{equation}
    \mathbf{x}_i \;=\; \frac{1}{|T_i|}\!\sum_{w\in T_i}\mathbf{e}_w,
    \qquad \mathbf{x}_i\in\mathbb{R}^{d}.
    \label{eq:doc-vector}
\end{equation}
Stacking
row-wise gives $\mathbf{X}\!\in\!\mathbb{R}^{n\times d}$, and we let
$\mathbf{y}\!\in\!\mathbb{R}^{n}$ collect the numeric outcome.
$\mathbf{X}$ and $\mathbf{y}$ are mean centered prior to fitting, and
embedding rows are $\ell_2$-normalized so that all geometry is
interpretable as cosine similarity.

Given any axis $\mathbf{a}\!\in\!\mathbb{R}^{d}$, we read its content
from word loadings
$\ell_w\!=\!\mathbf{e}_{w}^{\!\top}\mathbf{a}/\|\mathbf{a}\|$
(extreme-token tables, contrast snippets, document scores via the
same inner product). The 1-D baseline applies this to
$\hat{\boldsymbol\beta}_{\mathrm{PCA}}$, the direction obtained by
projecting $\mathbf{X}$ onto its leading $K$ PCs and fitting OLS in
that subspace with $K$ chosen to maximize out-of-sample $R^{2}$. We
use it as the 1-D reference point and apply the same recipe per
rotated axis $\mathbf{w}^{\!\star}_{k}$ recovered in
\S\ref{sec:methods:rotation}; the joint $R^{2}_{\mathrm{full}}$ is
rotation-invariant, and the additive per-axis partition holds on the
unrotated NIPALS scores (App.~\ref{app:varimax}).

\paragraph{Prior work: OLS-on-embeddings supervised projection.}
\citet{plisiecki2025ssd} introduced an OLS-on-embeddings
supervised projection: regress $\mathbf{y}$ on $\mathbf{X}$ in the
leading-$K$ PC subspace of $\mathbf{X}$ and read the resulting axis
$\hat{\boldsymbol\beta}_{\mathrm{PCA}}$ via word loadings
$\mathbf{e}_{w}^{\!\top}\hat{\boldsymbol\beta}_{\mathrm{PCA}}/\|\hat{\boldsymbol\beta}_{\mathrm{PCA}}\|$.
The construction is lexicon-free (no seed-word anchors) and outcome-supervised, but
its basis is variance-ordered rather than covariance-ordered with $\mathbf{y}$.
\S\ref{sec:methods:pls} replaces the PCA basis with PLS.

\subsection{Cross-lingual VAD: configurations, aggregate fit, and top words per axis}
\label{app:vad-words}

\S\ref{sec:exp:peraxis-glove} reports the cross-lingual result in one
paragraph; this appendix carries the configurations, the fit statistics,
and the word lists behind it.

\paragraph{Configurations and aggregate fit.}
We fit three (language $\times$ embedding) configurations
(Table~\ref{tab:configs}): English GloVe-300 with Warriner V/A/D
ratings, Polish GloVe-800 with ANPW V/A/D, and Spanish GloVe-300 with
Stadthagen-Gonz\'alez V/A. Embedding provenance and vocabulary cleaning
are in App.~\ref{app:embeddings}. Aggregate fit is in
Table~\ref{tab:vadfit}: $\alpha^{*}_{\mathrm{FWER}}\!<\!.001$ in every
cell (per-axis NB on the unrotated $K\!=\!3$ NIPALS extraction;
rotation-invariant per Prop.~\ref{prop:rotation-invariance}), and the
leading rotated axis is a clean affect direction throughout. Spanish
has no rated dominance dimension, so the ES analysis is restricted to V
and A.

\begin{table}[H]
\centering
\caption{Three (language $\times$ embedding) configurations.
Config~1 uses Warriner V/A/D, Config~2 uses ANPW (Polish
counterpart), and Config~3 uses the
Stadthagen-Gonz\'alez~\citep{stadthagen2017spanishnorms} valence/arousal
norms (Spanish has no rated dominance dimension). All fits use
$K\!=\!3$ with varimax on the loading matrix; the rotation target
is the full embedding vocabulary intersected with the rated lexicon
(\S\ref{sec:methods:rotation}, App.~\ref{app:embeddings}).}
\label{tab:configs}
\small
\begin{tabular}{@{}clllr@{}}
\toprule
\textbf{Config} & \textbf{Language} & \textbf{Embedding} & \textbf{Outcome} & $n$ \\
\midrule
1 & English & GloVe-300 (Common Crawl 42B, normalized) & Warriner V/A/D            & 13{,}365 \\
2 & Polish  & GloVe-800 (Dadas, normalized)            & ANPW V/A/D                & 4{,}526  \\
3 & Spanish & GloVe-300 (SBWCE, normalized)            & Stadthagen-Gonz\'alez V/A & 12{,}639 \\
\bottomrule
\end{tabular}
\end{table}

\begin{table}[H]
\centering
\caption{Aggregate fit on the three configurations, $K\!=\!3$, varimax on
loadings. For each rated dimension the first figure is the in-sample
(resubstitution) $r^{2}$ of the full-data fit and the second, in
parentheses, the split-half cross-validated $R^{2}$ at $K\!=\!3$
(mean over the $J\!=\!50$ NB splits;
\texttt{kselect/selection\_sequence.csv}). The largest gap is PL
dominance, 0.611 in-sample against 0.557 cross-validated.
$\alpha^{*}_{\mathrm{FWER}}\!<\!.001$ in every cell (per-axis NB
on the unrotated NIPALS extraction, $J\!=\!50$;
Prop.~\ref{prop:rotation-invariance}). FWER control for this NB
summary remains conditional on per-step validity
(\S\ref{sec:methods:test}); the single-split guarantee in
App.~\ref{app:fwer-proof} does not cover this repeated-split analysis.
Spanish dominance is not reported
because the Stadthagen-Gonz\'alez norms cover valence and arousal only.}
\label{tab:vadfit}
\small
\begin{tabular}{@{}lccc@{}}
\toprule
\textbf{Config}    & \textbf{Valence $r^2$ (CV)} & \textbf{Arousal $r^2$ (CV)} & \textbf{Dominance $r^2$ (CV)} \\
\midrule
EN GloVe-300       & 0.631 (0.619) & 0.370 (0.350) & 0.486 (0.472) \\
PL GloVe-800       & 0.692 (0.652) & 0.617 (0.567) & 0.611 (0.557) \\
ES GloVe-300       & 0.551 (0.539) & 0.447 (0.433) & ---           \\
\bottomrule
\end{tabular}
\end{table}

\paragraph{What the rotated axes load on.}
Rotated Dim-1 recovers the conventional 1-D supervised affect
direction in every V/A/D cell of Configs~1--2 (e.g.\ EN valence
anchored by \texttt{elegant, lovely, fabulous}) and in the V/A
cells of Config~3 (ES valence: \emph{felicidad, feliz, sonrisa,
alegr\'ia, hermosa}; ES arousal: \emph{agresi\'on, explosi\'on,
atentado, masacre}); top-10 lists per pole and per dimension are
in Tables~\ref{tab:vad-combined}--\ref{tab:vad-dim-3} below.
What the calibration axes load on is consistent across languages.
In every cell, at least one $K\!>\!1$ rotated axis carries the same
register contrast: one pole concentrates technical, inanimate, or
descriptive content (EN \emph{industrial, containing, metal}), the
other emotion-laden, morally charged, or declarative content (EN
\emph{blessings, faithful, compassion}). Mechanism, PL/ES top-loading
lists, the four alternative readings ruled out, and the
$k^{*}\!=\!2$ truncation note are in App.~\ref{app:dim3-robustness};
Toy~2 (Fig.~\ref{fig:calibration-axis}C) reproduces the synthetic
counterpart. The interpretive frame is taken up in
\S\ref{sec:disc:scope}.

\paragraph{Sparser axes via direct cardinality.}
Hard-cardinality sparse-PLS (top-$s$ deflation, no rotation) is a
parallel route to readability that yields axes literally supported
on $\le\!s$ features. App.~\ref{app:sparse-pls} reports a
per-$(K, s)$ sweep on Warriner-EN valence; the sparse leading
axis at $s\!=\!100$ ranks ``wonderful, elegant, lovely, \dots''
vs.\ ``homicidal, hateful, inhumane, \dots'', read directly off
$\mathbf{w}_1$.

\paragraph{Top words per axis.}
Table~\ref{tab:vad-combined} gives the top-10 tokens per pole
\emph{across} the three rotated axes (``Combined'' loadings) for
every (rated dimension $\times$ language) cell of Configs~1, 2,
and~3 side-by-side --- a single omnibus leaderboard. The per-axis
breakouts at Dim-1, Dim-2, Dim-3 follow in
Tabs.~\ref{tab:vad-dim-1}--\ref{tab:vad-dim-3} with the same row
layout. EN columns are Config~1 (Warriner GloVe-300); PL columns
are Config~2 (ANPW GloVe-800); ES columns are Config~3
(Stadthagen-Gonz\'alez + SBWCE GloVe-300). Polish and Spanish
cells use a uniform \emph{english (native)} format; English
glosses for Polish are taken from the \texttt{word\_en} field of
\citet{imbir2016anpw} when available and otherwise from Google
Translate (cached in \texttt{applied/\_pl\_en\_translations.json});
Spanish glosses come from Google Translate (cached in
\texttt{applied/\_es\_en\_translations.json}). The
Stadthagen-Gonz\'alez ES norms~\citep{stadthagen2017spanishnorms}
do not include a dominance dimension, so the Dominance group has
EN and PL only. Dim-1 is the conventional 1-D supervised affect
direction in every cell; the cross-lingual Dim-3 calibration story
discussed in \S\ref{sec:exp:peraxis-glove} is visible in the
Valence column group of Tab.~\ref{tab:vad-dim-3}, where one pole
loads on technical/inanimate vocabulary (\emph{industrial,
containing, multiple, storage} on EN; \emph{millimeter, diameter,
marked} on PL; \emph{tubo, roca, madera, vidrio} on ES) and the
other pole on emotion- or commitment-laden vocabulary across all
three languages.

For Config~4 (EN XLM-R type-level) only the leading rotated axis
reads as an affect direction; Dim-2 and Dim-3 surface stable
topical / part-of-speech subspaces (a food/sensory cluster and an
abstract-evaluative-vs-concrete-noun cluster) that recur across
V/A/D with sign flips and that survive rank-1 ABTT. We do not include
the per-axis XLM-R top-word tables in the printed appendix because,
beyond Dim-1, the per-dimension decomposition recovers topical /
part-of-speech structure rather than additional affect axes; the
underlying CSVs ship with the replication package
(\texttt{results/xlmr\_warriner\_rotated/csvs/}).


\subsubsection*{Combined leaderboard (across all three rotated axes)}

\begin{table}[H]
  \centering
  \scriptsize
  \setlength{\tabcolsep}{3pt}
  \caption{Top 10 words per side at the \textbf{Combined} loadings for every (rated dimension $\times$ language) cell of Configs 1, 2, 3 (varimax-rotated PLS, $k=3$). Polish and Spanish cells are formatted uniformly as \emph{english (native)}; Polish glosses come from the \texttt{word\_en} field of \citet{imbir2016anpw} when available and from Google Translate otherwise, Spanish glosses from Google Translate (caches under \texttt{applied/\_*\_en\_translations.json}). The Stadthagen-Gonz\'alez ES norms do not include a dominance dimension, so Dominance has EN/PL only.}
  \label{tab:vad-combined}
  \resizebox{\textwidth}{!}{%
  \begin{tabular}{rl c c c c c c c c}
    \toprule
     &  & \multicolumn{3}{c}{\textbf{Valence}} & \multicolumn{3}{c}{\textbf{Arousal}} & \multicolumn{2}{c}{\textbf{Dominance}} \\
    \cmidrule(lr){3-5} \cmidrule(lr){6-8} \cmidrule(lr){9-10}
    Rank & Side & EN & PL & ES & EN & PL & ES & EN & PL \\
    \midrule
    1 & pos & wonderful & terrific (wspaniały) & enjoy (disfrutar) & violent & brutal / violent (brutalny) & aggression (agresión) & strive & terrific (wspaniały) \\
    2 & pos & enjoy & admire (podziwiać) & pleasant (agradable) & deadly & lust (żądza) & murders (asesinatos) & appreciate & bestow (obdarzyć) \\
    3 & pos & appreciation & wonderful (cudowny) & creativity (creatividad) & brutal & frenzy (szał) & murder (asesinato) & appreciated & admire (podziwiać) \\
    4 & pos & delightful & excellent (wspaniale) & gift (regalo) & insane & terrible (straszliwy) & deadly (mortal) & exceptional & excellent (wspaniale) \\
    5 & pos & fantastic & delight (zachwycić) & beautiful (hermosa) & explosion & cruelty (okrucieństwo) & violent (violento) & dedication & excellent (znakomity) \\
    6 & pos & enjoying & joyful (radosny) & happiness (felicidad) & revenge & rape (gwałt) & attack (ataque) & excellent & mighty / powerful (potężny) \\
    7 & pos & fabulous & praise (pochwalić) & talent (talento) & assault & monstrous (potworny) & attack (atentado) & excellence & skill (umiejętność) \\
    8 & pos & lovely & grace (wdzięk) & happy (feliz) & orgasm & furious (wściekły) & violent (violenta) & enjoy & praise (pochwalić) \\
    9 & pos & enjoyed & thank (dziękować) & beauty (belleza) & shocking & hatred (nienawiść) & assaults (agresiones) & enhance & unusual (niezwykły) \\
    10 & pos & exceptional & beautiful (piękny) & welcome (bienvenida) & hardcore & murder (mord) & attacks (ataques) & wonderful & great (świetny) \\
    \midrule
    1 & neg & worse & horrible (okropny) & serious (graves) & gray & arrange (ułożyć) & gardens (jardines) & caused & pitiful (żałosny) \\
    2 & neg & threatening & monstrous (potworny) & assaults (agresiones) & cottage & sit down (usiąść) & elegant (elegante) & deaths & depressed (przygnębiony) \\
    3 & neg & violent & indictment (oskarżenie) & tortures (torturas) & circular & bowl (miska) & prayer (oración) & severe & helpless (bezradny) \\
    4 & neg & horrible & hideous (ohydny) & murders (asesinatos) & cloth & lay out (wyłożyć) & soft (suave) & threatening & hopeless (beznadziejny) \\
    5 & neg & ugly & impotent (bezsilny) & serious (grave) & container & round (okrągły) & terrace (terraza) & fatal & wretched (nędzny) \\
    6 & neg & severe & accuse (oskarżać) & reported (denunciado) & shelf & arranged (ułożony) & harmony (armonía) & worse & pitifully (żałośnie) \\
    7 & neg & racist & disgusting (obrzydliwy) & executions (ejecuciones) & beside & flat (płaski) & garden (jardín) & causing & misery / hardship (nędza) \\
    8 & neg & nasty & disgusting (wstrętny) & insufficiency (insuficiencia) & shade & transparent (przejrzysty) & leaf (hoja) & horrible & depression (depresja) \\
    9 & neg & abusive & misery / hardship (nędza) & murder (asesinato) & sits & warm (ciepły) & landscape (paisaje) & nightmare & stinking (cuchnący) \\
    10 & neg & criminals & stench (smród) & harassment (acoso) & rows & moss (mech) & seated (sentado) & trapped & drag (wlec) \\
    \bottomrule
  \end{tabular}}
\end{table}

\subsubsection*{Per-axis detail --- Dim-1}

\begin{table}[H]
  \centering
  \scriptsize
  \setlength{\tabcolsep}{3pt}
  \caption{Top 10 words per side at the \textbf{Dim-1} loadings, same conventions as Tab.~\ref{tab:vad-combined}.}
  \label{tab:vad-dim-1}
  \resizebox{\textwidth}{!}{%
  \begin{tabular}{rl c c c c c c c c}
    \toprule
     &  & \multicolumn{3}{c}{\textbf{Valence}} & \multicolumn{3}{c}{\textbf{Arousal}} & \multicolumn{2}{c}{\textbf{Dominance}} \\
    \cmidrule(lr){3-5} \cmidrule(lr){6-8} \cmidrule(lr){9-10}
    Rank & Side & EN & PL & ES & EN & PL & ES & EN & PL \\
    \midrule
    1 & pos & elegant & happily (radośnie) & happiness (felicidad) & violent & drug (narkotyk) & aggression (agresión) & elegant & excellent (wspaniale) \\
    2 & pos & lovely & wonderfully (cudownie) & baby (bebé) & brutal & sexual (seksualny) & explosion (explosión) & wonderful & bestow (obdarzyć) \\
    3 & pos & fabulous & excellent (wspaniale) & happy (feliz) & deadly & sex/ intercourse (seks) & attack (atentado) & fabulous & mighty / powerful (potężny) \\
    4 & pos & delightful & wonderful (cudowny) & smile (sonrisa) & revenge & explosive (wybuchowy) & murders (asesinatos) & lovely & terrific (wspaniały) \\
    5 & pos & exquisite & to squirt (tryskać) & enjoy (disfrutar) & rage & mass (masowy) & assaults (agresiones) & delightful & capable (zdolny) \\
    6 & pos & wonderful & warmth (ciepło) & mum (mamá) & terror & explosion (eksplozja) & infection (infección) & stylish & praise (pochwalić) \\
    7 & pos & gorgeous & thanks (dzięki) & joy (alegría) & insane & alcohol (alkohol) & massacre (masacre) & elegance & appreciate (docenić) \\
    8 & pos & elegance & cuddle / hug (przytulić) & beautiful (hermosa) & violence & brutal / violent (brutalny) & slaughter (matanza) & appreciate & truly (prawdziwie) \\
    9 & pos & beautiful & terrific (wspaniały) & gift (regalo) & assault & popularity (popularność) & attacks (ataques) & superb & power (potęga) \\
    10 & pos & stylish & joyful (radosny) & pleasant (agradable) & killing & explode (wybuchać) & theft (robo) & fantastic & lust (żądza) \\
    \midrule
    1 & neg & violations & depressed (przygnębiony) & executions (ejecuciones) & cottage & honest (poczciwy) & calm (tranquila) & deaths & pitiful (żałosny) \\
    2 & neg & preventing & to suffer (cierpieć) & abandonment (abandono) & tray & patient (cierpliwy) & seated (sentado) & caused & pitifully (żałośnie) \\
    3 & neg & causing & reason (powód) & murders (asesinatos) & linen & patiently (cierpliwie) & calm (tranquilo) & severe & helpless (bezradny) \\
    4 & neg & injuries & bereavement (żałoba) & assaults (agresiones) & container & reverie (zaduma) & calm (calma) & suspected & depressed (przygnębiony) \\
    5 & neg & violation & death (zgon) & serious (graves) & decorative & sit (sieść) & prayer (oración) & fatal & wretched (nędzny) \\
    6 & neg & caused & condemnation (potępienie) & deterioration (deterioro) & shelf & sit down (usiąść) & tranquility (tranquilidad) & infections & drag (wlec) \\
    7 & neg & suspected & misery / hardship (nędza) & thefts (robos) & drawer & thoughtful (zamyślony) & bed (cama) & causing & derelict (opuszczony) \\
    8 & neg & prevent & indictment (oskarżenie) & delays (retrasos) & furnished & lazily (leniwie) & soft (suave) & chronic & insecure (niepewny) \\
    9 & neg & damage & cause (przyczyna) & insufficiency (insuficiencia) & hardwood & politely (grzecznie) & smile (sonrisa) & infection & weak (marny) \\
    10 & neg & damages & complaint (skarga) & tortures (torturas) & circular & speak (przemówić) & grandma (abuela) & toxic & sad (smutny) \\
    \bottomrule
  \end{tabular}}
\end{table}

\subsubsection*{Per-axis detail --- Dim-2}

\begin{table}[H]
  \centering
  \scriptsize
  \setlength{\tabcolsep}{3pt}
  \caption{Top 10 words per side at the \textbf{Dim-2} loadings, same conventions as Tab.~\ref{tab:vad-combined}.}
  \label{tab:vad-dim-2}
  \resizebox{\textwidth}{!}{%
  \begin{tabular}{rl c c c c c c c c}
    \toprule
     &  & \multicolumn{3}{c}{\textbf{Valence}} & \multicolumn{3}{c}{\textbf{Arousal}} & \multicolumn{2}{c}{\textbf{Dominance}} \\
    \cmidrule(lr){3-5} \cmidrule(lr){6-8} \cmidrule(lr){9-10}
    Rank & Side & EN & PL & ES & EN & PL & ES & EN & PL \\
    \midrule
    1 & pos & provide & modest (skromny) & decor (decoración) & ignorant & show (okazywać) & terrible (terrible) & monitoring & decorate (ozdobić) \\
    2 & pos & provided & beautiful (piękny) & design (diseño) & jealous & justify (usprawiedliwić) & deadly (mortal) & component & silver (srebrny) \\
    3 & pos & ensure & beauty (uroda) & ceramic (cerámica) & forgive & willing (skłonny) & guilt (culpa) & components & slender (smukły) \\
    4 & pos & provides & to like (cenić) & architecture (arquitectura) & cried & feelings (uczucia) & kill (matar) & evaluation & gold (złocisty) \\
    5 & pos & providing & admire (podziwiać) & designs (diseños) & rude & cruelty (okrucieństwo) & revenge (venganza) & provides & adorn (zdobić) \\
    6 & pos & optimal & enjoy (cieszyć) & style (estilo) & thou & contempt / scorn (pogarda) & pain (dolor) & provide & admire (podziwiać) \\
    7 & pos & enhance & refined (wytworny) & elaborate (elaborados) & racist & reluctance (niechęć) & fear (miedo) & ensure & blue (błękitny) \\
    8 & pos & ensuring & terrific (wspaniały) & combines (combina) & laughed & sincerity (szczerość) & escape (escapar) & provided & shiny (lśniący) \\
    9 & pos & ensures & solemn (uroczysty) & traditional (tradicional) & ye & compassion (współczucie) & wound (herida) & distribution & golden (złoty) \\
    10 & pos & enable & invite (zapraszać) & musical (musicales) & thee & stupidity (głupota) & kills (mata) & operational & shiny (błyszczący) \\
    \midrule
    1 & neg & ugly & monstrous (potworny) & guilty (culpable) & provides & container (pojemnik) & architecture (arquitectura) & jealous & nasty (przykry) \\
    2 & neg & ignorant & crush (zmiażdżyć) & guilt (culpa) & components & fasten (przymocować) & decor (decoración) & ignorant & depression (depresja) \\
    3 & neg & jealous & stench (smród) & fear (temor) & utilizing & metal (metalowy) & elaboration (elaboración) & pissed & misery / hardship (nędza) \\
    4 & neg & rude & jump away (odskoczyć) & guilty (culpables) & component & sheet metal (blacha) & use (aprovechamiento) & rude & obnoxious (nieznośny) \\
    5 & neg & cruel & yell (zawyć) & aggression (agresión) & integrated & plastic (plastikowy) & existing (existentes) & lonely & terrible (okropna) \\
    6 & neg & pissed & shrill (przeraźliwy) & revenge (venganza) & includes & pipe (rura) & design (diseño) & bitch & horrible (okropny) \\
    7 & neg & angry & reach (dosięgnąć) & rape (violación) & distribution & diameter (średnica) & elaborate (elaborados) & cried & monstrous (potworny) \\
    8 & neg & nasty & to catch (dopaść) & unacceptable (inaceptable) & provide & yellow (żółty) & ceramic (cerámica) & creepy & stress (stres) \\
    9 & neg & racist & hideous (ohydny) & arrest (arresto) & supply & rectangular (prostokątny) & cultural (cultural) & sad & ailment (dolegliwość) \\
    10 & neg & lonely & collapse (runąć) & conviction (condena) & combined & orange (pomarańczowy) & incorporate (incorporar) & fuckin & illness / sickness / disease (choroba) \\
    \bottomrule
  \end{tabular}}
\end{table}

\subsubsection*{Per-axis detail --- Dim-3}

\begin{table}[H]
  \centering
  \scriptsize
  \setlength{\tabcolsep}{3pt}
  \caption{Top 10 words per side at the \textbf{Dim-3} loadings, same conventions as Tab.~\ref{tab:vad-combined}.}
  \label{tab:vad-dim-3}
  \resizebox{\textwidth}{!}{%
  \begin{tabular}{rl c c c c c c c c}
    \toprule
     &  & \multicolumn{3}{c}{\textbf{Valence}} & \multicolumn{3}{c}{\textbf{Arousal}} & \multicolumn{2}{c}{\textbf{Dominance}} \\
    \cmidrule(lr){3-5} \cmidrule(lr){6-8} \cmidrule(lr){9-10}
    Rank & Side & EN & PL & ES & EN & PL & ES & EN & PL \\
    \midrule
    1 & pos & industrial & central (centralny) & support (respaldar) & yummy & monstrous (potworny) & reject (rechazar) & wedding & social (społeczny) \\
    2 & pos & containing & in parallel (równolegle) & consolidate (consolidar) & gourmet & shrill (przeraźliwy) & support (respaldar) & features & range (zakres) \\
    3 & pos & e.g. & marked (oznaczony) & promote (impulsar) & tasty & rage (wściekłość) & deny (negar) & feature & institution (instytucja) \\
    4 & pos & multiple & millimeter (milimetr) & integrate (integrar) & delicious & terrible (straszliwy) & openly (abiertamente) & large & concern (dotyczyć) \\
    5 & pos & storage & diameter (średnica) & promote (promover) & juicy & mad (oszalały) & admit (admitir) & featuring & activity (działalność) \\
    6 & pos & component & thumb (thumb) & organize (organizar) & cakes & shriek (wrzask) & accept (aceptar) & contemporary & assumption (założenie) \\
    7 & pos & includes & transport (transportowy) & support (apoyar) & dessert & horrible (okropny) & renounce (renunciar) & black & realize (realizować) \\
    8 & pos & processing & jpg (jpg) & renew (renovar) & fabulous & hideous (ohydny) & impose (imponer) & flat & economic (ekonomiczny) \\
    9 & pos & distribution & file (plik) & manage (gestionar) & spicy & howl (wycie) & publicly (públicamente) & floral & vital (istotny) \\
    10 & pos & overview & historic (zabytkowy) & grant (otorgar) & creamy & yell (zawyć) & conviction (convicción) & spiral & reference (odniesienie) \\
    \midrule
    1 & neg & smiled & contempt / scorn (pogarda) & pipe (tubo) & silence & modest (skromny) & wood (madera) & courage & bite (gryźć) \\
    2 & neg & blessings & compassion (współczucie) & rock (roca) & darkness & range (zakres) & pipe (tubo) & honesty & scratch (drapać) \\
    3 & neg & bless & pity / mercy (litość) & rocks (rocas) & rest & interesting (interesujący) & leaves (hojas) & compassion & to jerk (szarpać) \\
    4 & neg & sincere & gaiety (wesołość) & brown (marrón) & ignorance & artistic (artystyczny) & brown (marrón) & patience & sweat (spocić) \\
    5 & neg & thankful & amuse (rozbawić) & leaves (hojas) & struggles & education (edukacja) & glass (vidrio) & thy & furiously (wściekle) \\
    6 & neg & faithful & anger (gniewać) & legs (patas) & seems & practical (praktyczny) & rock (roca) & sincere & beat (tłuc) \\
    7 & neg & courage & get angry (złościć) & neck (cuello) & absence & to like (cenić) & legs (patas) & enthusiasm & fired up (rozpalony) \\
    8 & neg & laughs & envy (zazdrościć) & teeth (dientes) & appears & to show (prezentować) & oil (aceite) & virtue & bite (ugryźć) \\
    9 & neg & compassion & sincerity (szczerość) & dust (polvo) & seemingly & education (wykształcenie) & rocks (rocas) & strengthen & spit (splunąć) \\
    10 & neg & smiling & anger (złość) & bones (huesos) & forgotten & recognition (uznanie) & diameter (diámetro) & allah & shriek (wrzasnąć) \\
    \bottomrule
  \end{tabular}}
\end{table}

\subsection{Dim-3 calibration axis: robustness against four counter-hypotheses}
\label{app:dim3-robustness}

The §\ref{sec:exp:peraxis-glove} reading of Dim-3 as a
$\mathbf{y}$-specific calibration axis is the strongest interpretive
claim in the paper. Mechanistically the leading direction
over-predicts for affectively charged words and under-predicts for
technical vocabulary; the calibration axis is the
$\mathbf{y}$-aware $\mathbf{X}$-residual direction that closes that
gap. As stressed at the end of §\ref{sec:related}, this reading is
not part of what NB calibrates and must stand or fall on its own
falsifiability. We treat it as a falsifiable
hypothesis and test it against four counter-readings; Dim-3 is
taken as falsified if any one fits the data strictly better. The
structure mirrors the falsification battery recently proposed for
SAE feature explanations~\citep{ma2025revising}, adapted to the
supervised-projection setting. This appendix walks through each
counter-reading in turn, with the data that addresses it. Throughout we
use the term \emph{register-consistent} in a strictly empirical
sense (an interpretable lexical contrast in which the two poles
of the axis draw from distinct functional / stylistic word classes),
not as a claim about any external register taxonomy.

\paragraph{Setup.} For each language (EN-GloVe-300 on Warriner;
PL-GloVe-800 on ANPW) we fit PLS at $K\!=\!3$ with loading-side
varimax separately for valence, arousal, and dominance on the
\emph{same} $\mathbf{X}$. EN-GloVe (Common Crawl) and PL-GloVe (Polish
Wikipedia, books, articles) draw from substantially non-overlapping
training corpora, so the cross-language replication is not redundant:
the residual shared structure is whatever is invariant under both
training-corpus and rating-instrument differences. The K=3 fit
itself is well-supported across all six (lang, $\mathbf{y}$) cells
($p\!<\!10^{-43}$ via the NB split-half test of
\S\ref{sec:methods:test:tnb}; file
\texttt{applied/results/v1\_multipls\_vad/rotated/csvs/summary.csv}); the
FWER-NB selector picks $K^{*}\!=\!3$ in 5/6 fits (one fit at
$K^{*}\!=\!4$; \texttt{kselect/selection\_optimal.csv}).

\paragraph{CH1 --- ``Dim-3 is corpus structure: any $\mathbf{y}$ fit
on this $\mathbf{X}$ would land on the same axis.''}
If Dim-3 were a leading $\mathbf{X}$-residual SVD direction inherent
to the corpus, fitting valence, arousal, and dominance separately on
the same $\mathbf{X}$ would recover the same Dim-3 axis up to sign:
same vocabulary cluster, near-identical loading direction, and
top-20 word-set Jaccard $\approx\!1$.
Pairwise cosines between the three Dim-3 loading vectors per language
are not the load-bearing measurement here: cosine magnitude alone
collapses sign and fine-grained loading weight, and high-dimensional
embeddings cluster many distinct lexical neighborhoods near any
given direction, so a moderate cosine is consistent with disjoint
top-vocabulary content. We therefore lead with the top-20 word-set
Jaccard between the supervised Dim-3 axes per $\mathbf{y}$ (file
\texttt{rotated/csvs/dim3\_topword\_jaccard\_\{en,pl,es\}.csv},
generated by \texttt{applied/dim3\_disjointness.py}). The
Stadthagen-Gonz\'alez ES norms have no dominance dimension, so ES
contributes only the (V, A) pair.

\begin{table}[H]
\centering\small
\caption{Top-20 Jaccard between supervised Dim-3 vocabularies fit on
the same $\mathbf{X}$ across pairs of $\mathbf{y}$'s. J(+) and J($-$)
are unsigned Jaccards on the positive- and negative-side word sets;
signed-J $\in\![-1,1]$ measures whether two $\mathbf{y}$'s split words
the same way (+1), opposite way ($-1$, sign-flipped axis), or are
disjoint ($\approx\!0$). ES has only the (V, A) pair because the
Stadthagen-Gonz\'alez norms omit dominance.}
\label{tab:dim3-jaccard-acrossy}
\begin{tabular}{@{}llccc@{}}
\toprule
\textbf{lang} & \textbf{pair} & \textbf{J(+)} & \textbf{J($-$)} & \textbf{signed-J} \\
\midrule
EN & Dim-3(V) $\leftrightarrow$ Dim-3(A) & 0.00 & 0.00 & \phantom{$-$}0.00 \\
EN & Dim-3(V) $\leftrightarrow$ Dim-3(D) & 0.00 & 0.25 & \phantom{$-$}0.11 \\
EN & Dim-3(A) $\leftrightarrow$ Dim-3(D) & 0.00 & 0.00 & \phantom{$-$}0.00 \\
\midrule
PL & Dim-3(V) $\leftrightarrow$ Dim-3(A) & 0.00 & 0.00 & \phantom{$-$}0.00 \\
PL & Dim-3(V) $\leftrightarrow$ Dim-3(D) & 0.00 & 0.00 & \phantom{$-$}0.00 \\
PL & Dim-3(A) $\leftrightarrow$ Dim-3(D) & 0.00 & 0.00 & $-$0.07 \\
\midrule
ES & Dim-3(V) $\leftrightarrow$ Dim-3(A) & 0.14 & 0.67 & \phantom{$-$}0.36 \\
\bottomrule
\end{tabular}
\end{table}

The qualitative reading from the per-$\mathbf{y}$ top-20 lists (file
\texttt{rotated/csvs/compare\_\{en,pl,es\}\_dim3\_acrossy.csv})
confirms the same picture in EN and PL: EN-varimax gives a
\emph{technical-vs-sacred} axis on Dim-3(V), an
\emph{archaic-narrative-vs-systemic} axis on Dim-3(A), and a
\emph{ceremonial-vs-virtuous} axis on Dim-3(D); PL shows the
analogous three distinct register splits. ES is a partial exception:
Dim-3(V) and Dim-3(A) both place a concrete-materials cluster
(\emph{tubo, roca, hojas, marr\'on, vidrio, patas, polvo, di\'ametro,
huesos}) on the negative side and an abstract speech-act cluster on
the positive side, giving J($-$)=0.67 with J(+)=0.14 still keeping
the positive-pole vocabularies distinct. Even this largest cell
falls well short of CH1's J$\approx\!1$ prediction. \textbf{CH1
fails}: the recovered Dim-3 depends on which $\mathbf{y}$ is being
supervised, with ES V/A showing a weaker form of the y-specificity
that EN and PL display fully.

\paragraph{CH2 --- ``Varimax-on-noise manufactures
interpretable-looking clusters; the fingerprint is a method
artifact.''}
Varimax rotation can sparsify factor loadings on random data, and
PLS fit on noise will not return a literal zero solution. A
coherent-looking Dim-3 might therefore arise from the combination of
fitting + rotating, regardless of whether $\mathbf{X}$ carries any
signal. We test this with a strict null: replace the embedding
matrix with i.i.d.\ unit-Gaussian vectors of identical shape,
restricted to the rating-CSV vocabulary, real $\mathbf{y}$, same
pipeline, same $K\!=\!3$ + varimax. To pin down empirical Type-I
behavior rather than reading off a single seed, we draw 20 such
random embeddings per ($\mathbf{y}$, rotation) cell on EN (file
\texttt{inference/results/random\_typeI/csvs/random\_typeI.csv}, aggregated from
\texttt{random\_stats.csv}):

\begin{table}[H]
\centering\small
\caption{Empirical Type-I rate at $\alpha\!=\!0.05$ for the NB
split-half global $p$-value over 20 i.i.d.\ random-embedding draws
per cell ($\text{seed}=42 + 1000k + \text{lang\_offset}$,
$k\!=\!0\dots19$). Real $\mathbf{y}$, same pipeline, $K\!=\!3$ +
varimax. Calibrated NB should land near 0.05; substantially higher
flags an inflation under $K\!=\!3$ + varimax.}
\label{tab:random-typeI}
\begin{tabular}{@{}llcccc@{}}
\toprule
\textbf{lang} & \textbf{$\mathbf{y}$} & \textbf{$n_\text{seeds}$} &
\textbf{rate$(p\!<\!.05)$} & \textbf{median $r^2$} & \textbf{median $p$} \\
\midrule
EN & valence   & 20 & 0.00 & 0.022 & 0.567 \\
EN & arousal   & 20 & 0.00 & 0.023 & 0.551 \\
EN & dominance & 20 & 0.00 & 0.023 & 0.519 \\
\bottomrule
\end{tabular}
\end{table}

The corresponding Dim-3 top-20 words on the random-X control are
visibly incoherent --- e.g.\ EN-varimax valence at seed\,$=\!42$
loads \texttt{coleslaw, irrational, moray, cuddle, dissect, dismal,
chipper, mystical, earpiece, infamous\dots} against an equally
arbitrary negative side (file
\bpath{inference/results/random_typeI/csvs/random_compare_en_varimax_dim3_acrossy.csv}).
Top-20 Jaccard between random-control Dim-3 axes across $\mathbf{y}$
is reported in
\bpath{inference/results/random_typeI/csvs/random_dim3_topword_jaccard_en_varimax.csv}
and is expected at chance when no signal supervises the rotation.
\textbf{CH2 fails}: the method does not manufacture coherent
register-consistent axes when fed noise; both signal and supervision are
required.

\paragraph{CH3 --- ``An unsupervised decomposition of $\mathbf{X}$
would recover the same axis.''}
If Dim-3 is corpus structure that supervised PLS happens to surface,
then SVD of the standardized embedding matrix (no $\mathbf{y}$,
no supervision) should recover comparable word clusters in its top
components: in particular, top-20 Jaccard between some unsupervised PC
$\in\!\{1,2,3\}$ and the supervised Dim-3($\mathbf{y}$) close to one,
in at least one $(\mathrm{PC},\mathbf{y})$ cell per language. We
compute this directly: per language, an unsupervised SVD of the
standardized document matrix yields PC1--3, top-20 pos/neg neighbors
per PC are in
\texttt{control/csvs/pca\_\{en,pl,es\}\_topwords.csv}, and signed
top-20 Jaccard between each PC and each supervised Dim-3($\mathbf{y}$)
is reported in
\texttt{control/csvs/pca\_vs\_dim3\_jaccard\_\{en,pl,es\}.csv}
(generated by \texttt{applied/dim3\_disjointness.py}; same metric and
sign convention as Tab.~\ref{tab:dim3-jaccard-acrossy}). ES
contributes only (V, A) since Stadthagen-Gonz\'alez has no dominance.

\begin{table}[H]
\centering\small
\caption{Top-20 Jaccard between each unsupervised PC of $\mathbf{X}$
and each supervised Dim-3($\mathbf{y}$), per language. Same J(+),
J($-$), signed-J convention as Tab.~\ref{tab:dim3-jaccard-acrossy}.
CH3's prediction is a near-one entry in at least one $(\mathrm{PC},
\mathbf{y})$ cell per language. The largest observed value is
$0.29$ (ES PC1 vs Dim-3(V/A)); EN and PL stay at $|\mathrm{signed\text{-}J}|
\!\leq\! 0.14$ throughout.}
\label{tab:pca-vs-dim3-jaccard}
\begin{tabular}{@{}llccc@{}}
\toprule
\textbf{lang} & \textbf{pair} & \textbf{J(+)} & \textbf{J($-$)} & \textbf{signed-J} \\
\midrule
EN & PC1 $\leftrightarrow$ Dim-3(V) & 0.00 & 0.00 & $-$0.08 \\
EN & PC2 $\leftrightarrow$ Dim-3(V) & 0.03 & 0.00 & \phantom{$-$}0.01 \\
EN & PC3 $\leftrightarrow$ Dim-3(V) & 0.00 & 0.00 & \phantom{$-$}0.00 \\
EN & PC1 $\leftrightarrow$ Dim-3(A) & 0.00 & 0.00 & \phantom{$-$}0.00 \\
EN & PC2 $\leftrightarrow$ Dim-3(A) & 0.03 & 0.00 & \phantom{$-$}0.01 \\
EN & PC3 $\leftrightarrow$ Dim-3(A) & 0.03 & 0.00 & \phantom{$-$}0.01 \\
EN & PC1 $\leftrightarrow$ Dim-3(D) & 0.00 & 0.00 & \phantom{$-$}0.00 \\
EN & PC2 $\leftrightarrow$ Dim-3(D) & 0.03 & 0.00 & \phantom{$-$}0.01 \\
EN & PC3 $\leftrightarrow$ Dim-3(D) & 0.03 & 0.00 & \phantom{$-$}0.01 \\
\midrule
PL & PC1 $\leftrightarrow$ Dim-3(V) & 0.00 & 0.00 & $-$0.03 \\
PL & PC2 $\leftrightarrow$ Dim-3(V) & 0.08 & 0.11 & \phantom{$-$}0.10 \\
PL & PC3 $\leftrightarrow$ Dim-3(V) & 0.00 & 0.00 & \phantom{$-$}0.00 \\
PL & PC1 $\leftrightarrow$ Dim-3(A) & 0.00 & 0.00 & \phantom{$-$}0.00 \\
PL & PC2 $\leftrightarrow$ Dim-3(A) & 0.00 & 0.00 & $-$0.05 \\
PL & PC3 $\leftrightarrow$ Dim-3(A) & 0.00 & 0.00 & \phantom{$-$}0.00 \\
PL & PC1 $\leftrightarrow$ Dim-3(D) & 0.00 & 0.00 & \phantom{$-$}0.00 \\
PL & PC2 $\leftrightarrow$ Dim-3(D) & 0.18 & 0.11 & \phantom{$-$}0.14 \\
PL & PC3 $\leftrightarrow$ Dim-3(D) & 0.00 & 0.00 & \phantom{$-$}0.00 \\
\midrule
ES & PC1 $\leftrightarrow$ Dim-3(V) & 0.21 & 0.38 & \phantom{$-$}0.29 \\
ES & PC2 $\leftrightarrow$ Dim-3(V) & 0.00 & 0.00 & \phantom{$-$}0.00 \\
ES & PC3 $\leftrightarrow$ Dim-3(V) & 0.00 & 0.00 & \phantom{$-$}0.00 \\
ES & PC1 $\leftrightarrow$ Dim-3(A) & 0.29 & 0.29 & \phantom{$-$}0.29 \\
ES & PC2 $\leftrightarrow$ Dim-3(A) & 0.00 & 0.03 & \phantom{$-$}0.01 \\
ES & PC3 $\leftrightarrow$ Dim-3(A) & 0.00 & 0.00 & \phantom{$-$}0.00 \\
\bottomrule
\end{tabular}
\end{table}

\textbf{CH3 fails}: no unsupervised PC reproduces the supervised
Dim-3 vocabulary in any (lang, $\mathbf{y}$) cell. EN and PL show
near-disjoint top-20 lists between every PC and every supervised
Dim-3 ($|\mathrm{signed\text{-}J}|\!\leq\!0.14$); ES is the
largest cell ($\mathrm{signed\text{-}J}\!=\!0.29$ for PC1 vs both
Dim-3(V) and Dim-3(A)) and is consistent with the CH1 finding that
the ES supervised Dim-3(V) and Dim-3(A) themselves overlap
($\mathrm{J}(-)\!=\!0.67$, Tab.~\ref{tab:dim3-jaccard-acrossy}) ---
both share the concrete-materials cluster also visible in the
leading unsupervised PC. Even this largest cell falls well short of
the near-one Jaccard CH3 predicts. $\mathbf{y}$-supervision changes
which residual axis is recovered.

\paragraph{CH4 --- ``The three Dim-3 axes share enough geometry that
they're not really separable.''}
This is a partial concession rather than a counter-hypothesis. The
three Dim-3 axes are not perfectly orthogonal in the embedding space,
and reading their similarity off raw cosine magnitudes alone (as
opposed to signed top-vocabulary overlap) can overstate the
overlap. Two factors explain the residual geometric closeness. First,
V/A/D are not orthogonal $\mathbf{y}$'s in the underlying ratings: in
Warriner, valence and dominance correlate
substantially~\citep{warriner2013norms}, so the $K\!=\!3$ PLS
subspaces fit on those $\mathbf{y}$'s overlap, and any residual axis
within those overlapping subspaces inherits a fraction of the shared
component. Second, varimax sets axis signs by an internal
optimization criterion that operates per-fit; cosines therefore
carry a per-fit sign that is not psychologically meaningful, hence a
$|\!\cos|$ summary is the rotation-invariant quantity to compute.

The load-bearing evidence for $\mathbf{y}$-specificity is the
\emph{vocabulary divergence in the top-20 lists}, not the cosine
magnitude. A modest angle in 300-dimensional embedding space is
consistent with completely disjoint local word neighborhoods,
because high-dimensional embeddings cluster many distinct lexical
regions near any given direction. The measured top-20 Jaccards in
Tab.~\ref{tab:dim3-jaccard-acrossy} are the right quantity.

We accept this as a constraint on what the §\ref{sec:exp:peraxis-glove}
claim can support: Dim-3 is $\mathbf{y}$-specific in its lexical
content, but the three axes are not geometrically orthogonal in the
embedding space. The fingerprint is real, and reading it off the
loading vector requires the top-words view, not the cosine alone.

\paragraph{Residual limits of the controls.}
Two concerns remain after the falsifications above, neither fatal but
worth flagging.

\textbf{(i) Unsupervised PC1 partly tracks Warriner-style valence.}
EN PC1 splits negative-affect character traits
(\texttt{insolent, sanctimonious, ungrateful}) from technical /
business vocabulary, structurally similar to supervised Dim-1
(valence). The reason is not that supervision is unnecessary --- the
Warriner vocabulary was sampled to span valence, so the dominant
unsupervised direction in this restricted vocabulary is bound to
align partly with valence. The §\ref{sec:exp:peraxis-glove} claim is
about Dim-3, where supervised and unsupervised axes diverge cleanly
at the vocabulary level (CH3 above); we make no claim about Dim-1
surviving the PCA control.

\textbf{(ii) The fingerprint is qualitative.} The
``technical-vs-sacred'' register is read off a top-20 word list, not
measured against an external register-classification taxonomy. We
do not claim a quantitative effect size for ``register similarity'';
we claim only that the $\mathbf{y}$-supervised axis recovers an
interpretable register that (a) differs across $\mathbf{y}$'s, (b)
does not appear under random embeddings, (c) does not appear in
unsupervised PCs. A stronger version would substitute a
register-classifier prediction or a downstream behavioral
correlate; this is out of scope here.

\clearpage
\subsection{Toy 1--5 mechanism experiments}
\label{app:toys}

Raw outputs in \texttt{toys/results/toy\{1..5\}\_*/}; per-cell
tables are reproduced inline below.
Tab.~\ref{tab:synth-overview} summarizes all seven synthetic
experiments at a glance --- the five mechanism toys of this section
plus the calibration / power benchmarks S1 and S2 of
\S\ref{app:powercalib-full}.

\begin{table}[H]
\centering
\caption{Overview of synthetic experiments. Toys 1--5 (this section)
target mechanism and identifiability claims; S1 and S2
(\S\ref{app:powercalib-full}) target inference calibration and power.
Per-experiment tables and full sweeps in the subsections below
(Toys 1--5) and in \S\ref{app:powercalib-full} (S1, S2).}
\label{tab:synth-overview}
\footnotesize
\setlength{\tabcolsep}{4pt}
\begin{tabular}{@{}p{0.7cm} p{2.4cm} p{2.7cm} p{2.4cm} p{4.4cm}@{}}
\toprule
\textbf{\#} & \textbf{Name} & \textbf{Planted truth} & \textbf{Measured} & \textbf{Headline} \\
\midrule
Toy~1 & Identifiability ceiling
      & $r_\text{plant}\!\in\!\{2,3\}$ directions in $\mathbf{X}\!\to\!\mathbf{y}$
      & Mean principal cosine vs.\ PLS subspace
      & Plateau at $\approx\!1/r_\text{plant}$; only leading axis recovered ($\mathrm{pcos}_1\!\to\!0.86$--$0.98$ at $R^2\!=\!0.7$) \\
\addlinespace[2pt]
Toy~2 & Suppressor fingerprint at $K\!=\!2$
      & Suppressor at $\cos\!=\!\rho$ with main predictor, $\rho\!\in\!\{0,0.3,0.6,0.9\}$
      & $|\!\cos(\mathbf{W}_2^\text{rot},\mathbf{w}_\text{supp})|$; $\mathrm{cor}(\mathbf{t}_2^\text{rot},\mathbf{y})$; NB reject rate $\beta_\text{Dim-2}\!=\!0$
      & NB rejects at mean rate 0.91 even though $|\!\cos|\!\in\![0.04,0.29]$ --- empirical calibration-axis fingerprint \\
\addlinespace[2pt]
Toy~3 & Functional-form invariance
      & Linear-additive $+$ multiplicative interaction
      & $R^2(K{=}1,K{=}2)$; $\cos(\mathbf{W},\mathbf{e}_{D+1})$; Arms A (raw) / B (lifted)
      & Arm A: $K\!>\!1$ does not surface interaction; Arm B: lifted interaction recovered as rank-1 ($\cos\!\to\!0.88$) \\
\addlinespace[2pt]
Toy~4 & Rotation-invariance sanity
      & --- (real Warriner fit)
      & $\Delta(\text{NB global }p)$ under varimax / random orth.\ rotation
      & $\Delta\!=\!0$ (varimax); $\sim\!10^{-78}$ (random orth.)\ vs.\ baseline $p\!\sim\!10^{-65}$ \\
\addlinespace[2pt]
Toy~5 & $K^{*}$ depends on $\Sigma_{\mathbf{X}}$, not $r_{\mathrm{plant}}$
      & Rank-1 sparse signal in $\mathbf{y}$ at $R^2\!=\!0.7$
      & Mode $K^{*}$ from sequential per-step NB
      & $K^{*}\!=\!6$ on GloVe vs.\ $K^{*}\!=\!2$ on iid; predictively justified in both cells; $\Delta R^2_\text{CV}\!\leq\!0.17$ \\
\midrule
S1 & $H_0$ calibration grid
   & $\mathbf{y}\perp\mathbf{X}$
   & Empirical FPR; KS-uniform $p$ of $p$-values
   & NB and perm-$Q^2$ calibrated; \texttt{naive\_t} and \texttt{jack.test} inflated at $n\!=\!40$ \\
\addlinespace[2pt]
S2 & Power vs.\ permutation gold standard
   & $\mathbf{y}\!=\!\mathbf{X}\boldsymbol{\beta}$, controlled $R^2$
   & Detection rate at $\alpha\!=\!0.05$ across $r_\text{plant}\!\times\!\text{geom}\!\times\!\text{regime}$
   & NB $\geq$ perm-$Q^2$ in 8/8 cells; gap $0.01$--$0.25$; NB $\sim\!82\!\times\!/\!61\!\times\!$ faster \\
\bottomrule
\end{tabular}
\end{table}

\paragraph{Toy~1 --- Identifiability ceiling under univariate $\mathbf{y}$.}
The mean principal cosine plateaus at $\approx\!1/r_{\mathrm{plant}}$
across $R^2$ on both geometries
(Tab.~\ref{tab:toy1_pcos}): at $r_{\mathrm{plant}}\!=\!2$, mean pcos
saturates at 0.41--0.45 on GloVe and 0.46--0.51 on iid; at
$r_{\mathrm{plant}}\!=\!3$, at 0.28--0.35 on GloVe and 0.30--0.37 on
iid. Decomposing into per-direction cosines, only the leading axis
is recovered ($\mathrm{pcos}_1 \to 0.85$--$0.98$ as
$R^2 \to 0.7$); subsequent directions hover at
$\mathrm{pcos}_k \approx 0.03$--$0.11$. Increasing $R^2$ raises the
quality of $\mathrm{pcos}_1$ but leaves the orthogonal complement of
$\hat{\boldsymbol\beta}_{\mathrm{OLS}}$ unidentified --- the
cleanest empirical statement of the calibration-axis reading
(\S\ref{sec:methods:test:tnb}).

\begin{table}[H]
\centering
\caption{Toy 1 --- identifiability ceiling at $R^2\!=\!0.7$ (top of
sweep). Mean principal cosine plateaus at
$\approx\!1/r_\text{plant}$; only the leading axis is recovered
($\mathrm{pcos}_1 \to 0.86$--$0.98$), with subdominant directions
hovering at $\mathrm{pcos}_{\geq 2} \approx 0.02$--$0.12$. Krylov
saturation under univariate $\mathbf{y}$ realized empirically. 50 seeds per
$(r_\text{plant}, \text{geom}, R^2)$ cell across $R^2 \in \{0.10,
\ldots, 0.70\}$.}
\label{tab:toy1_pcos}
\small
\begin{tabular}{@{}clrrrr@{}}
\toprule
$r_\text{plant}$ & \textbf{Geometry} & $\overline{\mathrm{pcos}}$ at $R^2{=}0.7$ & $\overline{\mathrm{pcos}_1}$ at $R^2{=}0.7$ & $\overline{\mathrm{pcos}_{\geq 2}}$ range & $1/r_\text{plant}$ ceiling \\
\midrule
2 & GloVe        & 0.449 & 0.856 & $0.037$--$0.049$ & 0.500 \\
2 & iid Gaussian & 0.511 & 0.981 & $0.040$--$0.051$ & 0.500 \\
3 & GloVe        & 0.350 & 0.909 & $0.025$--$0.106$ & 0.333 \\
3 & iid Gaussian & 0.373 & 0.983 & $0.021$--$0.116$ & 0.333 \\
\bottomrule
\end{tabular}
\end{table}

\paragraph{Toy~2 --- Suppressor fingerprint at $K\!=\!2$ (full results).}
Two findings.

\emph{(i)} Varimax-rotated $K\!=\!2$ PLS1 does \textbf{not} recover
the planted suppressor direction: mean
$|\!\cos(\mathbf{W}_2^{\mathrm{rot}},\mathbf{w}_{\mathrm{supp}})|
 \in [0.04, 0.29]$ across all eight cells, with no monotone signal
in $\rho$ (Tab.~\ref{tab:toy2_suppressor}). Under univariate
$\mathbf{y}$ in this linear model, the regression identifies the combined direction
$\mathbf{w}_{\mathbf{y}} \propto
 \beta_{\mathrm{main}}\,\mathbf{w}_{\mathrm{main}}
 - \beta_{\mathrm{supp}}\,\mathbf{w}_{\mathrm{supp}}$
rather than its decomposition into main and suppressor directions.
The rotated Dim-2 need not be orthogonal to this direction or
uncorrelated with $\mathbf y$.

\emph{(ii)} Despite (i), the rotated Dim-2 reproduces the empirical
Dim-3 fingerprint observed cross-lingually in
\S\ref{sec:exp:peraxis-glove} (panel~C of
Fig.~\ref{fig:calibration-axis} plots both quantities against
$\rho$): mean
$\operatorname{cor}(\mathbf{t}_2^{\mathrm{rot}},\mathbf{y})
 \in [0.19, 0.35]$ and NB rejects $\beta_{\mathrm{Dim\text{-}2}}=0$
in 0.80--1.00 of replicates (mean $\approx 0.91$). Dim-2 absorbs
$\mathbf{y}$-aware $\mathbf{X}$-residual variance, not a planted
second direction. The fingerprint is geometry-independent (matched
on GloVe and iid) and survives the $\rho\!=\!0$ control (additive
two-direction with no suppression geometry yields the same
fingerprint), so it is generic rather than mechanism-specific.

\begin{table}[H]
\centering
\caption{Toy 2 --- suppressor fingerprint at $K\!=\!2$. Eight cells
($\rho \in \{0.0, 0.3, 0.6, 0.9\} \times$ geometry), 50 seeds per
cell. $|\cos(\mathbf{W}_2^\text{rot}, \mathbf{w}_\text{supp})|$ shows
no monotone signal in $\rho$ (range $[0.044, 0.293]$); rotated Dim-2
nonetheless carries small but nonzero correlation with $\mathbf{y}$
(range $[0.194, 0.351]$) and NB rejects $\beta_\text{Dim-2}\!=\!0$ at
mean rate $0.91$ across cells.}
\label{tab:toy2_suppressor}
\small
\begin{tabular}{@{}llrrr@{}}
\toprule
\textbf{Geometry} & $\rho$ & $\overline{|\cos(\mathbf{W}_2^\text{rot}, \mathbf{w}_\text{supp})|}$ & $\overline{\mathrm{cor}(\mathbf{t}_2^\text{rot}, \mathbf{y})}$ & \textbf{NB reject (Dim-2)} \\
\midrule
GloVe         & 0.0 & 0.122 & 0.232 & 0.92 \\
GloVe         & 0.3 & 0.049 & 0.194 & 0.84 \\
GloVe         & 0.6 & 0.116 & 0.204 & 0.80 \\
GloVe         & 0.9 & 0.243 & 0.226 & 0.92 \\
\midrule
iid Gaussian  & 0.0 & 0.138 & 0.331 & 0.94 \\
iid Gaussian  & 0.3 & 0.044 & 0.282 & 0.92 \\
iid Gaussian  & 0.6 & 0.131 & 0.299 & 1.00 \\
iid Gaussian  & 0.9 & 0.293 & 0.351 & 0.96 \\
\bottomrule
\end{tabular}
\end{table}

\paragraph{Toy~3 --- Functional-form invariance.}
Under univariate $\mathbf{y}$, $K\!>\!1$ axes do not surface latent
interactions as separate dimensions; Krylov saturation holds even
when the response carries a real non-linearity
(Tab.~\ref{tab:toy3_form}; Arm A:
$\cos(\mathbf{W}_2,\mathbf{w}_2)\!=\!0.05$ at random-direction
baseline). When the interaction is lifted into a column of the
design (Arm B), PLS recovers it as a rank-1 direction
($\cos(\mathbf{W},\mathbf{e}_{D+1})\!=\!0.79$ at $K\!=\!1$,
sharpening to 0.88 at $K\!=\!2$). With Toys~1 and~2 this closes the
obvious alternative readings of $K\!>\!1$ axes (interactions,
suppressors, hidden additive structure).

\begin{table}[H]
\centering
\caption{Toy 3 --- functional-form invariance. Arm A: linear design
with planted interaction
$y = \beta_1 \mathbf{X}\mathbf{w}_1 + \beta_3 \mathbf{X}\mathbf{w}_3
- \beta_2 (\mathbf{X}\mathbf{w}_1)(\mathbf{X}\mathbf{w}_2) + \varepsilon$;
$\beta_1{=}\beta_3{=}0.5$, $\beta_2{=}1.5$. Arm B:
$\tilde{\mathbf{X}} = [\mathbf{X} \mid (\mathbf{X}\mathbf{w}_1)(\mathbf{X}\mathbf{w}_2)]$.
20 seeds, paired across arms and $K$. Arm A: $K{>}1$ cannot surface
the interaction as a separate axis (Krylov saturation); Arm B: PLS
recovers the lifted interaction as a rank-1 direction
($\cos\!\to\!0.88$ at $K\!=\!2$).}
\label{tab:toy3_form}
\small
\begin{tabular}{@{}llrrrr@{}}
\toprule
\textbf{Arm}            & \textbf{Quantity}                          & $K{=}1$ & $K{=}2$ & $\Delta R^2$ & \textbf{NB rej. (Dim-2)} \\
\midrule
A (unaugmented)         & $R^2$                                      & 0.131  & 0.191  & $+0.060$ & --- \\
A                       & $\mathrm{cor}(\mathbf{t}, \mathbf{y})$     & 0.361  & 0.105  & ---      & 0.75 \\
A                       & $\cos(\mathbf{W}_2, \mathbf{w}_2)$         & ---    & 0.049  & ---      & --- \\
\midrule
B (augmented)           & $R^2$                                      & 0.402  & 0.572  & $+0.170$ & --- \\
B                       & $\mathrm{cor}(\mathbf{t}, \mathbf{y})$     & 0.633  & ---    & ---      & --- \\
B                       & $\cos(\mathbf{W}, \mathbf{e}_{D+1})$       & 0.790  & 0.876  & ---      & --- \\
B                       & $\cos(\mathbf{W}_2, \mathbf{w}_1)$         & ---    & 0.031  & ---      & --- \\
B                       & $\cos(\mathbf{W}_2, \mathbf{w}_3)$         & ---    & 0.103  & ---      & --- \\
\bottomrule
\end{tabular}
\end{table}

\paragraph{Toy~4 --- Rotation-invariance sanity.}
Direct numerical test of the rotation-invariance argument
(\S\ref{sec:methods:test:tnb}, App.~\ref{app:proof}): on a single
fitted model, $p$-values are computed before
and after applying (i) varimax to the loadings and (ii) a uniform
random orthogonal rotation. Tab.~\ref{tab:toy4_rotation} reports
numerical equality for varimax
($|p_{\mathrm{unrot}}-p_{\mathrm{varimax}}|\!=\!0$) and
machine-precision agreement for the random rotation
($|p_{\mathrm{unrot}}-p_{\mathrm{rand{-}orth}}|\!\sim\!10^{-78}$
against a baseline $p\!\sim\!10^{-65}$). See also \S\ref{app:proof}.

\begin{table}[H]
\centering
\caption{Toy 4 --- rotation-invariance sanity check.
Single fit on real Warriner; global NB $p$-value computed before and
after applying (i) varimax to the loadings and (ii) a uniform random
orthogonal rotation. Numerical equality for varimax;
machine-precision agreement (delta $\sim 10^{-78}$ against a
baseline of $\sim 10^{-65}$) for the random rotation.}
\label{tab:toy4_rotation}
\small
\begin{tabular}{@{}lccc@{}}
\toprule
\textbf{Comparison} & $p_\text{unrotated}$ & $p_\text{rotated}$ & $|p_\text{unrot} - p_\text{rot}|$ \\
\midrule
varimax vs raw      & $8.886{\times}10^{-65}$ & $8.886{\times}10^{-65}$ & $0$ \\
random-orth vs raw  & $4.860{\times}10^{-65}$ & $4.860{\times}10^{-65}$ & $8.282{\times}10^{-78}$ \\
\bottomrule
\end{tabular}
\end{table}

\paragraph{Toy~5 --- $K^{*}$ vs.\ $\Sigma_{\mathbf{X}}$ at fixed planted rank.}
Tab.~\ref{tab:toy5_kstar}; full discussion in \S\ref{app:overshoot}.

\begin{table}[H]
\centering
\caption{Toy 5 --- $K^{*}$ depends on $\Sigma_{\mathbf{X}}$ at
fixed planted rank, $R^2\!=\!0.70$, $N\!=\!4000$, $D\!=\!300$, 5 seeds
per cell. The 1-SE selector returns mode $K^{*}\!=\!6$ on real
GloVe vs.\ mode $K^{*}\!=\!2$ on iid Gaussian, both at
$r_{\mathrm{plant}}\!=\!1$. $K^{*}$ is the predictively warranted
$K$ in both cells (smaller $K$ are $>$1~SE worse out-of-sample on
GloVe); $K^{*}$ is not an estimator of $r_{\mathrm{plant}}$ ---
under the Krylov geometry of PLS1, predictive $K$ generically exceeds
the planted rank in correlated $\mathbf{X}$.}
\label{tab:toy5_kstar}
\small
\begin{tabular}{@{}lrrrrr@{}}
\toprule
\textbf{Geometry} & $n_\text{features}$ & \textbf{mode $K^{*}$} & \textbf{mean $K^{*}$} & $\mathrm{frac}(K^{*}{=}2)$ & $\Delta R^2_\text{CV}(K{=}2 - K{=}1)$ \\
\midrule
iid Gaussian   &  10 & 2 & 2.2 & 0.8 & 0.066 \\
iid Gaussian   &  30 & 2 & 2.0 & 1.0 & 0.066 \\
iid Gaussian   & 100 & 2 & 2.0 & 1.0 & 0.066 \\
iid Gaussian   & 300 & 2 & 2.0 & 1.0 & 0.065 \\
\midrule
Real GloVe     &  10 & 5 & 5.4 & 0.0 & 0.145 \\
Real GloVe     &  30 & 6 & 6.0 & 0.0 & 0.160 \\
Real GloVe     & 100 & 5 & 6.0 & 0.0 & 0.162 \\
Real GloVe     & 300 & 6 & 5.6 & 0.0 & 0.166 \\
\bottomrule
\end{tabular}
\end{table}

\subsection{Calibration and power benchmarks (full)}
\label{app:powercalib-full}

The body reports two cells per benchmark; this appendix gives the
full coverage matrix and the diagnostic plots that motivate the
``NB exceeds permutation-$Q^2$ in raw power, calibrated under
$H_0$, much faster'' headline (per-cell speed numbers in
\S\ref{app:powercalib-full}~C.2).

\paragraph{C.1 --- $H_0$ calibration (Toy~S1).}
Body Tab.~\ref{tab:cal-combined} reports empirical FPR at nominal
$\alpha\!=\!0.05$ for NB and perm-$Q^2$ on the synthetic and Tecator
cells. The full empirical CDF of $p$-values under $H_0$ for each
test, on real Warriner geometry, is in
\texttt{inference/results/s1\_calibration/}: NB-asymptotic and
permutation-$Q^2$ track the $\mathrm{Uniform}[0,1]$ diagonal in both
synthetic-Warriner cells (KS-uniform $p\!\geq\!0.46$); naive
resampled-$t$ and \texttt{pls::jack.test} deviate to KS-uniform
$p\!<\!10^{-49}$. Tecator full-shape calibration is treated separately
in \S\ref{app:nontext} (KS-uniform $p\!\sim\!10^{-6}$ at $n\!=\!40$
under the same NB-asymptotic statistic; $\alpha\!=\!0.05$ control
preserved). \texttt{jack.test}'s
deviation is in the small-sample tail only --- by $n\!=\!500$ its
empirical FPR returns to $0.045$ but its $p$-value distribution
remains non-uniform, consistent with the maintainer warnings in
\citep{mevik2007pls}.

\paragraph{C.2 --- Detection rate vs.\ effect size (Toy~S2).}
Body Tab.~\ref{tab:s2-power} reports detection rate at the two
headline cells. The full SNR curves (power vs.\ effective $\theta$
on real Warriner geometry, for each of $n\!=\!40$ and $n\!=\!500$,
$\theta\!\in\![0,\theta_\text{max}]$) are in
\texttt{inference/results/s2\_power/}; NB strictly dominates
permutation-$Q^2$ across the entire SNR range, with the gap widening
away from ceiling. The body's calibration-search anchors (NB
needs $n\!=\!38$ for $\sim\!0.8$ power at $\theta\!=\!1.0$, and
$\theta\!=\!0.273$ for $\sim\!0.8$ power at $n\!=\!500$) are
sourced from \texttt{n\_target.json} and \texttt{theta\_target.json}.

\paragraph{C.3 --- Coverage across $r_\text{plant}\!\times\!$ geometry $\times\!$ regime.}
Tab.~\ref{tab:s2_coverage} reports the eight-cell coverage matrix
for $r_\text{plant}\!\in\!\{1,3\} \times$ geometry
$\in\!\{\text{Warriner},\text{iid}\}$ $\times$ regime
$\in\!\{\text{small-}n / \text{hi-eff}, \text{big-}n / \text{noisy}\}$,
with $K_\text{fit}\!=\!1$ throughout. NB $\geq$ permutation-$Q^2$ in
every cell; gap widest (0.13--0.25) when both methods are away from
ceiling, narrowest (0.02) when both saturate. Two cells sit slightly
above the [0.50, 0.95] target operating band ($r_\text{plant}\!=\!1$
iid small-$n$ at NB$=0.96$; $r_\text{plant}\!=\!3$ Warriner big-$n$
at NB$=0.96$); the NB $\geq$ permutation-$Q^2$ ordering is preserved
in both.

\begin{table}[H]
\centering
\caption{S2 coverage matrix --- detection rate at $\alpha\!=\!0.05$
across $r_\text{plant} \in \{1,3\}$ $\times$ geometry
$\in \{\text{Warriner}, \text{iid}\}$ $\times$ regime
$\in \{\text{small-}n / \text{hi-eff}, \text{big-}n / \text{noisy}\}$.
$K_\text{fit}\!=\!1$ throughout. Top two rows: 400 reps (subset of
S2). Other six: 200 reps each. NB $\geq$ permutation-Q$^2$ in every
cell; gap widest (0.13--0.25) when both methods are away from
ceiling, narrowest (0.01) when both saturate.}
\label{tab:s2_coverage}
\small
\begin{tabular}{@{}cclrrr@{}}
\toprule
$r_\text{plant}$ & \textbf{Geometry} & \textbf{Regime}        & \textbf{NB} & \textbf{perm-Q$^2$} & $\Delta$ \\
\midrule
1 & Warriner & $n\!=\!40$ (valence)         & \textbf{0.81} & 0.73 & $+0.08$ \\
1 & Warriner & $n\!=\!500$ (valence)        & \textbf{0.72} & 0.69 & $+0.03$ \\
1 & iid      & small-$n$ / hi-eff           & \textbf{0.96} & 0.71 & $+0.25$ \\
1 & iid      & big-$n$ / noisy              & \textbf{0.70} & 0.63 & $+0.07$ \\
3 & Warriner & small-$n$ / hi-eff           & \textbf{0.89} & 0.76 & $+0.13$ \\
3 & Warriner & big-$n$ / noisy              & \textbf{0.94} & 0.93 & $+0.01$ \\
3 & iid      & small-$n$ / hi-eff           & \textbf{0.86} & 0.65 & $+0.21$ \\
3 & iid      & big-$n$ / noisy              & \textbf{0.71} & 0.64 & $+0.07$ \\
\bottomrule
\end{tabular}
\end{table}

\subsection{\texorpdfstring{$K^{*}$}{K*} tracks predictive rank, not \texorpdfstring{$\widetilde{r}$}{r-tilde}}
\label{app:overshoot}

The \textpkg{} default
\texttt{pls1\_find\_k\_optimal(selector="r2\_se", fwer\_method="split\_nb")}
picks the \emph{smallest} $k\!\le\!k_{\max}$ whose split-half CV-$R^2$
lies within one SE of the maximum --- the standard 1-SE rule of
\citet{hastie2009elements}, which is biased toward parsimony, not
toward inflation. The selector therefore returns $K^{*}$ equal to the
\emph{predictively justified} rank by construction: every $k\!<\!K^{*}$
has CV-$R^2$ at least one SE below the optimum, so each accepted axis
buys real held-out fit. What the selector does \emph{not} return is
$r_{\mathrm{plant}}$, the number of planted directions, or
$\widetilde r$, the dimension of a sufficient linear projection for
the conditional mean. In a scalar linear model the planted directions
combine into one index, while the number of PLS components needed to
recover its coefficient vector depends on the predictor spectrum
(App.~\ref{app:proof}). Truncating an otherwise unchanged
valid sequence at $K^{*}$ preserves FWER control; validity of its NB
tests remains a separate requirement (\S\ref{sec:methods:test:tnb}).
\texttt{split\_nb} and \texttt{raw\_perm} consume the same $K^{*}$.

On real text geometry with rank-1 truth at $R^2\!=\!0.70$
(Tab.~\ref{tab:toy5_kstar}), the default returns mode
$K^{*}\!=\!6$ on GloVe (mean $5.4$--$6.0$, per-step
$\Delta R^2_{\mathrm{CV}}$ ranging from $\!\approx\!0.166$ at
$K\!=\!2$ down to a few hundredths at $K\!=\!6$) vs.\ mode
$K^{*}\!=\!2$ on iid Gaussian $\mathbf{X}$
($\Delta R^2_{\mathrm{CV}} \approx 0.065$ at $K\!=\!2$, then
saturated). The two cells share planted rank $1$ but differ in $K^{*}$
by a factor of three; the gap is geometric in PLS deflation, not
corpus-specific, and is in both cases the predictively warranted
$K$ under the 1-SE rule (smaller $K$ are demonstrably worse
out-of-sample on GloVe, indistinguishable on iid).

Thus $K^{*}$ is a selected PLS model size, not an estimate of
$\widetilde r$. Fixing $K$ a priori from domain considerations
(the route we take in the body, $K\!=\!3$ for V/A/D) specifies a
model size, while \texttt{selector="bic"} uses a penalized-fit
criterion. Neither establishes the conditional-mean dimension.
The earlier recommendation in the
\textpkg{} v0.x docs of using sequential per-step $K^{*}$ as a
model-selection device is withdrawn.

\subsection{Detection power on real embedding geometries}
\label{app:embedding-power}

The synthetic S2 power benchmark of \S\ref{app:powercalib-full} sweeps
the NB-vs-permutation-$Q^{2}$ gap across
$r_\text{plant}\!\times\!$geometry$\!\times\!$regime cells on
Warriner-shaped iid signal. The panel below is the complementary
real-text replication: planted rank-1 SVD signal injected into the
real $\mathbf{X}$ of Configs~1, 2, and~4 (Warriner GloVe-300,
ANPW GloVe-800, EN XLM-R type-level), 200 reps per cell, NB and
perm-$Q^{2}$ at
$\alpha\!=\!0.05$. The headline question is whether the NB $\geq$
perm-$Q^{2}$ ordering of \S\ref{app:powercalib-full} (8/8 synthetic
cells) holds verbatim on the cross-embedding-family panel of
Tab.~\ref{tab:vadfit}, or whether the contextual-embedding cell
(XLM-R) shows a different power profile under the rank-1 ABTT
pipeline.

\begin{table}[H]
\centering\footnotesize
\setlength{\tabcolsep}{3pt}
\caption{Detection power on real embedding geometries with planted
rank-1 SVD signal in real $\mathbf{X}$, 200 reps per cell, $K_\text{fit}\!=\!1$,
$\alpha\!=\!0.05$. Bracketed numbers are Wilson 95\% CIs.
NB strictly beats perm-$Q^{2}$ in 4/6 cells; the two big-$n$ / noisy
cells where both methods saturate near the ceiling are within Wilson
overlap. The EN-XLM-R cell tracks EN-GloVe to within sampling.}
\label{tab:embedding-power}
\begin{tabular}{@{}llrcccc@{}}
\toprule
\textbf{Geometry} & \textbf{Regime} & $n$ &
\textbf{NB power} & \textbf{perm-$Q^{2}$ power} &
\textbf{NB med.\ $t$} & \textbf{perm med.\ $t$} \\
\midrule
EN GloVe-300 & small-$n$ / hi-eff ($R^{2}\!=\!0.40$) &  40 &
\textbf{0.965} {\scriptsize [.930, .983]} & 0.815 {\scriptsize [.755, .863]} & 0.005\,s & 0.223\,s \\
EN GloVe-300 & big-$n$ / noisy ($R^{2}\!=\!0.05$)    & 500 &
0.945 {\scriptsize [.904, .969]} & 0.950 {\scriptsize [.910, .973]} & 0.142\,s & 3.730\,s \\
PL GloVe-800 & small-$n$ / hi-eff ($R^{2}\!=\!0.40$) &  40 &
\textbf{0.865} {\scriptsize [.811, .906]} & 0.620 {\scriptsize [.551, .684]} & 0.014\,s & 0.757\,s \\
PL GloVe-800 & big-$n$ / noisy ($R^{2}\!=\!0.05$)    & 500 &
\textbf{0.870} {\scriptsize [.816, .910]} & 0.820 {\scriptsize [.761, .867]} & 0.517\,s & 28.27\,s \\
EN XLM-R     & small-$n$ / hi-eff ($R^{2}\!=\!0.40$) &  40 &
\textbf{0.965} {\scriptsize [.930, .983]} & 0.800 {\scriptsize [.739, .850]} & 0.014\,s & 0.779\,s \\
EN XLM-R     & big-$n$ / noisy ($R^{2}\!=\!0.05$)    & 500 &
0.980 {\scriptsize [.950, .992]} & 0.975 {\scriptsize [.943, .989]} & 0.501\,s & 26.97\,s \\
\bottomrule
\end{tabular}
\end{table}

\subsection{Inference: power and Tecator non-text replication}
\label{app:nontext}

\subsubsection{S2 --- Power point checks across eight cells}
Eight ($n$, $R^2$, $r_{\mathrm{plant}}$) cells on synthetic
Warriner geometry; 400 reps per cell; NB and perm-$Q^2$ at
$\alpha\!=\!0.05$. The eight-cell ordering is the headline:
NB $\geq$ perm-$Q^2$ on every cell with gap 0.01--0.25, widest in
the noisy regime. The continuous $n$-sweep visualisation is
Fig.~\ref{fig:s3-power-curves}.

\begin{table}[H]
\centering
\caption{Detection rate at $\alpha\!=\!0.05$ (400 reps $\times$ 2
cells). Naive-$t$ (both cells) and \texttt{jack.test} cell A are
non-comparable: those tests are uncalibrated under $H_0$ in those
regimes (FPR 0.3875/0.4075 for naive-$t$, 0.260 for jack-A on the S1
runs), so apparent power is inflated by Type-I leakage;
reported for sanity only.}
\label{tab:s2-power}
\small
\begin{tabular}{@{}lrrr@{}}
\toprule
\textbf{Test}    & \textbf{Cell A power} & \textbf{Cell B power} & \textbf{Median runtime A / B (s)} \\
\midrule
NB                  & \textbf{0.8125} & \textbf{0.715}  & 0.005 / 0.12   \\
Permutation-$Q^2$   & 0.7325          & 0.685           & 0.42 / 7.48    \\
\texttt{jack.test}  & 0.57            & 0.16            & 0.003 / 0.04   \\
Naive $t$           & 0.99            & 0.96            & 0.018 / 0.12   \\
\bottomrule
\end{tabular}
\end{table}

NB exceeds permutation-$Q^2$ in power: under the
normal-approximation sample-size relation
$n_{\mathrm{perm}}/n_{\mathrm{NB}} = \big((z_{1-\alpha} + z_{\pi_{\mathrm{NB}}}) / (z_{1-\alpha} + z_{\pi_{\mathrm{perm}}})\big)^2$
\citep{lehmann2005testing}, permutation-$Q^2$ needs more samples
to reach matching power (one-sided $\alpha\!=\!0.05$).%
\footnote{The sample-size formula uses one-sided $\alpha$ because the
alternative ``the supervised subspace predicts $\mathbf{y}$ better
than chance'' is genuinely one-sided in correlation. Eq.~\eqref{eq:nb}'s
two-sided $p$ is what we report operationally (it lets the test fire
on unexpected sign-flipped predictors); under the dominant one-sided
alternative the wrong-tail mass is negligible and the two-sided test
behaves like a one-sided test at $\alpha$.} NB's
wall-clock cost matches the closed-form fit-count ratio
$K_{\mathrm{CV}}\!\cdot\!P/J$ (NB does $J$ PLS fits, perm-$Q^{2}$
does $K_{\mathrm{CV}}\!\cdot\!P$ at the same $O(Knd)$ per fit, so the
ratio is independent of $n,d$ and linear in $P$): at the
chemometrics defaults ($J\!=\!50$, $K_{\mathrm{CV}}\!=\!5$,
$P\!=\!1000$) the predicted $100\times$ ratio is realized as
$\approx\!82\times$ at $n\!=\!40$ and $\approx\!61\times$ at
$n\!=\!500$ on our hardware (Tab.~\ref{tab:s2-power}). These
chemometrics defaults sit at the low end of cross-field resampling
practice: neuroimaging permutation testing routinely uses
$P\!\geq\!5000$~\citep{smith2009threshold,eklund2016cluster}, and
PLS-SEM bootstrap conventionally runs $\sim\!5000$
resamples~\citep{ray2022seminr}. At those confirmatory standards
($P\!=\!10^{4}$) the fit-count ratio rises to
$K_{\mathrm{CV}}\!\cdot\!P/J\!=\!10^{3}$, so the
$\approx\!50\!-\!100\times$ headline understates the speedup
outside chemometrics. The full
eight-cell coverage matrix
($r_{\mathrm{plant}}\!\in\!\{1, 3\}\times$ geometry
$\{\mathrm{Warriner}, \mathrm{iid}\}\times$ regime
$\{\mathrm{small\text{-}n / hi\text{-}eff},
\mathrm{big\text{-}n / noisy}\}$) is in
Tab.~\ref{tab:s2_coverage}.

\subsubsection{S3 --- Power vs.\ \texorpdfstring{$n$}{n} on synthetic geometries}
The body figure (Fig.~\ref{fig:s3-power-curves}, \S\ref{sec:exp:powercalib})
sweeps $n$ on the synthetic Warriner geometry at $\alpha\!=\!0.05$, comparing
NB to perm-$Q^{2}$ across signal regimes ($R^{2}\!\in\!\{0.05, 0.40\}$) and
planted axes ($r_{\mathrm{plant}}\!\in\!\{1,3\}$). Per-cell tabular outputs are
in \texttt{inference/results/s3\_power\_curves/runs.csv}.

\subsubsection{T3 --- Power vs.\ \texorpdfstring{$n$}{n} on Tecator NIR}
Figure~\ref{fig:t3-power-curves} repeats the S3 sweep on Tecator NIR
to confirm the ordering carries over from synthetic Warriner geometry
to a real chemometric design.
\begin{figure}[H]
\centering
\includegraphics[width=\linewidth]{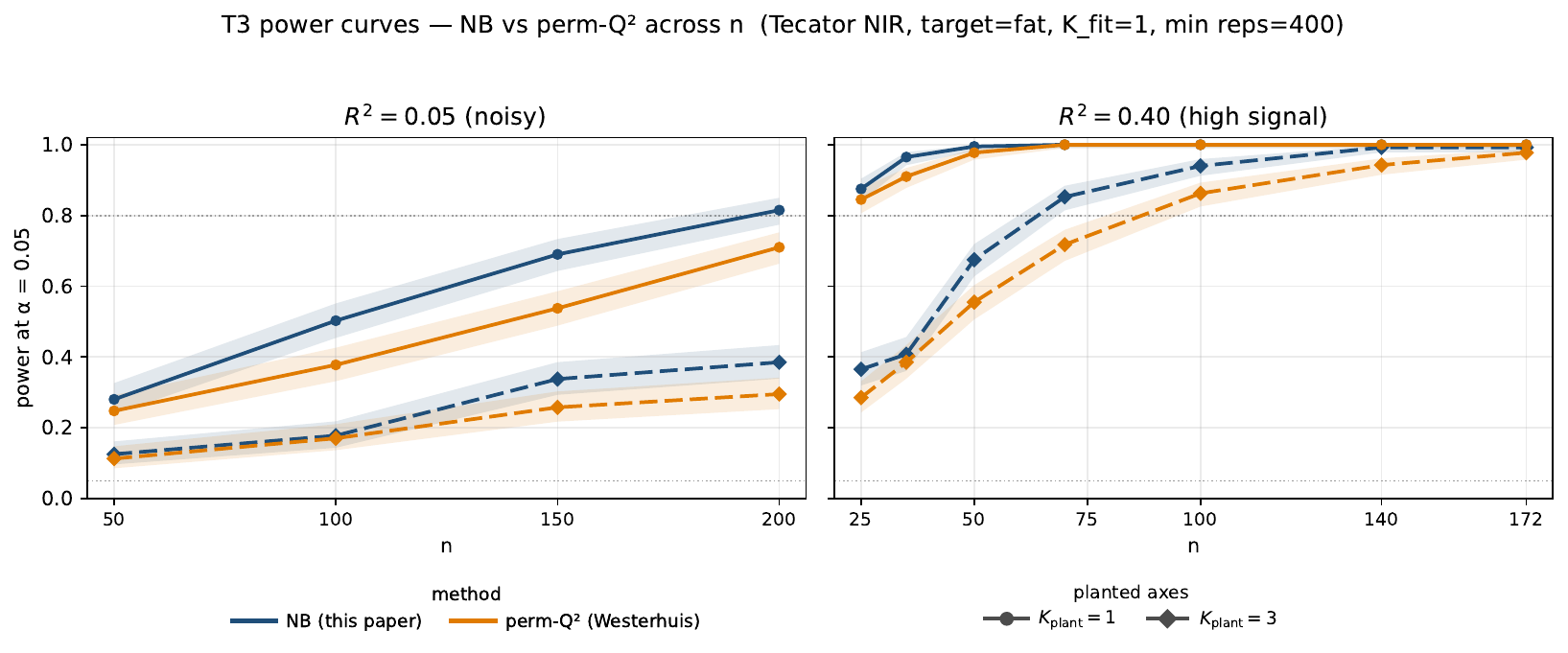}
\caption{Empirical power vs.\ $n$ on Tecator NIR; same style and
conventions as Fig.~\ref{fig:s3-power-curves}. The
$R^2\!=\!0.05$ curve plateaus below 80\% at $n\!=\!200$ for both
NB and perm-$Q^2$; this ceiling is set by the planted signal's
information content (a deliberately weak $R^2\!=\!0.05$ effect),
not by the chemometric design --- both tests track the same
ceiling, which is what this comparison is meant to read.}
\label{fig:t3-power-curves}
\end{figure}

\subsubsection{Sample-size equivalence at matched power}
\label{app:samplered}
Reading Fig.~\ref{fig:s3-power-curves} and
Fig.~\ref{fig:t3-power-curves} at fixed power
$\pi\!\in\!\{0.5, 0.7, 0.8, 0.9\}$ gives matched-power sample
sizes $n_{\mathrm{NB}}(\pi)$ and $n_{\mathrm{perm}}(\pi)$ via
linear interpolation of the empirical curves; the relative
reduction $1\!-\!n_{\mathrm{NB}}/n_{\mathrm{perm}}$ summarizes
how many fewer observations NB needs to match the gold
standard's detection rate. Across all 35 reachable
(cell $\times$ $\pi$) combinations the median reduction is
$\approx\!22\%$, with min $\approx\!5\%$ in flat-power tails of
low-effect cells and max $\approx\!37\%$ at $\pi\!=\!0.8$ in
the $r^{2}\!=\!0.4$, iid-Gaussian regime. Restricting to the
conventional $\pi\!=\!0.8$ target (9 cells where both curves
cross 80\%) the median rises to $\approx\!26\%$ (range
$8\%$--$37\%$). The abstract's $\sim\!20\%$ headline reports
the all-cell median rounded down..

\subsubsection{Calibration shape: NB-asymptotic on highly-collinear designs}
Figure~\ref{fig:pp-calibration} visualises the
FPR-passes-while-shape-fails pattern flagged in
\S\ref{sec:exp:calibration}. Both panels are PP plots of $p$
under $H_0$ at $n\!=\!40$ (400 reps): empirical CDF on the
$y$-axis against the diagonal-uniform reference. On synthetic
iid Gaussian (left) NB-asymptotic and perm-$Q^2$ both track the
diagonal. On Tecator NIR (right) perm-$Q^2$ tracks the diagonal
but NB-asymptotic sags below uniform through the bulk and lifts
above near the upper tail; the local crossing near
$\alpha\!=\!0.05$ explains the FPR pass and the bulk-and-tail
deviation explains the KS fail. The mechanism is the Fisher-$z$ asymptotic
null running into a design that is essentially rank one: on the
column-standardized Tecator $\mathbf{X}$ ($215\!\times\!100$, the same
matrix the test sees) the stable rank
$\|\mathbf{X}\|_F^{2}/\|\mathbf{X}\|_2^{2}$ is 1.014 and the first
principal component carries 98.6\% of the variance
(\texttt{inference/results/r6\_pipelines/csvs/cells.csv}). The
pre-run rule of \S\ref{sec:methods:validity} flags this design
(stable rank below 3) and sends it to the exact NB-permutation test
of \S\ref{sec:methods:exactperm}, which restores uniformity
(App.~\ref{app:rule-grid}).

\begin{figure}[h]
\centering
\includegraphics[width=\linewidth]{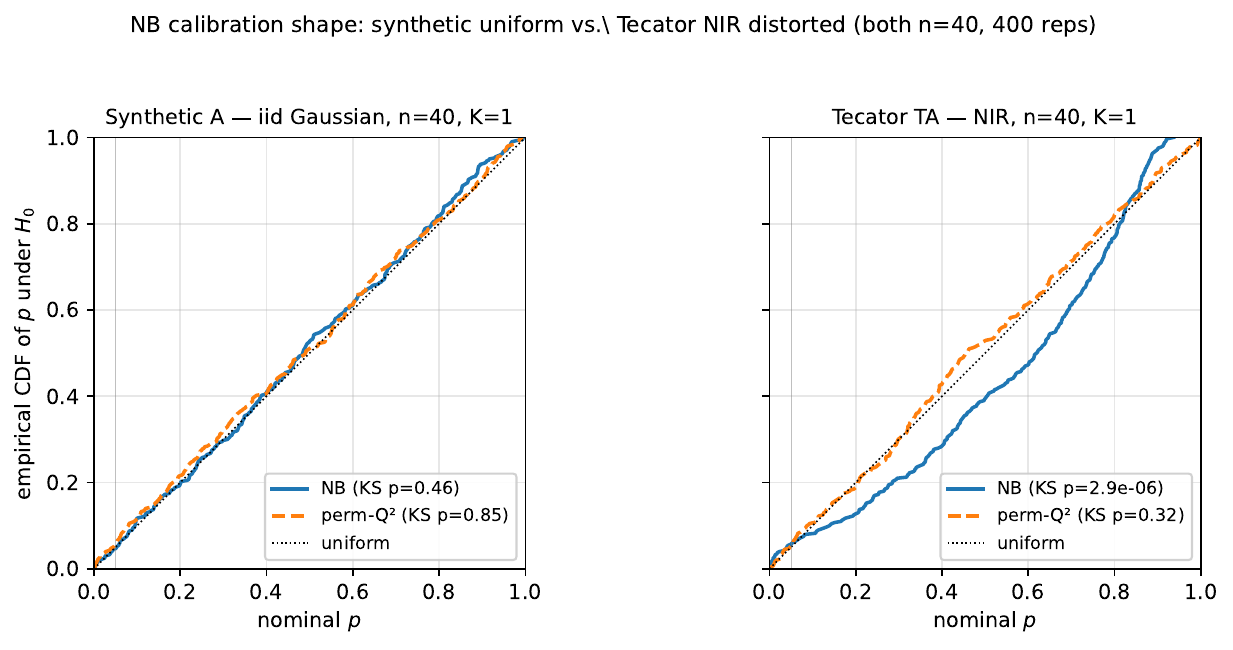}
\caption{PP plot of NB and perm-$Q^2$ $p$-values under $H_0$ on
Synthetic A (iid Gaussian, $n\!=\!40$, $K\!=\!1$; left) and
Tecator TA (NIR, $n\!=\!40$, $K\!=\!1$; right). 400 reps per
panel. Diagonal = uniform-$H_0$ reference; vertical gray line
marks the $\alpha\!=\!0.05$ rejection threshold. NB-asymptotic
(blue solid) tracks uniform on synthetic but deviates strongly on
Tecator (KS $p\!\sim\!10^{-6}$); perm-$Q^2$ (orange dashed) tracks
uniform on both.}
\label{fig:pp-calibration}
\end{figure}

\subsection{Validity rule: the full null grid, the \texttt{plsdof} construction, and cost}
\label{app:rule-grid}

\paragraph{The grid.}
Table~\ref{tab:rule-grid} lists every null cell behind the rule of
\S\ref{sec:methods:validity}; Tab.~\ref{tab:cal-combined} in the body
shows eight of them. Five designs. \emph{iid Gaussian}: a
$4{,}000\!\times\!300$ pool of independent standard normals.
\emph{Block-collinear}: five blocks of 60 columns, each block sharing
one latent factor at pairwise correlation $0.9$. \emph{Dominant
factor}: all 300 columns share one factor at pairwise correlation
$0.9$. \emph{NIR gasoline}: the real $60\!\times\!401$ octane spectra.
\emph{Warriner GloVe-300}: the real Config~1 design matrix. Every rep
draws $n$ rows from the pool; the null is $\mathbf{y}$ drawn
independently of $\mathbf{X}$ on the synthetic pools and a permutation
of the real outcome on the two real designs, so it is true by
construction in both cases. Replicates: 5{,}000 per cell (Warriner
2{,}000). Stable rank is computed by the codebase's own
\texttt{spectrum()} on the column-standardized $\mathbf{X}$ of the
pool at that $n$, i.e.\ on the matrix the test sees. Scripts:
\texttt{inference/r1\_nearsingular.py} (\texttt{run}, then
\texttt{summarise} for \texttt{r1\_calibration.csv}),
\texttt{r4\_breadth.py}, \texttt{r2\_ruler.py}, and
\texttt{r7\_signflip.py} for the sign-flip mechanism
(Tabs.~\ref{tab:signflip-bins}--\ref{tab:signflip-sweep}). Tecator is listed
for completeness: its NB-asymptotic null is the 400-replicate run of
Tab.~\ref{tab:cal-combined} in the submitted version
(\texttt{t1\_tecator/}), and its exact-test null is the 2{,}000-replicate
PLS1 arm of App.~\ref{app:beyond-pls}.

\begin{table}[H]
\centering
\caption{All null cells behind the validity rule. Stable rank
$\|\mathbf{X}\|_F^{2}/\|\mathbf{X}\|_2^{2}$ of the column-standardized
design; \emph{flag} if $n\!<\!25$ or stable rank $<\!3$ (the
\emph{$n$} column marks a flag raised by the sample-size gate alone).
Level is the empirical rejection rate at $\alpha\!=\!.05$ and $.10$;
KS is the Kolmogorov--Smirnov $p$ of the null $p$-values against
Uniform$[0,1]$. 5{,}000 replicates per cell unless marked
($^{\mathrm{w}}$ 2{,}000; $^{\mathrm{t}}$ NB-asymptotic 400, exact
2{,}000). Wilson 95\% half-width at 5{,}000 replicates: $\pm 0.006$ at
$\alpha\!=\!.05$, $\pm 0.008$ at $.10$.}
\label{tab:rule-grid}
\footnotesize
\setlength{\tabcolsep}{4pt}
\begin{tabular}{@{}lrrl rrr rrr@{}}
\toprule
 & & & & \multicolumn{3}{c}{\textbf{NB-asymptotic}} & \multicolumn{3}{c}{\textbf{Exact NB-permutation}} \\
\cmidrule(lr){5-7}\cmidrule(lr){8-10}
\textbf{Design} & $n$ & \textbf{st.\ rank} & \textbf{rule} & $\alpha\!=\!.05$ & $\alpha\!=\!.10$ & \textbf{KS} & $\alpha\!=\!.05$ & $\alpha\!=\!.10$ & \textbf{KS} \\
\midrule
iid Gaussian     &  10 &  7.08 & flag ($n$) & 0.004 & 0.020 & $10^{-70}$ & 0.056 & 0.108 & 0.08 \\
                 &  20 & 12.98 & flag ($n$) & 0.020 & 0.059 & $2{\times}10^{-13}$ & 0.049 & 0.102 & 0.60 \\
                 &  40 & 22.04 & pass & 0.038 & 0.083 & $1.3{\times}10^{-3}$ & 0.048 & 0.096 & 0.81 \\
                 &  80 & 36.07 & pass & 0.044 & 0.091 & 0.23 & 0.051 & 0.099 & 0.77 \\
                 & 160 & 55.28 & pass & 0.050 & 0.100 & 0.72 & 0.052 & 0.106 & 0.97 \\
                 & 320 & 78.83 & pass & 0.050 & 0.099 & 0.11 & 0.047 & 0.099 & 0.20 \\
\midrule
block-collinear  &  10 &  2.63 & flag & 0.017 & 0.048 & $2{\times}10^{-30}$ & 0.054 & 0.106 & 0.11 \\
                 &  20 &  3.11 & flag ($n$) & 0.038 & 0.075 & $10^{-11}$ & 0.051 & 0.096 & 0.55 \\
                 &  40 &  3.80 & pass & 0.050 & 0.095 & $7.6{\times}10^{-6}$ & 0.050 & 0.103 & 0.55 \\
                 &  80 &  4.18 & pass & 0.050 & 0.092 & $1.8{\times}10^{-6}$ & 0.049 & 0.103 & 0.17 \\
                 & 160 &  4.55 & pass & 0.054 & 0.100 & $3.2{\times}10^{-3}$ & 0.050 & 0.101 & 0.25 \\
                 & 320 &  4.78 & pass & 0.052 & 0.095 & $3.0{\times}10^{-4}$ & 0.048 & 0.093 & 0.13 \\
\midrule
dominant factor  &  10 &  1.12 & flag & 0.042 & 0.067 & $10^{-59}$ & 0.053 & 0.104 & 0.87 \\
                 &  20 &  1.13 & flag & 0.039 & 0.067 & $10^{-70}$ & 0.050 & 0.101 & 0.17 \\
                 &  40 &  1.10 & flag & 0.041 & 0.061 & $2{\times}10^{-77}$ & 0.050 & 0.100 & 0.40 \\
                 &  80 &  1.12 & flag & 0.039 & 0.066 & $2{\times}10^{-76}$ & 0.053 & 0.103 & 0.024 \\
                 & 160 &  1.12 & flag & 0.041 & 0.064 & $3{\times}10^{-64}$ & 0.052 & 0.101 & 0.21 \\
                 & 320 &  1.11 & flag & 0.037 & 0.064 & $2{\times}10^{-63}$ & 0.053 & 0.104 & 0.23 \\
\midrule
NIR gasoline     &  20 &  1.40 & flag & 0.048 & 0.079 & $2{\times}10^{-38}$ & 0.049 & 0.102 & 0.77 \\
                 &  30 &  1.41 & flag & 0.047 & 0.074 & $3{\times}10^{-40}$ & 0.052 & 0.100 & 0.18 \\
                 &  60 &  1.39 & flag & 0.049 & 0.077 & $4{\times}10^{-31}$ & 0.050 & 0.098 & 0.92 \\
\midrule
Warriner GloVe-300$^{\mathrm{w}}$ & 20 & 11.14 & flag ($n$) & 0.037 & 0.077 & $8.9{\times}10^{-6}$ & 0.057 & 0.104 & 0.40 \\
                 &  30 & 15.06 & pass & 0.036 & 0.075 & 0.024 & 0.048 & 0.098 & 0.49 \\
                 &  40 & 17.12 & pass & 0.035 & 0.085 & 0.18 & 0.042 & 0.100 & 0.88 \\
                 &  60 & 21.13 & pass & 0.042 & 0.086 & 0.12 & 0.043 & 0.093 & 0.53 \\
\midrule
Tecator NIR$^{\mathrm{t}}$ & 40 & 1.014 & flag & 0.055 & 0.080 & $2.9{\times}10^{-6}$ & 0.052 & 0.094 & 0.13 \\
                 & 172 & 1.015 & flag & 0.048 & 0.080 & $3.4{\times}10^{-3}$ & 0.050 & 0.095 & 0.34 \\
\bottomrule
\end{tabular}
\end{table}

\paragraph{What the rule catches, and what it does not.}
On the 25 cells of the rule's own grid (Tecator excluded), no flagged
cell holds both its $\alpha\!=\!.05$ and its $\alpha\!=\!.10$ level
within the Wilson interval of nominal, and every cleared cell holds
both or is conservative: cleared NB-asymptotic levels span
$0.035$--$0.054$ at $\alpha\!=\!.05$ and $0.074$--$0.100$ at $.10$,
flagged cells have KS $p$ from $9{\times}10^{-6}$ down to
$2{\times}10^{-77}$. Five cleared cells (iid $n\!=\!40, 80$;
Warriner $n\!=\!30, 40, 60$) are conservative enough that the
Wilson interval excludes nominal at $\alpha\!=\!.05$ or $.10$, which
costs power only. KS is not itself the discriminator: two cleared
block-collinear cells ($n\!=\!40, 80$) have KS $p$ of
$7.6{\times}10^{-6}$ and $1.8{\times}10^{-6}$, inside the flagged
range, while holding level at both $\alpha$. The exact test holds
level everywhere: $0.042$--$0.057$ at $\alpha\!=\!.05$,
$0.093$--$0.108$ at $.10$, $0.008$--$0.014$ at $.01$, KS never below
$0.024$. The threshold on stable rank was set on the decaying-spectrum
sweep of \texttt{r2\_ruler.py} (eigenvalues $\propto k^{-a}$,
$a\!\in\!\{0,\dots,6\}$ rotated into the columns, $n$ from 20 to 320,
400 replicates per cell) and then applied unchanged to the five
designs above, none of which was used to set it.

\paragraph{The $\alpha\!=\!.01$ concession.}
On flagged designs NB-asymptotic at $\alpha\!=\!.01$ ranges from
$0.02\times$ nominal (iid, $n\!=\!10$: 0.0002) to $1.9\times$
(dominant factor, $n\!=\!80$: 0.019); on the concentrated designs the
inflation at $.01$ and the conservativeness at $.05$ and $.10$ are one
effect, the shape failure of the next paragraph. On the cleared
block-collinear cells it inflates to
$0.013$--$0.016$ at $n\!\geq\!80$ ($1.3$--$1.6\times$). Its useful
range is therefore $\alpha\!=\!.05$ and $.10$
(\S\ref{sec:disc:limitations}).

\paragraph{Why a concentrated spectrum breaks the reference.}
The NB correction plugs in $\rho_{0}\!=\!n_{2}/(n_{1}+n_{2})\!=\!\tfrac12$ for
the correlation between overlapping splits, giving the variance
term $\rho_0/(1-\rho_0)=n_2/n_1=1$. The \texttt{jeff} arm of
\texttt{r1\_nearsingular.py} (400 replicates per cell) estimates that
correlation from the shrinkage of the across-split variance of $z$
relative to the single-split null variance $1/(n_{2}\!-\!3)$,
$\hat\rho\!=\!\max\{0,\,1-s^{2}(n_{2}\!-\!3)\}$: mean $\hat\rho$ is
$0.48$--$0.50$ on iid Gaussian at $n\!\geq\!80$, $0.45$--$0.48$ on
block-collinear at $n\!\geq\!80$, $0.38$--$0.41$ on the dominant-factor
pool at every $n$, and $0.40$--$0.41$ on NIR gasoline. The
concentrated designs move the correlation \emph{down}, not toward 1,
and \texttt{r7\_signflip.py} shows why. Write the one-factor design as
$\mathbf{X}\!\approx\!\sqrt{\lambda}\,\mathbf{u}\mathbf{v}^{\top}$. The
PLS1 weight on a training half is
$\mathbf{X}_{R}^{\top}\mathbf{y}_{R}\propto\mathrm{sign}(\mathbf{u}_{R}^{\top}\mathbf{y}_{R})\,\mathbf{v}$,
and the OLS refit slope
$\mathbf{t}_{R}^{\top}\mathbf{y}_{R}/\mathbf{t}_{R}^{\top}\mathbf{t}_{R}
=\|\mathbf{X}_{R}^{\top}\mathbf{y}_{R}\|/\mathbf{t}_{R}^{\top}\mathbf{t}_{R}$
is always positive, so the held-out correlation of a split is
$\mathrm{sign}(\mathbf{u}_{R}^{\top}\mathbf{y}_{R})\,
\mathrm{sign}(\mathbf{u}_{T}^{\top}\mathbf{y}_{T})\,
|\mathrm{cor}(\mathbf{u}_{T},\mathbf{y}_{T})|$: positive exactly when
the two halves of $\mathbf{y}$ project onto $\mathbf{u}$ with the same
sign. The two projections sum to the full-sample
$a\!=\!\mathrm{cor}(\mathbf{u}_{1},\mathbf{y})$, so $\bar z$ is a
function of one scalar. Near $a\!=\!0$ the halves must disagree and
every split is negative; at large $|a|$ they agree and every split is
positive; in between the sign flips from split to split. On the
\texttt{col1} construction at $n\!=\!40$, $J\!=\!50$ (4{,}000
replicates, splits held fixed across replicates) the sign of the
held-out correlation equals
$\mathrm{sign}(\mathbf{u}_{R}^{\top}\mathbf{y}_{R}\cdot\mathbf{u}_{T}^{\top}\mathbf{y}_{T})$
on 99.0\% of splits, and binning replicates by $|a|$
(Tab.~\ref{tab:signflip-bins}) shows the three regimes: 47\% of
replicates sit at $|a|\!<\!0.1$ with $\bar z\!<\!0$ and a quarter of
the splits positive, 41\% at $0.1$--$0.25$ with mixed signs and an
across-split spread near the single-split null variance, and 97\% of
the rejections at $\alpha\!=\!.05$ come from the 12\% at
$|a|\!\geq\!0.25$. The null of $t_{\mathrm{NB}}$ is therefore not
$t_{J-1}$ in shape (Tab.~\ref{tab:signflip-quantiles}): its median is
$-0.25$, its $0.90$ and $0.95$ quantiles fall below the reference
(conservative at $\alpha\!=\!.10$ and $.05$), its $0.99$ and $0.999$
quantiles above it (inflated at $.01$). No scalar correction to the
variance repairs a reference of the wrong shape; permuting
$\mathbf{y}$ redraws $a$ and reproduces it, which is why the exact
test holds level on these designs. Sweeping the factor strength
(Tab.~\ref{tab:signflip-sweep}) moves the effect with stable rank:
from stable rank 1.1 to 8 the across-replicate split correlation
rises from $0.33$ to $0.43$, the share of splits whose sign follows
the two projections falls from $0.99$ to $0.77$, the gap between the
absolute and signed cosine of the split weights closes, and the
$\alpha\!=\!.10$ level moves from $0.067$ to $0.084$. The
$\alpha\!=\!.01$ inflation ($0.017$--$0.021$) persists as long as a
factor is present and vanishes only at the iid endpoint, which at
$n\!=\!40$ and $p\!=\!300$ is conservative at every $\alpha$
($0.003/0.029/0.078$), as is the r1 iid cell at that $n$
(Tab.~\ref{tab:rule-grid}). At stable rank 3.3, just above the rule's
threshold, the levels are $0.047$ and $0.077$, on the conservative
side as the rule requires of a cleared cell; at stable rank 1.1 the
sweep reproduces the r1 \texttt{col1} cell at $n\!=\!40$
($0.017/0.044/0.067$ against $0.017/0.041/0.061$ at
$\alpha\!=\!.01/.05/.10$).

\begin{table}[H]
\centering
\caption{One-factor design (\texttt{col1} construction, pairwise
$r\!=\!0.9$), $n\!=\!40$, $J\!=\!50$, 4{,}000 null replicates with the
splits held fixed, binned by $|a|\!=\!|\mathrm{cor}(\mathbf{u}_{1},\mathbf{y})|$.
$s^{2}/\sigma_{0}^{2}$ is the across-split variance of $z$ relative to
the single-split null variance $1/(n_{2}\!-\!3)$; $\hat\rho$ is the
\texttt{jeff} estimate $\max\{0,1-s^{2}/\sigma_{0}^{2}\}$.}
\label{tab:signflip-bins}
\footnotesize
\setlength{\tabcolsep}{4pt}
\begin{tabular}{@{}lrrrrrrrr@{}}
\toprule
$|a|$ & \textbf{share} & $\bar z$ & $s^{2}/\sigma_{0}^{2}$ & \textbf{splits $r_j\!>\!0$} & $\hat\rho$ & \textbf{rej.\ .01} & \textbf{rej.\ .05} & \textbf{rej.\ .10} \\
\midrule
$[0, 0.10)$    & 0.47 & $-0.107$ & 0.38 & 0.25 & 0.63 & 0.000 & 0.000 & 0.000 \\
$[0.10, 0.25)$ & 0.41 & $0.036$  & 0.91 & 0.66 & 0.19 & 0.001 & 0.003 & 0.015 \\
$[0.25, 0.40)$ & 0.11 & $0.271$  & 0.92 & 0.93 & 0.26 & 0.101 & 0.275 & 0.450 \\
$[0.40, 1]$    & 0.01 & $0.485$  & 0.68 & 0.99 & 0.41 & 0.711 & 0.842 & 0.974 \\
\bottomrule
\end{tabular}
\end{table}

\begin{table}[H]
\centering
\caption{Null quantiles of $t_{\mathrm{NB}}$ on the same 4{,}000
replicates against the $t_{49}$ reference the asymptotic test uses.}
\label{tab:signflip-quantiles}
\footnotesize
\setlength{\tabcolsep}{6pt}
\begin{tabular}{@{}lrrrrrr@{}}
\toprule
\textbf{quantile} & 0.50 & 0.80 & 0.90 & 0.95 & 0.99 & 0.999 \\
\midrule
$t_{\mathrm{NB}}$ & $-0.25$ & 0.47 & 0.95 & 1.50 & 2.74 & 3.62 \\
$t_{49}$          & 0.00    & 0.85 & 1.30 & 1.68 & 2.41 & 3.27 \\
\bottomrule
\end{tabular}
\end{table}

\begin{table}[H]
\centering
\caption{Sweep over the pairwise correlation $r$ inside the factor
($n\!=\!40$, $p\!=\!300$, $J\!=\!50$, 2{,}000 null replicates per row,
splits held fixed). $\rho$ is the mean pairwise correlation between
split estimates across replicates; $\hat\rho$ the mean \texttt{jeff}
estimate; $|\cos|$ and $\cos$ the mean absolute and signed cosine
between the weight vectors of two splits; \textbf{sign} the share of
splits whose held-out sign equals
$\mathrm{sign}(\mathbf{u}_{R}^{\top}\mathbf{y}_{R}\cdot\mathbf{u}_{T}^{\top}\mathbf{y}_{T})$.}
\label{tab:signflip-sweep}
\footnotesize
\setlength{\tabcolsep}{4pt}
\begin{tabular}{@{}lrrrrrrrrr@{}}
\toprule
$r$ & \textbf{st.\ rank} & $\rho$ & $\hat\rho$ & $|\cos|$ & $\cos$ & \textbf{sign} & \textbf{lvl .01} & \textbf{lvl .05} & \textbf{lvl .10} \\
\midrule
0.9       & 1.1  & 0.33 & 0.38 & 0.67 & 0.41 & 0.99 & 0.017 & 0.044 & 0.067 \\
0.7       & 1.4  & 0.35 & 0.40 & 0.55 & 0.44 & 0.97 & 0.021 & 0.044 & 0.071 \\
0.5       & 2.0  & 0.35 & 0.39 & 0.50 & 0.46 & 0.94 & 0.017 & 0.045 & 0.077 \\
0.3       & 3.3  & 0.36 & 0.39 & 0.48 & 0.47 & 0.89 & 0.017 & 0.047 & 0.077 \\
0.1       & 8.4  & 0.43 & 0.43 & 0.47 & 0.47 & 0.77 & 0.019 & 0.054 & 0.084 \\
0 (iid)   & 22.2 & 0.43 & 0.44 & 0.47 & 0.47 & 0.58 & 0.003 & 0.029 & 0.078 \\
\bottomrule
\end{tabular}
\end{table}

\paragraph{The \texttt{plsdof} construction.}
\citet{kramer2011dof} supply a degrees-of-freedom estimate for PLS,
shipped as \texttt{plsdof}~\citep{plsdof_pkg}; they do not propose it
as a significance test and never validate its level, so any test built
on it is our construction, not theirs. We built the obvious one: an
omnibus $F$ at $K\!=\!1$ with $\mathrm{DoF}\!-\!1$ numerator and
$n\!-\!\mathrm{DoF}$ residual degrees of freedom, where the fitted DoF
counts the intercept and is capped by the package at
$\min(n\!-\!1, p\!+\!1)$ (\texttt{inference/r1\_plsdof.R}), run on the
same 5{,}000 null draws as the grid above. On the block-collinear pool it holds level
($0.044$--$0.052$ at $\alpha\!=\!.05$). On the dominant-factor pool the
numerator DoF, $\mathrm{DoF}\!-\!1$, is non-positive on 16--23\% of null draws at every $n$, so
no $F$ exists there, and among the draws with a defined $F$ the level is
$0.077$--$0.083$. On iid Gaussian at $n\!=\!10$ the fitted DoF sits at
its ceiling of $n\!-\!1$ on 91\% of draws and the level is $0.0004$; at
$n\!=\!20$ the ceiling share is 49\% and the level $0.007$. Whether
these failures indict the DoF estimate or our wrapper, neither we nor a
reader can say from this construction, which is why it is reported
here and not benchmarked in the body. Cost is the second reason: the
exact DoF pass is $O(n^{3})$~\citep{kramer2009lanczos}, measured at
$0.07$\,s at $n\!=\!500$ and $13.5$\,s at $n\!=\!4{,}000$ on Warriner
($p\!=\!300$), against
$0.03$\,s and $0.34$\,s for NB-asymptotic at $n\!=\!400$ and $3{,}200$;
the $O(n^{2})$ Lanczos approximation trades that for an accuracy that
depends on how fast the eigenvalues of $\mathbf{X}$ decay, so the
calibration question would have to be settled twice.

\paragraph{Cost of the exact test.}
Table~\ref{tab:cost} gives wall-clock per test for NB-asymptotic, the
no-refit exact NB-permutation of \S\ref{sec:methods:exactperm}
($B\!=\!1{,}000$), and perm-$Q^{2}$ ($5$-fold, $1{,}000$
permutations); each is the minimum over repeated passes, first pass
discarded, on a locked clock. With the
splits fixed and $K\!=\!1$ the exact test is two matrix products per
split, so its cost is $\tfrac{1}{4}n^{2}(p+B)$ against
$J\,n\,p$ for NB-asymptotic: the ratio falls toward one as $p$ grows
and rises with $n$. The refit implementation that the submitted
version priced, timed at a reduced budget and rescaled to
$B\!=\!1{,}000$, costs $4.3$\,s at $p\!=\!75$ and $73$\,s at
$p\!=\!1{,}200$ on the same $n\!=\!200$ cells, i.e.\ $50$--$400\times$ the
no-refit form.

\begin{table}[H]
\centering
\caption{Wall-clock seconds per test. Left: $p$ sweep at $n\!=\!200$
(synthetic pool). Right: Warriner GloVe-300 ladder at $p\!=\!300$, with
perm-$Q^{2}$ as a fourth method. Ratios are relative to NB-asymptotic.}
\label{tab:cost}
\footnotesize
\setlength{\tabcolsep}{4pt}
\begin{tabular}{@{}rrrr@{\hspace{1.4em}}rrrrrr@{}}
\toprule
$p$ & \textbf{NB} & \textbf{exact} & \textbf{ratio} & $n$ & \textbf{NB} & \textbf{exact} & \textbf{ratio} & \textbf{perm-$Q^{2}$} & \textbf{ratio} \\
\midrule
   75 & 0.0042 & 0.088 & 20.8 &   400 & 0.033 &  0.28 &  8.6 &   3.8 & 117 \\
  150 & 0.0082 & 0.094 & 11.4 &   800 & 0.068 &  0.96 & 14.1 &   7.9 & 116 \\
  300 & 0.0166 & 0.106 &  6.4 & 1{,}600 & 0.144 &  2.14 & 14.8 &  17.3 & 120 \\
  600 & 0.0339 & 0.131 &  3.9 & 3{,}200 & 0.344 &  4.32 & 12.6 &  43.0 & 125 \\
1{,}200 & 0.0705 & 0.180 &  2.5 & 6{,}400 & 0.779 & 10.8 & 13.8 &  89.9 & 115 \\
      &        &       &      & 13{,}365 & 3.22 & 24.0 &  7.5 & 387.7 & 120 \\
\bottomrule
\end{tabular}
\end{table}

\subsection{Per-component FWER: proof for one split with train-only deflation}
\label{app:fwer-proof}

The fixed-sequence argument needs validity at the first true null,
even when earlier components carry signal. We establish that property
for one independent training/test split, the data-splitting
construction of \citet{cox1975datasplitting}: all components are fitted on
the training half, and each test asks whether the next score is
positively correlated with the residual of the preceding fitted
components. Conditioning on training makes both the component order
and these hypotheses fixed. The guarantee below is finite-sample
under Gaussianity; it does not cover the repeated-split NB statistic
or full-data deflation.

\paragraph{Setup.}
Let $(\mathbf{x}_i,y_i)\sim P$, $i=1,\dots,n$. Choose a partition
$(\mathcal A,\mathcal B)$ independently of the data, with
$|\mathcal A|=n_1$ and $|\mathcal B|=n_2$, and hold it fixed.
Write $D_{\mathcal A}=\{(\mathbf x_i,y_i):i\in\mathcal A\}$ for the
training half. On $D_{\mathcal{A}}$, standardize
$\mathbf{X}$ and $\mathbf{y}$ with the training means and standard
deviations and run $K$ NIPALS steps (App.~\ref{app:nipals}), giving
$(\mathbf{w}_k,\mathbf{p}_k,q_k)_{k\le K}$. The same recurrence,
applied to a single row, defines train-built scores and residuals for
any row $(\mathbf{x},y)$: with $\tilde{\mathbf{x}}$, $\tilde y$ the row
standardized by the training statistics, set
$\mathbf{x}_{0}\!=\!\tilde{\mathbf{x}}$, $e_{0}\!=\!\tilde y$ and, for
$k\!=\!1,\dots,K$,
\begin{equation*}
  t_{k}\!=\!\mathbf{x}_{k-1}^{\!\top}\mathbf{w}_{k},\qquad
  \mathbf{x}_{k}\!=\!\mathbf{x}_{k-1}-t_{k}\,\mathbf{p}_{k},\qquad
  e_{k}\!=\!e_{k-1}-q_{k}\,t_{k}.
\end{equation*}
So $e_{h-1}\!=\!\tilde y-\sum_{k<h}q_{k}t_{k}$ is the residual of the
$(h\!-\!1)$-component prediction fitted on $D_{\mathcal{A}}$, and
$t_{h}$ is the score of component $h$. Once $D_{\mathcal{A}}$ is fixed,
both are fixed affine functions of $(\mathbf{x},y)$. For a fresh row
from $P$ define
\begin{equation*}
  \rho_{h}(D_{\mathcal{A}})
  \;:=\;
  \operatorname{Cor}_{P}\!\bigl(t_{h},\,e_{h-1}\bigr).
\end{equation*}

\paragraph{Hypotheses.}
For $h\!=\!1,\dots,K$,
\begin{equation*}
  H^{\mathcal{A}}_{0,h}:\ \rho_{h}(D_{\mathcal{A}})\le 0 .
\end{equation*}
The sign is set on training data: with
$\mathbf{w}_{h}=\mathbf{X}_{h-1}^{\!\top}\mathbf{y}_{h-1}/
\|\mathbf{X}_{h-1}^{\!\top}\mathbf{y}_{h-1}\|$, NIPALS gives
$q_h>0$. The null concerns the population correlation of this
train-fitted score with the remaining residual; it does not assert
that all population signal has been removed.
$H^{\mathcal{A}}_{0,1}$ is implied by the omnibus null
$y\perp\mathbf{x}$, under which $\rho_{1}\!=\!0$ for every
$D_{\mathcal{A}}$.

\paragraph{Per-step test.}
On the held-out rows $i\!\in\!\mathcal{B}$, let $r_{h}$ be the sample
correlation of the pairs $(t_{h}(\mathbf{x}_i),\,e_{h-1}(\mathbf{x}_i,y_i))$,
$T_{h}\!=\!r_{h}\sqrt{n_{2}-2}/\sqrt{1-r_{h}^{2}}$, and
$p_{h}\!=\!\Pr(t_{n_{2}-2}\ge T_{h})$. At a prespecified
$\alpha\in(0,1)$, the chain rejects
$H^{\mathcal{A}}_{0,1},\dots,H^{\mathcal{A}}_{0,k^{*}}$, where $k^{*}$
is the number of leading $p_{h}\!\le\!\alpha$ (stop at the first
$p_{h}\!>\!\alpha$).

\paragraph{Assumptions.}
\begin{enumerate}[label=(C\arabic*),leftmargin=2.4em,itemsep=2pt]
  \item The rows are iid from $P$.
  \item The partition is chosen independently of the data, with
        $n_{1}\!>\!K$ and $n_{2}\!\geq\!3$. Results are stated on the
        event that NIPALS on $D_{\mathcal{A}}$ runs $K$ steps with
        nonzero weights.
  \item $(\mathbf{x},y)\sim P$ is jointly Gaussian with positive
        definite covariance.
\end{enumerate}

\paragraph{Lemma (per-step validity given the training half).}
\emph{Under (C1)--(C3), for every $h$ and almost every $D_{\mathcal{A}}$ with
$\rho_{h}(D_{\mathcal{A}})\le 0$,
$\Pr(p_{h}\le\alpha\mid D_{\mathcal{A}})\le\alpha$, with equality at
$\rho_{h}\!=\!0$.}

\emph{Proof.} Given $D_{\mathcal{A}}$, the held-out rows are iid from
$P$ and independent of $D_{\mathcal{A}}$ (C1, C2), and $t_{h}$,
$e_{h-1}$ are fixed affine maps, so the $n_{2}$ pairs are iid
bivariate Gaussian with correlation $\rho_{h}$ (C3; positive
definiteness and the nonzero scores ensure nondegeneracy).
Condition further on the held-out scores. The Gaussian regression of
$e_{h-1}$ on $t_h$ has slope
$b=\operatorname{Cov}_{P}(t_h,e_{h-1})/\operatorname{Var}_{P}(t_h)$
and residual variance $\sigma^2>0$. The correlation statistic is
the usual slope $t$-statistic, with representation
\begin{equation*}
 T_h=\frac{Z+\delta}{\sqrt{U/(n_2-2)}},\qquad
 \delta=\frac{b}{\sigma}
       \sqrt{\sum_{i\in\mathcal B}(t_h(\mathbf x_i)-\bar t_h)^2},
 \quad Z\sim N(0,1),\quad U\sim\chi^2_{n_2-2},
\end{equation*}
where $\bar t_h$ is the held-out score mean and $Z$ and $U$ are
independent. Under the null, $b\le0$, hence
$\delta\le0$: its upper-tail rejection probability is no greater
than at $\delta=0$, where $T_h\sim t_{n_2-2}$ exactly.
Averaging over the held-out scores proves the claim.
\hfill$\square$

\begin{proposition}[Strong FWER, one split, train-only deflation]
\label{prop:fwer-one-split}
Under (C1)--(C3), let $V$ be the number of true
$H^{\mathcal A}_{0,h}$ rejected by the chain. Then, almost surely,
\begin{equation*}
  \Pr(V>0\mid D_{\mathcal A})\le\alpha,
  \qquad\text{and hence}\qquad \Pr(V>0)\le\alpha.
\end{equation*}
\end{proposition}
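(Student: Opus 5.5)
The plan is the standard fixed-sequence argument, run conditionally on the training half. Once $D_{\mathcal A}$ is fixed, we use the lemma's per-step validity at the first true null. Nothing about dependence between steps is needed.

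\textbf{Step 1: everything that matters is fixed by the training half.} Conditional on $D_{\mathcal A}$, the NIPALS quantities $(\mathbf w_k,\mathbf p_k,q_k)_{k\le K}$ are fixed, so each score $t_h$ and residual $e_{h-1}$ is a fixed affine map. The correlations $\rho_h(D_{\mathcal A})$ are therefore deterministic, and so is the set
\[
\mathcal T=\{h:\rho_h(D_{\mathcal A})\le 0\}
\]
of true hypotheses. We work on the almost-sure event where (C2) holds and the lemma applies.

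\textbf{Step 2: reduce to the first true null.} If $\mathcal T=\emptyset$, then $V=0$ and there is nothing to prove. Otherwise let $h_0=\min\mathcal T$, which is fixed given $D_{\mathcal A}$. The chain rejects exactly the hypotheses $1,\dots,k^{*}$ and stops at the first $p_h>\alpha$. Hence any rejection of a true null $h\in\mathcal T$ has $h\le k^{*}$, so $h_0\le h\le k^{*}$ and $H^{\mathcal A}_{0,h_0}$ is rejected as well. This gives the event inclusion
\[
\{V>0\}\subseteq\{p_{h_0}\le\alpha\}.
\]

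\textbf{Step 3: apply the lemma and integrate.} The lemma applies because $\rho_{h_0}\le 0$, and it gives $\Pr(p_{h_0}\le\alpha\mid D_{\mathcal A})\le\alpha$. So $\Pr(V>0\mid D_{\mathcal A})\le\alpha$ almost surely. Taking expectation over $D_{\mathcal A}$ yields the unconditional bound $\Pr(V>0)\le\alpha$.

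\textbf{Main obstacle: measurability of the random index.} The delicate point is that $h_0$ is a random index, since it depends on $D_{\mathcal A}$. The lemma must therefore be invoked at a data-dependent $h$. This is legitimate because the lemma holds for every $h$ and almost every $D_{\mathcal A}$, with only finitely many $h\le K$. We can write
\[
\Pr(V>0\mid D_{\mathcal A})\le\sum_{h}\mathbf 1\{h=h_0(D_{\mathcal A})\}\,\Pr(p_h\le\alpha\mid D_{\mathcal A}).
\]
The null sets discarded for each $h$ form a finite union, which is still null. The same finite-union argument takes care of the event in (C2) on which NIPALS does not complete $K$ steps with nonzero weights.
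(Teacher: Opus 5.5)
Your proof is correct and follows the paper's own argument: condition on $D_{\mathcal A}$ so the hypotheses and their order are fixed, note that any false rejection forces $p_{h_0}\le\alpha$ at the first true null, apply the Lemma there, and average over the training half. Your handling of the data-dependent index $h_0$ (a finite union of null sets over $h\le K$) spells out what the paper leaves implicit. One small correction: the event in (C2) that NIPALS completes $K$ steps with nonzero weights is a standing restriction on which the result is stated, not in general a full-measure event; for example, it cannot hold when $K$ exceeds the number of predictors. You should therefore restrict to that event rather than discard its complement as null.
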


\begin{proof}
Condition on $D_{\mathcal A}$, which fixes the hypotheses and their
order. If none is true, $V=0$. Otherwise let $h_0$ index the first
true null. Because the chain stops at the first non-rejection, any
false rejection requires $p_{h_0}\le\alpha$. By the Lemma,
\begin{equation*}
 \Pr(V>0\mid D_{\mathcal A})
 \le \Pr(p_{h_0}\le\alpha\mid D_{\mathcal A})
 \le\alpha.
\end{equation*}
This bound needs no independence between tests and does not condition
on earlier rejections. Averaging over $D_{\mathcal A}$ gives the
unconditional bound: the fixed-sequence
principle~\citep{westfall2001optimally,goeman2010sequential} applied
conditionally on training.
\end{proof}

\paragraph{Without Gaussianity.}
For the different null $t_h\perp e_{h-1}$ under $P$, a permutation
test that permutes the held-out residuals has finite-sample validity
conditional on $D_{\mathcal A}$ under (C1)--(C2): independence makes
their pairing with the scores exchangeable. Using all permutations,
or the usual $(1+\#\{\text{permuted statistics}\ge\text{observed}\})/(B+1)$
formula for $B$ uniform random permutations, gives super-uniform
$p$-values, and the same fixed-sequence proof applies.
This changes the inferential target: without Gaussianity, the
permutation test does not in general test $\rho_h\le0$, and rejection
of independence need not imply positive correlation.

\paragraph{What a rejection says.}
Rejection provides evidence of positive population correlation with
the remaining residual. To connect this to prediction, define
$R_h(c)=\min_a\mathbb E_P[(e_{h-1}-a-c\,t_h)^2]$. Then
\begin{equation*}
 R_h(c)-R_h(0)
 =c^2\operatorname{Var}_P(t_h)
  -2c\operatorname{Cov}_P(t_h,e_{h-1}).
\end{equation*}
Thus positive correlation means that some positive coefficient
reduces population squared error, with the intercept reoptimized.
It does not guarantee improvement at the particular training
coefficient $q_h$. Nor does the rejection count estimate the
population signal rank $\widetilde r$: even after $\widetilde r$
fitted components, residual signal can remain and make a later
rejection correct (\S\ref{sec:methods:plsK}).

\paragraph{What is not covered.}
(i)~The $J$-split NB average of Eq.~\eqref{eq:nb}. Each split has its
own fitted components and so its own null, and the held-out rows of one
split are training rows of the others. Conditioning on every training
half conditions on essentially the whole sample, which leaves no fresh
data for the argument above. (ii)~Full-data deflation. The first $h\!-\!1$
components are then fitted on rows that are later held out, so the null
at step $h$ depends on the held-out half and the Lemma does not apply.
(iii)~The components reported in the paper are fitted on all $n$ rows;
the proposition concerns components fitted on $n_{1}$ rows.

\paragraph{Warriner valence.}
\texttt{inference/s7\_single\_split\_chain.py} runs this procedure once,
on one split drawn from a fixed seed, on the data of the per-component
example in \S\ref{sec:intro:contribution} (Warriner EN GloVe-300
valence, $n\!=\!13{,}365$, $n_{1}\!=\!6{,}682$, $n_{2}\!=\!6{,}683$,
$K\!=\!5$). The held-out correlations at steps 1--5 are $0.741$,
$0.341$, $0.205$, $0.108$, and $0.057$. The Gaussian $p_{h}$ are below
$10^{-300}$, $2\times10^{-182}$, $2\times10^{-64}$, $4\times10^{-19}$,
and $1\times10^{-6}$; the permutation $p_{h}$ ($B\!=\!9{,}999$) all sit
at their floor, $10^{-4}$. The chain keeps all five components under
either test at $\alpha\!=\!.05$. (C3) does not hold for these data; the
permutation form needs only (C1)--(C2). Drawing several splits and
reporting the best would void the guarantee, so only this one was run.

\subsection{Per-axis FWER: empirical diagnostics}
\label{app:peraxis-fwer}

\paragraph{Boundary probes (S6).}
The body table (Tab.~\ref{tab:fwer-boundary}, \S\ref{sec:exp:fwer})
comes from \texttt{inference/s6\_fwer\_boundary.py}. Setup, pinned:
$n\!=\!40$, $J\!=\!50$ splits, total planted $R^{2}\!=\!0.5$ split
equally over $m\!\in\!\{1,2,3\}$ axes, planted on right-singular
vectors of the column-standardized $\mathbf{X}$, 5{,}000 replicates per cell,
permutation budget 999. Designs: iid Gaussian ($40\!\times\!20$); one
common factor plus $0.3$ idiosyncratic noise ($p\!=\!20$, the
high-collinearity case); $p\!=\!60\!>\!n$ with eigenvalues decaying as
$j^{-2}$ rotated into the columns; the real $60\!\times\!401$ NIR
gasoline spectra, rows subsampled to 40. The three arms run on the
same generated datasets: full-data deflation with the NB $t$ reference; full-data deflation with the no-refit
permutation reference applied naively at the deflated step (not exact
there); and train-only deflation, where within each split the earlier
components are fitted on the training half and both halves are
deflated by that fit, with held-out outcomes standardized by the
training outcome's mean and standard deviation. The train-only arm is
an NB aggregate, so App.~\ref{app:fwer-proof} does not prove its
FWER control.
Planting $m$ axes does not make step $m\!+\!1$ a true conditional
null: the fitted deflation uses noisy outcomes and can leave residual
signal. We report the frequency of rejecting through that step;
this is the false-rejection event when that step is the first true null.
At one deflation the chain reaches the
boundary step in at least $99.8\%$ of replicates at $\alpha\!=\!.05$
($97.5\%$ at $\alpha\!=\!.01$) in every design. At two and three
deflations it reaches it at $\alpha\!=\!.05$ in $81\%$, $60\%$, and
$88\%$ of replicates
(dominant factor, decaying spectrum, gasoline at $m\!=\!2$), in
$0\%$, $5.2\%$, and $36\%$ at $m\!=\!3$, and in at most one
replicate in 5{,}000 in the iid design, so the iid cells at both depths
and the dominant-factor cell at three deflations carry no information
about the boundary. The
$\alpha\!=\!.01$ column and the per-arm breakdown are in
\texttt{s6\_fwer\_boundary/csvs/}; the per-cell summary, including how
often each boundary step is reached, is in
\texttt{s6\_fwer\_boundary/report.log}.

\paragraph{Planted-signal grids (S5, T4).}
These grids likewise measure rejection beyond a planted component
count. FWER would follow from valid tests at the true nulls, but
neither NB aggregation nor full-data deflation is covered by the
single-split proof (App.~\ref{app:fwer-proof}). Across
16 cells (Warriner-EN GloVe-300 valence and Tecator NIR;
$m\!\in\!\{0,1,2,3\}$ planted axes $\times\,R^{2}\!\in\!\{0.10,0.25\}$,
400 reps each), chain rejection rates show no excess over nominal
resolved by the Wilson 95\% intervals. At $m\!=\!0$ this measures
FWER under the global null (KS-uniform $p\!\ge\!0.36$ on GloVe);
at $m\!\ge\!1$ it is a diagnostic of the fitted sequence.

S5 (Warriner-EN GloVe-300, valence) and T4 (Tecator NIR) extend the
$H_0$-only calibration grids of \S\ref{app:powercalib-full} and
\S\ref{app:nontext} to planted-signal settings: $m$ signal axes planted via
top-$m$ SVD of $\mathbf{X}$ (with $m\!\in\!\{0,1,2,3\}$ and planted
$R^{2}\!\in\!\{0.10, 0.25\}$), then per-axis NB run on full-data-deflated
NIPALS scores at $K_{\mathrm{test}}\!=\!m\!+\!1$. The fixed-sequence
rejection event through step $m\!+\!1$ is
$\bigcap_{k=1}^{m+1}\{p_{k}\!\le\!\alpha\}$; the chain rate is its
empirical frequency. The marginal rate at step $m\!+\!1$ records
rejection whether or not earlier tests pass, and the minimum prior
rejection rate is the smallest rate among steps $1,\dots,m$.
These separate failure to reach a step from failure to reject there.
Per-rep $p$-values
in the long-format CSVs alongside each summary
(\texttt{inference/results/\{s5,t4\}\_peraxis\_fwer/}).

\begin{table}[H]
\centering
\caption{S5 --- marginal and chain NB rejection rates beyond $m$
planted axes on Warriner-EN GloVe-300 valence, 400 reps per cell, Wilson 95\%
CIs. M0 cells (full null) report all three $\alpha$ levels; m$\geq$1
cells report $\alpha\!=\!0.05$ only (full $\alpha$-grid in
\texttt{s5\_summary.csv}).}
\label{tab:s5-peraxis-fwer}
\footnotesize
\setlength{\tabcolsep}{3pt}
\begin{tabular}{@{}lrrrrrrr@{}}
\toprule
\textbf{cell} & $m$ & $R^{2}$ & $\alpha$ & \textbf{step $m+1$ rate (95\% CI)} & \textbf{KS-uniform $p$} & \textbf{chain rate (95\% CI)} & \textbf{min prior rate} \\
\midrule
M0R10 & 0 & 0.10 & 0.01 & 0.003 [.000, .014] & 0.82 & 0.003 [.000, .014] & --- \\
M0R10 & 0 & 0.10 & 0.05 & 0.035 [.021, .058] & 0.82 & 0.035 [.021, .058] & --- \\
M0R10 & 0 & 0.10 & 0.10 & 0.085 [.061, .116] & 0.82 & 0.085 [.061, .116] & --- \\
M0R25 & 0 & 0.25 & 0.01 & 0.015 [.007, .032] & 0.36 & 0.015 [.007, .032] & --- \\
M0R25 & 0 & 0.25 & 0.05 & 0.055 [.037, .082] & 0.36 & 0.055 [.037, .082] & --- \\
M0R25 & 0 & 0.25 & 0.10 & 0.128 [.098, .164] & 0.36 & 0.128 [.098, .164] & --- \\
\midrule
M1R10 & 1 & 0.10 & 0.05 & 0.008 [.003, .022] & --- & 0.008 [.003, .022] & 1.00 \\
M1R25 & 1 & 0.25 & 0.05 & 0.000 [.000, .010] & --- & 0.000 [.000, .010] & 1.00 \\
M2R10 & 2 & 0.10 & 0.05 & 0.000 [.000, .010] & --- & 0.000 [.000, .010] & 0.013 \\
M2R25 & 2 & 0.25 & 0.05 & 0.000 [.000, .010] & --- & 0.000 [.000, .010] & 0.000 \\
M3R10 & 3 & 0.10 & 0.05 & 0.000 [.000, .010] & --- & 0.000 [.000, .010] & 0.000 \\
M3R25 & 3 & 0.25 & 0.05 & 0.000 [.000, .010] & --- & 0.000 [.000, .010] & 0.000 \\
\bottomrule
\end{tabular}
\end{table}

\begin{table}[H]
\centering
\caption{T4 --- marginal and chain NB rejection rates beyond $m$
planted axes on Tecator NIR, 400 reps per cell, Wilson 95\% CIs (full
$\alpha$-grid in \texttt{t4\_summary.csv}).}
\label{tab:t4-peraxis-fwer}
\footnotesize
\setlength{\tabcolsep}{3pt}
\begin{tabular}{@{}lrrrrrrr@{}}
\toprule
\textbf{cell} & $m$ & $R^{2}$ & $\alpha$ & \textbf{step $m+1$ rate (95\% CI)} & \textbf{KS-uniform $p$} & \textbf{chain rate (95\% CI)} & \textbf{min prior rate} \\
\midrule
TM0R10 & 0 & 0.10 & 0.01 & 0.015 [.007, .032] & $\sim\!10^{-9}$ & 0.015 [.007, .032] & --- \\
TM0R10 & 0 & 0.10 & 0.05 & 0.033 [.019, .055] & $\sim\!10^{-9}$ & 0.033 [.019, .055] & --- \\
TM0R10 & 0 & 0.10 & 0.10 & 0.060 [.041, .088] & $\sim\!10^{-9}$ & 0.060 [.041, .088] & --- \\
TM0R25 & 0 & 0.25 & 0.01 & 0.013 [.005, .029] & $\sim\!10^{-4}$ & 0.013 [.005, .029] & --- \\
TM0R25 & 0 & 0.25 & 0.05 & 0.030 [.017, .052] & $\sim\!10^{-4}$ & 0.030 [.017, .052] & --- \\
TM0R25 & 0 & 0.25 & 0.10 & 0.068 [.047, .096] & $\sim\!10^{-4}$ & 0.068 [.047, .096] & --- \\
\midrule
TM1R10 & 1 & 0.10 & 0.05 & 0.068 [.047, .096] & 0.014 & 0.068 [.047, .096] & 0.98 \\
TM1R25 & 1 & 0.25 & 0.05 & 0.058 [.039, .085] & 0.13  & 0.058 [.039, .085] & 1.00 \\
TM2R10 & 2 & 0.10 & 0.05 & 0.020 [.010, .039] & --- & 0.015 [.007, .032] & 0.78 \\
TM2R25 & 2 & 0.25 & 0.05 & 0.008 [.003, .022] & --- & 0.008 [.003, .022] & 1.00 \\
TM3R10 & 3 & 0.10 & 0.05 & 0.000 [.000, .010] & --- & 0.000 [.000, .010] & 0.25 \\
TM3R25 & 3 & 0.25 & 0.05 & 0.005 [.001, .018] & --- & 0.003 [.000, .014] & 0.82 \\
\bottomrule
\end{tabular}
\end{table}

The low chain rates at $m\ge1$ partly reflect the chance of stopping
before step $m+1$; low chain rates alone do not establish validity
of the marginal test at that step. On the $m=0$ cells, the Tecator-side small
KS-uniform $p$ values restate the distinction between rejection rates
at the tested levels and full-shape uniformity documented in
\S\ref{sec:exp:calibration}.

\subsection{Beyond PLS: supervised PCA and a ridge probe}
\label{app:beyond-pls}

\paragraph{Protocol} (\texttt{inference/r6\_pipelines.py}).
Each replicate draws $n$ rows of Warriner GloVe-300 (valence) or
Tecator NIR. Under $H_{0}$ the real outcome is permuted (2{,}000
replicates per cell); under $H_{1}$ noise is injected into the real
outcome at $\theta\!\in\!\{0.3, 0.7\}$ (400 replicates). Three
direction learners, each fitted on the training half only: supervised
PCA~\citep{bair2006spca}, which keeps the 10\% of columns with the
largest $|\operatorname{cor}(\mathbf{x}_{j},\mathbf{y})|$ on the training
half and takes their leading principal component; a ridge probe
($\lambda\!=\!1$ on the standardized training half); and PLS1 as the
reference. The rest is the paper's protocol: OLS refit of the training
scores, held-out correlation, Fisher-$z$, mean over $J\!=\!50$ splits
drawn once and held fixed. Three tests on each: the exact
NB-permutation of \S\ref{sec:methods:exactperm} ($B\!=\!999$
permutations of $\mathbf{y}$, the direction relearned on the permuted
training half at every draw, so supervised PCA's screen pays the full
refits while PLS1 and ridge, whose direction is linear in $\mathbf{y}$,
take the no-refit form); perm-$Q^{2}$ (5-fold, direction learned inside
each training fold, 999 permutations); and NB-asymptotic on the same
$z$ values, run on Warriner only because Tecator's stable rank of
$1.01$ is the regime the rule of \S\ref{sec:methods:validity} sends to
the exact test. All arms see the same subsample, the split-half arms
share the splits, and the $Q^{2}$ arms share the folds, so every
comparison is paired.

\begin{table}[H]
\centering
\caption{R6 --- the held-out-refit machinery outside PLS. Power at
$\theta\!=\!0.7$ (400 replicates) and rejection rate under $H_{0}$ at
$\alpha\!=\!.05$ (2{,}000 replicates; Wilson half-width $\pm 0.010$)
for the exact NB-permutation test, perm-$Q^{2}$, and NB-asymptotic;
KS is the KS-uniform $p$ of the exact test's null $p$-values.
NB-asymptotic is Warriner-only (see text). At $n\!=\!200/172$ the exact
test and NB reach $\approx\!1.00$ power everywhere and only
perm-$Q^{2}$ trails.}
\label{tab:beyond-pls}
\footnotesize
\setlength{\tabcolsep}{4pt}
\begin{tabular}{@{}llr rrr rrr r@{}}
\toprule
 & & & \multicolumn{3}{c}{\textbf{power, $\theta\!=\!0.7$}} & \multicolumn{3}{c}{\textbf{level, $\alpha\!=\!.05$}} & \\
\cmidrule(lr){4-6}\cmidrule(lr){7-9}
\textbf{Data} & \textbf{Pipeline} & $n$ & \textbf{exact} & \textbf{perm-$Q^{2}$} & \textbf{NB} & \textbf{exact} & \textbf{perm-$Q^{2}$} & \textbf{NB} & \textbf{KS (exact)} \\
\midrule
Warriner & sPCA  &  40 & 0.41 & 0.28 & 0.19 & 0.050 & 0.049 & 0.010 & 0.85 \\
Warriner & ridge &  40 & 0.43 & 0.28 & 0.37 & 0.051 & 0.058 & 0.040 & 0.53 \\
Warriner & PLS1  &  40 & 0.46 & 0.34 & 0.41 & 0.048 & 0.054 & 0.044 & 0.31 \\
Warriner & sPCA  & 200 & 1.00 & 0.98 & 1.00 & 0.055 & 0.049 & 0.015 & 0.40 \\
Warriner & ridge & 200 & 1.00 & 0.70 & 0.98 & 0.051 & 0.052 & 0.016 & 0.53 \\
Warriner & PLS1  & 200 & 1.00 & 1.00 & 1.00 & 0.046 & 0.055 & 0.053 & 0.37 \\
\midrule
Tecator  & sPCA  &  40 & 0.63 & 0.40 & --- & 0.052 & 0.050 & --- & 0.11 \\
Tecator  & ridge &  40 & 0.995 & 0.89 & --- & 0.055 & 0.056 & --- & 0.076 \\
Tecator  & PLS1  &  40 & 0.54 & 0.33 & --- & 0.052 & 0.047 & --- & 0.13 \\
Tecator  & sPCA  & 172 & 1.00 & 0.97 & --- & 0.049 & 0.050 & --- & 0.60 \\
Tecator  & ridge & 172 & 1.00 & 1.00 & --- & 0.046 & 0.047 & --- & 0.79 \\
Tecator  & PLS1  & 172 & 0.995 & 0.92 & --- & 0.050 & 0.051 & --- & 0.34 \\
\bottomrule
\end{tabular}
\end{table}

\paragraph{Reading.}
The exact test holds level in all eight non-PLS $H_{0}$ cells at
$\alpha\!=\!.05$ and $.10$ (Wilson intervals cover nominal; KS never
below $0.076$); at $\alpha\!=\!.01$ seven of the eight are within the
Wilson interval and Tecator supervised PCA at $n\!=\!40$ sits at
$0.017$ $[0.012, 0.023]$. Its power is at or above perm-$Q^{2}$'s in
all twelve cells at both $\theta$. The NB correction, calibrated on
PLS1 ($0.044$ and $0.053$), turns conservative off it: supervised PCA
at $0.010$ and $0.015$, ridge at $0.040$ and $0.016$, with the power
loss that implies ($0.19$ against $0.41$ for supervised PCA at
$n\!=\!40$). Two controls from the same draws: the textbook one-sample
$t$ on the same 50 Fisher-$z$ values rejects 27--40\% of nulls at
$\alpha\!=\!.05$ in every cell, and supervised PCA with its screen fixed
once on all rows before folding and permuting (the selection leak of
screening on the full data and then permuting inside the reduced
matrix) rejects 99.6\% and 92\% of Warriner nulls under perm-$Q^{2}$
($n\!=\!40$, $200$). The full grid, including $\theta\!=\!0.3$ and the
$\alpha\!=\!.01$ and $.10$ levels, is in
\texttt{inference/results/r6\_pipelines/csvs/cells.csv}.

\subsection{Embedding pipelines and vocabulary cleaning}
\label{app:embeddings}

All four configurations share the document-vector recipe of
\S\ref{sec:methods:representations}: mean-pool token embeddings,
$\ell_2$-normalize rows of $\mathbf{E}$ before pooling, and intersect
the pre-pooling vocabulary with the rated lexicon (Warriner-EN,
ANPW-PL, or Stadthagen-Gonz\'alez-ES). Per-config specifics:

\paragraph{Config 1 --- EN GloVe-300.}
\citet{warriner2013norms}, 13{,}915 word types. We use the public
GloVe-300 vectors~\citep{pennington2014glove} trained on Common Crawl
(42B tokens; Stanford NLP \texttt{glove.42B.300d}), apply
$\ell_2$-normalization row-wise, intersect with the Warriner type
list, and drop entries with no vector. Effective $n\!=\!13{,}365$.

\paragraph{Config 2 --- PL GloVe-800.}
\citet{imbir2016anpw}, 4{,}905 ANPW types with V/A/D ratings. The
embedding is the Polish GloVe-800 release distributed via
\citet{dadas2019polishnlp} (trained on Polish Wikipedia, books, and
articles, $\sim$1.5B tokens), $\ell_2$-normalized. After intersection
with non-missing ANPW ratings, $n\!=\!4{,}526$.

\paragraph{Config 3 --- ES GloVe-300 (SBWCE).}
\citet{stadthagen2017spanishnorms}, 14{,}031 Spanish word types with
valence and arousal ratings (no rated dominance). The embedding is
the GloVe-300 release distributed with the Spanish Billion Words
Corpus and Embeddings (SBWCE; \citealp{cardellino2019sbwce}), trained
on $\sim$1.5B tokens of cleaned Spanish text from Wikipedia,
ParaCrawl, OpenSubtitles, and other sources. We $\ell_2$-normalize
the embedding rows, intersect with the rated lexicon, and drop
entries with no vector. Effective $n\!=\!12{,}639$.

\paragraph{Config 4 --- EN XLM-R type-level.}
Type-vectors built from \texttt{xlm-roberta-base}~\citep{conneau2020xlmr}
contextual embeddings: for each Warriner type we collected up to 20
sentences from the AllenAI C4 corpus~\citep{raffel2020t5} (EN,
streamed; sentence length filter
$30 \leq \ell \leq 300$ characters), located the sub-word span via
\texttt{offset\_mapping}, and mean-pooled the BPE token vectors
within the span. The 20 contextual vectors are then averaged into a
single per-word type vector. Post-processing: $\ell_2$-normalize,
followed by rank-1 ABTT (all-but-the-top dominant-direction removal,
\citealp{mu2018allbutthetop}) to mitigate the XLM-R anisotropy documented by
\citet{ethayarajh2019contextual}; the same normalize-then-ABTT step is applied to
the GloVe-300 baseline so cross-embedding comparisons in
\S\ref{sec:exp:vadfit} differ only in the underlying type vectors.
Coverage: 13{,}719 of 13{,}915 Warriner types had at least one C4
sentence (98.6\%); after intersection with non-missing ratings and
the 20-sentence-coverage floor, effective $n\!=\!13{,}540$.

\paragraph{Varimax loading-side target.}
The varimax target $\mathcal{V}_{\!\mathrm{tgt}}$
(\S\ref{sec:methods:rotation}) for the v1\_multipls\_vad run is the
full embedding vocabulary intersected with the rated lexicon, no
external POS filter applied. Concretely,
\textpkg's main fit interface (\texttt{fit\_multipls(rotate="varimax")}) uses the standardized
embedding matrix
$\mathbf{L}\!=\!(\mathbf{E}-\boldsymbol\mu_X)/\boldsymbol\sigma_X\!\cdot\!\mathbf{W}$
on the full vocabulary as the loading target, so the target is fixed
by the data path of \S\ref{sec:methods:representations} (rated lexicon
$\cap$ embedding vocabulary), not by a separate curated word list.
Source files for the script: \texttt{applied/v1\_multipls\_vad.py}; for
the rotation backend in \textpkg{}: \texttt{backends/pls.py::mpls\_fit}.

\subsection{Sparse-PLS --- per-\texorpdfstring{$(K, s)$}{(K, s)} sweep on Warriner-EN valence}
\label{app:sparse-pls}

\begin{table}[H]
\centering
\caption{Hard top-$s$ sparse-PLS sweep on Warriner-EN valence
(Config~1, GloVe-300). Each row is an independent sparse-PLS fit;
$r^2$ values are in-sample on the standardized response and
represent the intrinsic fit at the given cardinality. The
$s\!=\!D\!=\!300$ row coincides with NIPALS PLS1 by construction
(it is the natural endpoint of the cardinality grid, not a
comparison baseline). Per-cell top-15 word leaderboards on
$\mathbf{w}_1$ in Tab.~\ref{tab:sparse-pls-words}.}
\label{tab:sparse-pls-sweep}
\small
\begin{tabular}{@{}lrrrr@{}}
\toprule
\textbf{Cell} & $K$ & $s_k$ & \textbf{Total $\mathbf{r}^2$} & \textbf{Unique features used} \\
\midrule
dense ($s\!=\!D$, $\equiv$ NIPALS PLS1) & 3 & 300 & 0.631 & 300 \\
\midrule
cardinality sweep, $K\!=\!3$            & 3 & 200 & 0.630 & 287 \\
                                        & 3 & 100 & 0.621 & 218 \\
                                        & 3 &  50 & 0.605 & 133 \\
                                        & 3 &  25 & 0.575 &  71 \\
\midrule
more axes, sparser each                 & 4 & 100 & 0.634 & 250 \\
                                        & 5 &  50 & 0.634 & 194 \\
\bottomrule
\end{tabular}
\end{table}

We report a per-$(K, s)$ sweep on Warriner-EN valence
(Tab.~\ref{tab:sparse-pls-sweep}). The motivation for sparse-PLS in
this work is \emph{not} to recover the dense PLS1 fit at lower
cardinality, but to make a higher number of components individually
readable: each axis is literally its $\le\!s_k$ supporting features,
so increasing $K$ produces more interpretable directions rather than
dense mixtures. Tab.~\ref{tab:sparse-pls-words} demonstrates this on
the $K\!=\!5, s\!=\!50$ cell --- five qualitatively distinct readable
axes coexist in a single fit, each defined by 50 features. The $r^2$
values in Tab.~\ref{tab:sparse-pls-sweep} are intrinsic sparse-PLS
performance per cardinality; consistent with the paper's overall
presentation policy, we make no cross-model $R^2$ comparison with
the dense PLS1 fits in \S\ref{sec:exp:vadfit} / \S\ref{sec:exp:peraxis-glove}.
The dense ($s\!=\!D\!=\!300$) row is the natural endpoint of the
cardinality grid (sparse-PLS reduces to NIPALS PLS1 there) and is
included for that reason, not as a baseline against which the
sparser cells are measured.

\begin{table}[H]
  \centering
  \scriptsize
  \setlength{\tabcolsep}{3pt}
  \caption{Top-5 and bottom-5 words on each of five sparse axes
  for the $K\!=\!5, s\!=\!50$ cell on Warriner-EN valence (see
  Tab.~\ref{tab:sparse-pls-sweep}). Each axis is supported on $\le 50$
  features; the columns are read directly off $\mathbf{w}_k$, not off
  a varimax-rotated loading. Dim-5 negative-end \textless{}slur\textgreater{}
  placeholders mask racial, homophobic, or strong-expletive vocabulary
  present in the Warriner ratings.
 }
  \label{tab:sparse-pls-words}
  \begin{tabular}{rl ccccc}
    \toprule
    & & \textbf{dim-1} & \textbf{dim-2} & \textbf{dim-3} & \textbf{dim-4} & \textbf{dim-5} \\
    Rank & Side & \emph{R²=0.506} & \emph{R²=0.064} & \emph{R²=0.036} & \emph{R²=0.018} & \emph{R²=0.010} \\
    \midrule
    1 & pos & wonderful   & easygoing     & significant  & madden          & localized   \\
    2 & pos & showcase    & affectionate  & growth       & shovel          & eminence    \\
    3 & pos & superb      & togetherness  & stimulation  & sequel          & reinstate   \\
    4 & pos & fabulous    & goofball      & facilitate   & donkey          & loveliness  \\
    5 & pos & fantastic   & spunky        & employment   & jolly           & sovereign   \\
    \midrule
    1 & neg & unjust      & industrial    & pompous      & disorientation  & suck                          \\
    2 & neg & senseless   & carbon        & pretentious  & negligence      & \textless{}slur\textgreater{} \\
    3 & neg & desertion   & heavy metal   & snooty       & carelessness    & \textless{}slur\textgreater{} \\
    4 & neg & inhumane    & steel         & ornery       & decedent        & \textless{}slur\textgreater{} \\
    5 & neg & unprovoked  & machinery     & wannabe      & disturbance     & \textless{}slur\textgreater{} \\
    \bottomrule
  \end{tabular}
\end{table}

Per-cell top-15 word leaderboards live alongside the sweep CSV at
\texttt{applied/results/v3\_sparse\_pls\_valence/sparse\_pls\_axes\_K\{K\}\_s\{s\}.md}.

\subsection{Broader impacts}
\label{app:impacts}

The supervised text pipeline is a small statistical-inference layer
on top of pre-existing public embeddings: it runs on CPU, requires no
LLM in the inference path, and returns word-readable axes.
Its broader effect is on \emph{access} rather
than capability. Corpus-scale text analysis currently favours groups
that can run LLM-scale pipelines; the supervised text pipeline is a
CPU-only alternative for groups without that compute. It does not change what
well-resourced actors can do, since they already have stronger
machinery. The standing risk is the one shared by any text-on-outcome
tool: the recovered axes are correlational and can surface or amplify
spurious associations if treated as causal. We document this in the
body's Limitations subsection (\S\ref{sec:disc:limitations}) and in
the package documentation; the released software (\textpkg{},
GPL-3.0-or-later) is a statistical library, not a generative model or
scraped dataset, so it requires no additional release-time safeguards
(checklist item~11).

\subsection{Reproducibility}
\label{app:repro}

\paragraph{Code.}
The pipeline is packaged as the \textpkg{} library on PyPI
(\texttt{pip install} \textpkg). The replication scripts that produced
every number in this paper, except the wall-clock timings, live in
\repoanchor: \texttt{applied/} (Configs 1--4), \texttt{toys/}
(Toys 1--5), \texttt{inference/} (the S1--S7 and T1--T4 calibration,
power, and FWER benchmarks; the R1, R2 and R4 null grids behind the
validity rule; R6 outside-PLS pipelines), and \texttt{embeddings.md}
(sources of the embeddings and the type-vector construction for
Config 4).

\paragraph{Public assets.}
Embeddings: GloVe-300 Common Crawl 42B~\citep{pennington2014glove} for
Config~1, Polish GloVe-800~\citep{dadas2019polishnlp} for Config~2,
\texttt{xlm-roberta-base} (Hugging Face) over AllenAI C4 (EN,
streamed) for Config~4. Rated lexicons: Warriner et al.\ (2013) for
EN, ANPW \citep{imbir2016anpw} for PL.

\paragraph{Seeds.}
Configs 1--4 fits use a single global seed (\texttt{seed}=2137) for
the NB split-half schedule; results are insensitive to the seed
choice as the NB statistic averages over $J\!=\!50$ splits. Toys
1--5 and S1/S2 use per-cell seed schedules: Toy~1 50 seeds per cell
$\times$ 4 cells $\times$ 7 $R^2$ values; Toy~2 50 seeds per cell
$\times$ 8 cells; Toy~3 20 seeds per arm; Toy~4 50 random
orthogonal rotations on a single fit; Toy~5 with 5 seeds per cell
$\times$ 8 cells; S1 and S2 400 reps per cell.

\paragraph{Compute.}
End-to-end reproduction runs comfortably on a single workstation
CPU; the heaviest single step is the C4 streaming pass for Config~4
($\approx$10 hours single-machine for 13{,}915 words $\times$ up to
20 sentences). The full toys + S1/S2 + Configs 1--3 sweep completes
in $<\!2$ hours. No GPU is used in the reported numbers; XLM-R
encoding for Config~4 used a single consumer GPU at FP16 inference.

\subsection{Algorithm: PLS-\texorpdfstring{$K$}{K} pseudocode for the supervised text pipeline}
\label{app:algorithm}

\begin{algorithm}[h]
\caption{PLS-$K$ for the supervised text pipeline, with loading-side
varimax and NB split-half inference. Inputs: mean-centered document
matrix \texttt{X} ($n\!\times\!d$); mean-centered outcome \texttt{y}
($n$); embedding matrix \texttt{E} ($V\!\times\!d$, $\ell_{2}$-normed
rows); target vocabulary index set \texttt{V\_tgt}; preselected
component count \texttt{K} ($K^{*}$ in the text);
number of split-half resamples \texttt{J}; level $\alpha$.}
\label{alg:textpipeplsk}
\begin{lstlisting}[style=pypseudo]
def fit(X, y, E, V_tgt, K, J, alpha=0.05):
    n  = X.shape[0]
    n1 = n2 = n // 2

    # ---- PLS-K via NIPALS deflation (Eq. 4) ----
    W, T, P, Q = nipals_pls_k(X, y, K)   # W,P: (d,K); T: (n,K); Q: (K,)

    # ---- Per-axis NB split-half inference (Eq. 3) on deflated pairs ----
    # For axis h, deflate (X, y) by the leading h-1 full-data NIPALS
    # components, then run NB at K=1 on the deflated pair. Matches
    # inference library: p_for_incremental.
    p = np.empty(K)
    for h in range(1, K + 1):
        if h == 1:
            X_h, y_h = X, y
        else:
            T_h, P_h, Q_h = nipals_pls_k(X, y, h - 1)[1:]  # full-data deflation
            X_h = X - T_h @ P_h.T
            y_h = y - T_h @ Q_h
        z = np.empty(J)
        for j in range(J):
            tr, te = random_partition(n, n1, n2)
            w_j     = nipals_pls_k(X_h[tr], y_h[tr], 1)[0]   # K=1 weight
            t_tr    = X_h[tr] @ w_j
            beta_j  = (t_tr @ y_h[tr]) / (t_tr @ t_tr)
            t_te    = X_h[te] @ w_j
            r       = pearson(beta_j * t_te, y_h[te])
            z[j]    = np.arctanh(r)
        t_NB_h = z.mean() / (z.std(ddof=1) * np.sqrt(1/J + n2/n1))
        p[h-1] = t_sf(t_NB_h, df=J - 1)  # one-sided upper tail

    # ---- Fixed-sequence FWER on canonical NIPALS order ----
    # (Westfall 2001; sequential rejection of Goeman & Solari 2010
    # requires valid per-step tests for the ordered family.)
    k_star          = number_of_leading_rejections(p, alpha)
    alpha_star_FWER = p.max()  # summary for the preselected K components

    # ---- Loading-side varimax: post-inferential rebasing ----
    # Within-Krylov; held-out predictions and alpha_star_FWER unchanged.
    L      = E[V_tgt] @ W               # (|V_tgt|, K)
    R_star = varimax(L)                 # arg max over O(K), Eq. 5
    return (W @ R_star,                 # W*: rotated weights
            L @ R_star,                 # L*: rotated word loadings
            p, alpha_star_FWER)
\end{lstlisting}
\end{algorithm}


\end{document}